\newcommand{\tensor}[1]{\mathbf{#1}}
\newcommand{\mexp}[1]{\mathbb{E}\left\{#1\right\}}
\newcommand{\trans}{\top}
\newcommand{\cov}{\text{Cov}}
\newcommand{\half}{\frac{1}{2}}
\numberwithin{equation}{section}
\newtheorem{thm}{Theorem}[section]
\newtheorem{lem}[thm]{Lemma}
\def\mathcenterto#1#2{\mathclap{\phantom{#1}\mathclap{#2}}\phantom{#1}}
\let\old@widetilde\widetilde
\def\widetildeto#1#2{\mathcenterto{#2}{\old@widetilde{\mathcenterto{#1}{#2\,}}}}
\newcommand{\wtl}{\widetildeto{I}{L}}
\newcommand{\wtX}{\widetildeto{X}{\bm X}}
\newcommand{\wty}{\widetildeto{y}{\bm y}}
\newcommand{\wtm}{\widetildeto{y}{\bm\mu}}
\newcommand{\wtc}{\widetildeto{y}{\bm c}}
\newcommand{\wtC}{\widetildeto{C}{\tensor C}}
\newcommand{\argmax}{\operatornamewithlimits{argmax}}
\newcommand{\PreserveBackslash}[1]{\let\temp=\\#1\let\\=\temp}
\newcolumntype{C}[1]{>{\PreserveBackslash\centering}p{#1}}
\title{When Bifidelity Meets CoKriging: An Efficient Physics-Informed Multifidelity Method}
\date{}
\author[1]{Xiu Yang\footnote{xiu.yang@pnnl.gov}}
\author[2]{Xueyu Zhu\footnote{xueyu-zhu@uiowa.edu}}
\author[1]{Jing Li\footnote{jing.li@pnnl.gov}}
\affil[1]{Advanced Computing, Mathematics and Data Division, Pacific Northwest
  National Laboratory, Richland, WA 99352}
\affil[2]{Department of Mathematics, The University of Iowa, Iowa City, IA 52242}
\begin{document}
\maketitle

\begin{abstract}
In this work, we propose a framework that combines the approximation-theory-based
multifidelity method and Gaussian-process-regression-based multifidelity
method to achieve data-model convergence when stochastic simulation models and
sparse accurate observation data are available. Specifically, the two types of 
multifidelity methods we use are the \emph{bifidelity} and \emph{CoKriging} 
methods. The new approach uses the bifidelity method to efficiently
estimate the empirical mean and covariance of the stochastic simulation outputs,
then it uses these statistics to construct a Gaussian process (GP) representing 
low-fidelity in CoKriging. We also combine the bifidelity method with Kriging,
where the approximated empirical statistics are used to construct the GP as 
well. We prove that the resulting posterior mean by the new physics-informed 
approach preserves linear physical constraints up to an error bound. 
By using this method, we can obtain an accurate construction of a state of
interest based on a partially correct physical model and a few accurate
observations. We present numerical examples to demonstrate performance of the method. 

\noindent \textbf{Keywords}: physics-informed, Gaussian process regression,
CoKriging, multifidelity, bifidelity, error bound.
\end{abstract}


\section{Introduction}
\label{sec:intro}

Gaussian process (GP), a widely used tool in statistics, and machine learning
~\cite{forrester2008engineering,sacks1989design,stein2012interpolation}, has
become popular in probabilistic scientific computing.
GP regression (GPR), also known as \emph{Kriging} in geostatistics, constructs a
statistical model of a partially observed process by assuming that its observations
are a realization of a GP. A GP is uniquely described by its mean and
covariance function (also known as \emph{kernel}). 
Its variant, CoKriging, was originally formulated to compute predictions of 
sparsely observed states of physical systems by leveraging observations of other
states or parameters of the system~\cite{stein1991universal,knotters1995comparison}.
Recently, it has been employed for constructing multi-fidelity 
models~\cite{kennedy2000predicting,le2014recursive,perdikaris2015multi}, and has
been applied in various fields, 
e.g.,~\cite{laurenceau2008building,brooks2011multi,pan2017optimizing}. In the
widely used stationary Kriging/CoKriging method, usually parameterized forms of
mean and covariance functions are assumed, and the hyperparameters of these 
functions (e.g., variance and correlation length) are estimated 
by  maximizing the log marginal likelihood of the data.

Recently a new framework, physics-informed Kriging (PhIK)/physics-informed
CoKriging (CoPhIK), was developed for those applications where partially
correct physical models along with sparse observation data are
available~\cite{YangTT18, YangBTT18}. These physical models are constructed 
based on domain knowledge, and they include random variables or random fields to
represent the lack of knowledge (e.g., unknown physical law, uncertain
parameters, etc). The PhIK/CoPhIK framework combines the realizations of the 
stochastic physical model with the observation data to provide an accurate 
reconstruct of the state of interest on the entire computational domain. These
realizations are then used to approximate mean and covariance in the PhIK/CoPhIK
framework. The most popular approach of obtaining model realizations is the 
Monte Carlo (MC) simulation, and it can be replaced by more efficient approaches
to estimate mean and covariance, e.g., quasi-Monte 
Carlo~\cite{niederreiter1992random}, probabilistic 
collocation~\cite{tatang1995direct,XiuH05}, Analysis Of Variance
(ANOVA)~\cite{ma2010adaptive, YangCLK12}, compressive 
sensing~\cite{doostan2011non, YangK13}, and the moment equation 
method~\cite{Tart2017WRR}.

It is worthy to note that the aforementioned approaches
rely on single a fidelity solver. For large scale applications, the computational
cost can still be prohibitive if a large number of samples are required. 
In many practical problems, low-fidelity models for the underlying problem are 
often available, and it is much less expensive to obtain the realizations of these
models. Even though their accuracy is not high, they can still capture some
important physics of the underlying models with low computational cost. 
Therefore, it is highly desirable to utilize the computational efficiency of
low-fidelity models to reduce the overall computational cost. Many multifidelity
algorithms have been developed based on different principles in different 
contexts. These include (a) multi-level Monte Carlo 
(MLMC)~\cite{giles2008multilevel,cliffe2011multilevel}, which is already used in
PhIK and CoPhIK to reduce the computational cost~\cite{YangTT18, YangBTT18}; (b) meta-models through
GP, i.e., CoKriging~\cite{kennedy2000predicting,tuo2014surrogate,perdikaris2017nonlinear};
(c) variance reduced based approaches, i.e., control-variate based 
approach~\cite{peherstorfer2016optimal}, importance 
sampling~\cite{peherstorfer2016multifidelity}; and (d) model discrepancy based 
approaches~\cite{ng2012multifidelity,eldred2017multifidelity}. Another trend  
in the context of uncertainty quantification is to explore the parameter space
by using a large number of low-fidelity samples to identify a small set of important
basis, then learn the ``best" approximation rule of the target high-fidelity 
solution or its statistics based on the selected 
basis~\cite{narayan2014stochastic,zhu2014computational,zhu2017multi,fairbanks2017low}.
Previous works demonstrated its potential to significantly reduce the
computational cost for various applications by utilizing $\mathcal{O}(10)$ 
high-fidelity simulations, including combustion 
modeling~\cite{munipalli2018multifidelity}, orbit-state uncertainty 
propagation~\cite{jones2018multi}, molecular dynamics 
simulations~\cite{razi2018fast}, and turbulence modeling~\cite{jofre2018multi}, 
to name a few.  

In this work, we propose to employ the multifidelity approaches presented 
in~\cite{narayan2014stochastic,zhu2014computational} to accelerate the 
computation of PhIK and CoPhIK. For demonstration purpose, we consider the
bifidelity model in this work, and the proposed framework can be implemented in
multifidelity (more than two fidelity) models.


\section{Methodology}
\label{sec:method}

In this section, we begin by reviewing the general GPR 
framework \cite{williams2006gaussian}, the Kriging and CoKriging methods with
stationary kernel \cite{forrester2008engineering}, the PhIK \cite{YangTT18} and
CoPhIK \cite{YangBTT18} methods, and the bi-fidelity approach \cite{zhu2014computational}. 
Then, we introduce the bi-fidelity-aided PhIK and CoPhIK methods.

\subsection{GPR framework}
\label{subsec:gpr}

We denote the observation locations as $\bm X = \{\bm x^{(i)}\}_{i=1}^N$
($\bm x^{(i)}$ are $d$-dimensional vectors in $D\subseteq\mathbb{R}^d$) and the 
observed state values at these locations as 
$\bm y=(y^{(1)}, y^{(2)},\dotsc, y^{(N)})^\trans$
($y^{(i)}\in\mathbb{R}$). For simplicity, we assume that $y^{(i)}$ are scalars. 
We aim to predict $y$ at any new location $\bm x^*\in D$. The GPR method
assumes that the observation vector $\bm y$ is a realization of the following 
$N$-dimensional random vector that satisfies multivariate Gaussian distribution:
\[
\bm Y = \left(Y(\bm x^{(1)}), Y(\bm x^{(2)}), \dotsc, Y(\bm x^{(N)})\right)^\trans,
\]
where $Y(\bm x^{(i)})$ is the concise notation of $Y(\bm x^{(i)}; \omega)$, 
and $Y(\bm x^{(i)}; \omega)$ is a Gaussian random variable defined on a 
probability space $(\Omega, \mathcal{F}, P)$ with $\omega\in\Omega$.
Of note, $\bm x^{(i)}$ can be considered as parameters for the GP
$Y(\cdot,\cdot):D\times\Omega\rightarrow\mathbb{R}$, such that 
$Y(\bm x^{(i)},\cdot):\Omega\rightarrow \mathbb{R}$ is a Gaussian random 
variable for any $\bm x^{(i)}$ in the set $D$. Usually, the GP $Y(\bm x)$ is 
denoted as
\begin{equation}
  \label{eq:gp0}
Y(\bm x) \sim \mathcal{GP}\left(\mu(\bm x), k(\bm x, \bm x')\right),
\end{equation}
where $\mu(\cdot):D\rightarrow\mathbb{R}$ and 
$k(\cdot,\cdot):D\times D\rightarrow\mathbb{R}$ are the mean 
and covariance functions:
  \begin{align}
    \mu(\bm x) & = \mexp{Y(\bm x)},\\ 
    k(\bm x,\bm x') & = \cov\left\{Y(\bm x), Y(\bm x')\right\}
                      = \mexp{(Y(\bm x)-\mu(\bm x))(Y(\bm x')-\mu(\bm x'))}.
  \end{align}
The variance of $Y(\bm x)$ is $k(\bm x, \bm x)$, and its standard deviation is
$\sigma(\bm x)=\sqrt{k(\bm x,\bm x)}$. 
The covariance matrix of random vector $\bm Y$, denoted as $\tensor C$, is 
defined as $C_{ij}=k(\bm x^{(i)}, \bm x^{(j)})$.
Functions $\mu(\bm x)$ and $k(\bm x,\bm x')$ are obtained by identifying their
hyperparameters via maximizing the log marginal 
likelihood~\cite{williams2006gaussian}:
\begin{equation}
  \label{eq:lml}
  \ln L=-\dfrac{1}{2}(\bm y-\bm\mu)^\trans \tensor C^{-1} (\bm
  y-\bm\mu)-\dfrac{1}{2}\ln |\tensor C|-\dfrac{N}{2}\ln 2\pi,
\end{equation}
where $\bm\mu=(\mu(\bm x^{(1)}),\dotsc,\bm x^{(N)})^\trans$. The result of GPR 
is a posterior distribution 
$y(\bm x^*)\sim\mathcal{N}(\hat y(\bm x^*), \hat s^2(\bm x^*))$ for any $\bm
x^*\in D$, where
  \begin{align}
\label{eq:krig}
\hat y(\bm x^*) & = \mu(\bm x^*) + 
  \bm c(\bm x^*)^\trans\tensor{C}^{-1}(\bm y-\bm\mu),  \\
  \hat s^2(\bm x^*) & = 
   \sigma^2(\bm x^*)-\bm c(\bm x^*)^\trans \tensor{C}^{-1}\bm c(\bm x^*),
 \end{align}
and $\bm c(\bm x^*)$ is a vector of covariance, i.e.,
$(\bm c(\bm x^*))_i=k(\bm x^{(i)},\bm x^*)$.
In practice, it is common to use $\hat y(\bm x^*)$ as the prediction, and
$\hat s^2(\bm x^*)$ is also called the mean squared error (MSE) of the prediction
because $\hat s^2(\bm x^*)=\mexp{(\hat y(\bm x^*)-Y(\bm x^*))^2}$
\cite{forrester2008engineering}. Consequently, $\hat s(\bm x^*)$ is the root
mean squared error (RMSE). 
Moreover, to account for the observation noise, one can assume that the noise is
independent and identically distributed (i.i.d.) Gaussian random variables with
zero mean and variance $\delta^2$, and replace $\tensor C$ with 
$\tensor C+\delta^2\tensor I$. In this study, we assume that observations 
$\bm y$ are noiseless. If $\tensor C$ is not invertible or its condition number
is very large, one can add a small regularization term $\alpha\tensor I$
($\alpha$ is a small positive real number) to $\tensor C$, 
which is equivalent to assuming there
is an observation noise. In addition, $\hat s$ can be used in global
optimization, or in the greedy algorithm to identify locations of additional
observations. Specifically, in the greedy algorithm, the new observations can be
added at the maxima of $\hat s$, see Appendix~\ref{sec:append_act} for details.


\subsection{Kriging and CoKriging with stationary kernel}
\label{subsec:stationary_gpr}
In the widely used ordinary Kriging method, a stationary GP is assumed
\cite{kitanidis1997introduction}. Specifically, $\mu$ is set as a constant 
$\mu(\bm x)\equiv\mu$, and $k(\bm x, \bm x')=k(\bm\tau)$, where 
$\bm\tau=\bm x-\bm x'$. Consequently, 
$\sigma^2(\bm x)=k(\bm x,\bm x)=k(\bm 0)=\sigma^2$ is a constant.
Popular forms of kernels include polynomial, exponential, Gaussian
(squared-exponential), and Mat\'{e}rn functions. For example, the Gaussian 
kernel can be written as
$k(\bm\tau)=\sigma^2\exp\left(-\frac{1}{2}\Vert \bm x-\bm x'\Vert^2_w\right)$,
where the weighted norm is defined as 
$\displaystyle\Vert \bm x-\bm x'\Vert^2_w=\sum_{i=1}^d \left(\dfrac{x_i-x'_i}{l_i}\right)^2$. 
Here, $l_i$ ($i=1,\dotsc, d$), the correlation lengths of $\bm y$ in
the $i$ direction, are constants. Given a stationary covariance function, the
covariance matrix $\tensor C$ of $\bm Y$ can be written as
$\tensor C=\sigma^2\tensor\Psi$, where
$\Psi_{ij}=\exp(-\frac{1}{2}\Vert\bm x^{(i)}-\bm x^{(j)}\Vert_w^2)$. In the MLE
framework, the estimators of $\mu$ and $\sigma^2$, denoted as $\hat\mu$ and
$\hat\sigma^2$, are
\begin{equation}
\label{eq:krig_est}
   \hat\mu=\dfrac{\bm 1^\trans \tensor\Psi^{-1}\bm y}
                  {\bm 1^\trans\tensor\Psi^{-1}\bm 1}, \qquad
   \hat\sigma^2=\dfrac{(\bm y-\bm 1\hat\mu)^\trans\tensor\Psi^{-1}(\bm y-\bm 1\hat\mu)}{N},
\end{equation}
where $\bm 1$ is a constant vector consisting of 
$1$~\cite{forrester2008engineering}. The hyperparameters $\sigma$ and $l_i$ are 
estimated by maximizing the log marginal likelihood in Eq.~\eqref{eq:lml}. The 
$\hat y(\bm x^*)$ and $\hat s^2(\bm x^*)$ in Eq.~\eqref{eq:krig} take the 
following form:
\begin{align}
\label{eq:stat_krig}
\hat y(\bm x^*) & = \hat\mu + \bm\psi^\trans\tensor\Psi^{-1}(\bm y-\bm 1\hat\mu), \\
\hat s^2(\bm x^*) & = \hat\sigma^2\left(1-\psi^\trans\tensor\Psi^{-1}\bm\psi\right),
\end{align}
where $\bm\psi=\bm\psi(\bm x^*)$ is a vector of correlations between the observed data and the
prediction, i.e., $\psi_i=\frac{1}{\sigma^2}k(\bm x^{(i)},\bm x^*)$.

Next, we briefly review the formulation of CoKriging for two-level multifidelity 
modeling. Suppose that we have high-fidelity data 
(e.g., accurate measurements of states) 
$\bm y_{H}=\left(y_{H}^{(1)}, \dotsc,y_{H}^{(N_H)}\right)^\trans$ at 
locations $\bm X_{H} = \left\{\bm x_{H}^{(i)}\right\}_{i=1}^{N_H}$, and 
low-fidelity data (e.g., simulation results) 
$\bm y_{L}=\left(y_{L}^{(1)}, \dotsc,y_{L}^{(N_L)}\right)^\trans$ at 
locations $\bm X_{L} = \left\{\bm x_{L}^{(i)}\right\}_{i=1}^{N_L}$, where 
$y_{H}^{(i)}, y_{L}^{(i)}\in\mathbb{R}$ and 
$\bm x_{H}^{(i)}, \bm x_{L}^{(i)}\in D\subseteq\mathbb{R}^d$.
We denote $\wtX=\{\bm X_{L}, \bm X_{H}\}$ and 
$\wty = \left(\bm y_{L}^{\trans}, \bm y_{H}^{\trans} \right)^{\trans}$. 

Kennedy and O'Hagan \cite{kennedy2000predicting} proposed a multifidelity 
formulation based on the auto-regressive model for GP $Y_{H}$
($\sim\mathcal{GP}(\mu_{H}(\cdot), k_{H}(\cdot,\cdot))$):
\begin{equation}
  \label{eq:arm}
  Y_{H}(\bm x) = \rho Y_{L}(\bm x) + Y_{d}(\bm x),
\end{equation}
where $Y_{L}(\cdot)$ ($\sim\mathcal{GP}(\mu_{L}(\cdot), k_{L}(\cdot, \cdot))$) 
regresses the low-fidelity data, $\rho \in \mathbb{R}$ is a
regression parameter and $Y_d(\cdot)$ 
($\sim\mathcal{GP}(\mu_{d}(\cdot), k_{d}(\cdot,\cdot))$ models the difference 
between $Y_{H}$ and $\rho Y_{L}$. This model assumes that
\begin{equation}
  \label{eq:arm-cov}
  \cov \left\{Y_{H}(\bm x), Y_{L}({\bm x}') \mid Y_{L}(\bm x) \right\}=0,
  \quad \text{for all}\quad \bm x' \neq \bm x, \ \bm x, \bm x' \in D.
\end{equation}
The covariance of observations, $\wtC$, is then given by
\begin{equation}
  \label{eq:cokrig_cov}
  \wtC=
  \begin{pmatrix}
    \tensor C_{L}(\bm X_{L}, \bm X_{L}) & \rho \tensor C_{L}(\bm X_{L}, \bm X_{H}) \\
    \rho \tensor C_{L} (\bm X_{H}, \bm X_{L}) & \rho^2 \tensor C_{L}(\bm
    X_{H}, \bm X_{H}) + \tensor C_{d}(\bm X_{H}, \bm X_{H})
  \end{pmatrix},
\end{equation}
where $\tensor C_{L}$ and $\tensor C_{d}$ are the covariance matrices computed
from $k_{L}(\cdot,\cdot)$ and $k_{d}(\cdot,\cdot)$, respectively. One can 
assume parameterized forms for these kernels (e.g., Gaussian kernel) and employ 
the following two-step approach~\cite{forrester2007multi,
forrester2008engineering} to identify hyperparameters:
\begin{enumerate}
\item Use Kriging to construct $Y_{L}$ using  $\{\bm X_{L}, \bm y_{L}\}$. 
\item Denote $\bm y_{d} = \bm y_{H} - \rho \bm y_{L}(\bm X_{H})$, where 
$\bm y_{L}(\bm X_{H})$ are the values of $\bm y_{L}$ at locations common to
those of $\bm X_{H}$, then construct $Y_d$ using $\{\bm X_{H}, \bm y_{d}\}$ 
via Kriging. 
\end{enumerate}
The posterior mean and variance of $Y_H$ at $\bm x^*\in D$ are given by
\begin{align}
  \label{eq:cokrig_mean}
  \hat y(\bm x^*) & =  \mu_{H}(\bm x^*) + 
      \wtc(\bm x^*)^\trans\wtC^{-1}(\wty-\wtm),  \\
  \label{eq:cokrig_var}
      \hat s^2(\bm x^*) & = \rho^2\sigma^2_{L}(\bm x^*) + \sigma^2_{d}(\bm x^*)
      - \wtc(\bm x^*)^\trans \wtC^{-1}\wtc(\bm x^*),
\end{align}
where $\mu_{H}(\bm x^*)=\rho\mu_{L}(\bm x^*)+\mu_d(\bm x^*)$, 
$\sigma^2_{L}(\bm x^*)=k_{_{L}}(\bm x^*, \bm x^*)$,
$\sigma^2_d(\bm x^*)=k_d(\bm x^*, \bm x^*)$, and
\begin{align}
  \label{eq:cokrig_mu}
\wtm & = \begin{pmatrix}\bm\mu_{L}\\ \bm\mu_{H}\end{pmatrix}
  = \begin{pmatrix}\big(\mu_{L}(\bm x_{_{L}}^{(1)})\dotsc, 
    \mu_{L}(\bm x_{_{L}}^{(N_{L})})\big)^\trans \\
     \big(\mu_{H}(\bm x_{_{H}}^{(1)})\dotsc, 
\mu_{H}(\bm x_{_{H}}^{(N_{H})})\big)^\trans\end{pmatrix},  \\
  \label{eq:cokrig_c}
  \wtc(\bm x^*)& = \begin{pmatrix}\rho\bm c_{L}(\bm x^*)\\ \bm c_{H}(\bm
x^*)\end{pmatrix}
= \begin{pmatrix}\big(\rho k_{L}(\bm x^*,\bm x_{L}^{(1)}),\dotsc,
  \rho k_{_{L}}(\bm x^*,\bm x_{L}^{(N_{L})})\big)^\trans \\
  \big(k_{H}(\bm x^*,\bm x_{H}^{(1)}),\dotsc,k_{H}(\bm x^*,\bm x_{H}^{(N_{H})})
\big)^\trans\end{pmatrix}, 
\end{align}
where
$k_{H}(\bm x,\bm x') = \rho^2k_{L}(\bm x,\bm x') + k_{d}(\bm x, \bm x')$.
Here, we have neglected a small contribution to $\hat s^2$
(see \cite{forrester2008engineering}).
Alternatively, one can simultaneously identify hyperparameters in
$k_{L}(\cdot,\cdot)$ and $k_{d}(\cdot,\cdot)$ along with $\rho$ by maximizing 
the following log marginal likelihood: 
\begin{equation}
  \label{eq:lml2}
  \ln \wtl=-\dfrac{1}{2}(\wty-\wtm)^\trans\wtC^{-1} (\wty
  -\wtm)-\dfrac{1}{2}\ln \big\vert\wtC\big\vert-\dfrac{N_{H}+N_{L}}{2}\ln 2\pi.
\end{equation}


\subsection{PhIK and CoPhIK}
\label{subsec:cophik}
The recently proposed PhIK method \cite{YangTT18} takes advantage of the 
existing domain knowledge, e.g., approximate numerical or analytical 
physics-based models, in the form of realizations of a stochastic model of the
system. Consequently, the mean and covariance of the GP model can be 
approximated using these realizations. As such, there is no need to assume a
specific form of the correlation functions and solve an optimization problem for
the hyperparameters. These stochastic models typically include random parameters
or random processes/fields to reflect the lack of understanding (of physical 
laws) or knowledge (of the coefficients, parameters, etc.) of the real system.
Then, MC simulations can be conducted to generate an ensemble of the state of
interest, from which the statistics, e.g., mean and standard deviation, are
estimated. 

Specifically, assume that we have $M$ realizations of a stochastic model 
$u(\bm x;\omega) $ ($\bm x \in D, \omega\in\Omega$) denoted as 
$\{u^m(\bm x)\}_{m=1}^M$, and we can build the following GP model:
\begin{equation} 
  Y(\bm x) \sim \mathcal{GP}(\mu_{\mathrm{MC}}(\bm x), k_{\mathrm{MC}}(\bm x, \bm x')),
\end{equation} 
where
\begin{equation}
\begin{aligned}
  \label{eq:phik}
  \mu(\bm x) & \approx \mu_{\mathrm{MC}}(\bm x)=\dfrac{1}{M}\sum_{m=1}^M u^m(\bm x), \\
  k(\bm x, \bm x') &\approx k_{\mathrm{MC}}(\bm x, \bm x')=\dfrac{1}{M-1} \sum_{m=1}^M
  \left(u^m(\bm x)-\mu_{\mathrm{MC}}(\bm x)\right) \left(u^m(\bm x')-\mu_{\mathrm{MC}}(\bm x')\right).
\end{aligned}
\end{equation}
Thus, the covariance matrix of $\bm Y$ can be estimated as
\begin{equation}
  \label{eq:mc_cov}
  \tensor C\approx \tensor C_{\mathrm{MC}}=\dfrac{1}{M-1} \sum_{m=1}^M
  \left(\bm u^m-\bm\mu_{\mathrm{MC}} \right)
  \left(\bm u^m-\bm\mu_{\mathrm{MC}} \right)^\trans,
\end{equation}
where $\bm u^m=\left(u^m(\bm x^{(1)}), \dotsc, u^m(\bm x^{(N)})\right)^\trans$, 
$\bm\mu_{\mathrm{MC}}=\left(\mu_{\mathrm{MC}}(\bm x^{(1)}), \dotsc, \mu_{\mathrm{MC}}(\bm x^{(N)})\right)^\trans$.
The prediction and MSE at location $\bm x^*\in D$ are
\begin{align}
\label{eq:krig_pred2}
\hat y(\bm x^*) & = \mu_{\mathrm{MC}}(\bm x^*) + 
\bm c_{\mathrm{MC}}(\bm x^*)^\trans\tensor{C}_{\mathrm{MC}}^{-1}(\bm y-\bm\mu_{\mathrm{MC}}),  \\
\hat s^2(\bm x^*) & = \hat\sigma_{\mathrm{MC}}^2(\bm x^*)-\bm c_{\mathrm{MC}}(\bm x^*)^\trans 
\tensor{C}_{\mathrm{MC}}^{-1}\bm c_{\mathrm{MC}}(\bm x^*),
\end{align}
where $\hat\sigma^2_{\mathrm{MC}}(\bm x^*)=k_{\mathrm{MC}}(\bm x^*,\bm x^*)$ is
the variance of data set $\{u^m(\bm x^*)\}_{m=1}^M$, and  \\
$\bm c_{\mathrm{MC}}(\bm x^*)=\left(k_{\mathrm{MC}}(\bm x^{(1)},\bm x^*), 
\dotsc,k_{\mathrm{MC}}(\bm x^{(N)},\bm x^*)\right)^\trans$.

Similarly, the CoPhIK method uses model realizations to construct $Y_{L}$ in
CoKriging. Specifically, we set $\bm X_{L}=\bm X_{H}$ to simplify the formula
and computing, and denote $N=N_{H}=N_{L}$. We set 
$\mu_{L}(\bm x)=\mu_{\mathrm{MC}}(\bm x)$ and 
$k_{L}(\bm x,\bm x')=k_{\mathrm{MC}}(\bm x,\bm x')$ to construct $Y_{L}$,
where $\mu_{\mathrm{MC}}$ and $k_{\mathrm{MC}}$ are given by 
Eq.~\eqref{eq:phik}. The GP model $Y_d$ is constructed using the same approach
as in the second step of the Kennedy and O'Hagan CoKriging framework.
Specifically, we set $\bm y_{d}=\bm y_{H}-\rho\bm\mu_{L}(\bm X_{L})$. The reason
for this choice is that $\mu_{L}(\bm X_{H})$ is the most probable observation of
the GP $Y_{L}$. Next, we need to assume a specific form of the kernel function. 
Without loss of generality, we use the stationary Gaussian kernel model and 
constant $\mu_{d}$. Once $\bm y_{d}$ is computed, and the form of 
$\mu_{d}(\cdot)$ and $k_{d}(\cdot,\cdot)$ are decided, $Y_{d}$ can be 
constructed as in ordinary Kriging. Now that all components of $\ln\wtl$ in
Eq.~\eqref{eq:lml2} are specified except for the $\bm y_{L}$ in $\wty$. We set 
$\bm y_{L}$ as the realization from the ensemble $\{u^m(\bm x)\}_{m=1}^M$ that 
maximizes $\ln\wtl$. The algorithm is summarized in Algorithm~\ref{algo:phicok}.
\begin{algorithm}[!h]
  \caption{CoPhIK using stochastic simulation model $u(\bm x;\omega)$ on
  $D\times\Omega$ ($D\subseteq\mathbb{R}^d$), and
  high-fidelity observation $\bm y_{H}=(y_{H}^{(1)}, \dotsc
  y_{_{H}}^{(N)})^\trans$ at locations 
  $\bm X_{H}=\{\bm x_{H}^{(i)}\}_{i=1}^{N}$.}
  \label{algo:phicok}
  \begin{algorithmic}[1]
    \State Conduct stochastic simulation, e.g., MC simulation, using 
    $u(\bm x;\omega)$ to generate realizations $\{u^m(\bm x)\}_{m=1}^M$ on the
    domain $D$.
    \State Use PhIK  to construct GP $Y_{L}$ on $D\times\Omega$, i.e.,
    $\mu_{L}(\cdot)=\mu_{\mathrm{MC}}(\cdot)$ and
    $k_{L}(\cdot,\cdot)=k_{\mathrm{MC}}(\cdot,\cdot)$ in Eq.~\eqref{eq:phik}.
    Compute $\mu_{L}(\bm X_{L})=\big(\mu_{L}(\bm x_{L}^{(1)}),\dotsc
    \mu_{L}(\bm x_{L}^{(N)})\big)^\trans$, and 
    $\tensor C_{L}(\bm X_{L}, \bm X_{L})$ whose $ij$-th element is 
    $k_{L}(\bm x_{H}^{(i)},\bm x_{H}^{(j)})$.
    Set $\tensor C_{L}(\bm X_{L}, \bm X_{H})=\tensor C_{L}(\bm X_{H},
    \bm X_{L})=\tensor C_{L}(\bm X_{H}, \bm X_{H})=\tensor C_{L}(\bm
    X_{L}, \bm X_{L})$ (because $\bm X_{L}=\bm X_{H}$). 
    \State Denote $\bm y_d=\bm y_{H}-\rho\mu_{L}(\bm X_{L})$, choose a specific
    kernel function $k_d(\cdot,\cdot)$ (Gaussian kernel in this work) for the GP
    $Y_d$, and identify hyperparameters via maximizing the log marginal 
    likelihood Eq.~\eqref{eq:lml}, where $\bm y, \bm\mu, \tensor C$ are
    specified as $\bm y_{d}, \bm\mu_{d}, \tensor C_{d}$, respectively. Then
    construct $\wtm$ in Eq.~\eqref{eq:cokrig_mu}, and $\tensor C_{d}$ whose 
    $ij$-th element is $k_d(\bm x_{H}^{(i)}, \bm x_{H}^{(j)})$.
    \State Iterate over the set $\{u^m(\bm x)\}_{m=1}^M$ to identify $u^m(\bm x)$
    that maximizes $\ln\tilde L$ in Eq.~\eqref{eq:lml2}, where 
    $\bm y_{L}=(u^m(\bm x_{_{H}}^{(1)}), \dotsc, u^m(\bm x_{_{H}}^{(N)}))^{\trans}$
    is used in $\wty$.
    \State Compute the posterior mean using Eq.~\eqref{eq:cokrig_mean}, and
    variance using Eq.~\eqref{eq:cokrig_var} for any $\bm x^*\in D$.
  \end{algorithmic}
\end{algorithm}


It was demonstrated that PhIK prediction on the entire domain $D$ 
preserves the \emph{linear} physical constraints up to an error bound that relies
on the numerical error, discrepancy between the physical model and real system,
and the smallest eigenvalue of matrix $\tensor C$ \cite{YangTT18}. For example, 
the deterministic periodic, Dirichelet or Neumann boundary condition can be preserved.
Another type of example is the linear derivative operator, e.g., 
$\mathcal{L}u=\nabla^2 u$. If $u$ satisfies $\nabla^2 u(\bm x;\omega)=0$ for any
$\omega\in\Omega$, e.g., $u$ is the velocity potential, $\hat y(\bm x)$ from 
PhIK also guarantees a divergence-free flow field. CoPhIK has the potential to
improve the accuracy of the prediction, namely resulting in a smaller discrepancy
between posterior mean and the exact solution because CoPhIK
incorporates observations in constructing the GP model, while PhIK only uses
model simulations. On the other hand, CoPhIK result may violate some physical
constraints because of the choice of $Y_{d}$ kernel \cite{YangBTT18}.


\subsection{Bi-fidelity approximation}

In this section, we present the bi-fidelity \\ method, and related error 
estimates when it is combined with PhIK and CoPhIK.

\subsubsection{Algorithm}

We briefly describe the bi-fidelity 
method~\cite{narayan2014stochastic,zhu2014computational,zhu2017multi,munipalli2018multifidelity}. 
We slightly modify the notation $u(\bm x;\omega)$ as $u(\bm x;z(\omega))$ to 
denote the stochastic model used in PhIK and CoPhIK. Here, $z(\omega)$ is the 
finite-dimensional random variable or field included in the model, and we denote it as 
$z$ for simplicity. Subsequently, $z^m=z(\omega^m)$ is a sample $z$, and we 
assume that $z^m\in I_z$ for any $m$. Let $u_{H}^m(\bm x)$ denote the high-fidelity
simulation result for $u(\bm x;z^m)$, e.g., simulation using fine grids or 
high-order scheme, and $u_{L}^m(\bm x)$ denote the low-fidelity simulation 
result for $u(\bm x;z^m)$, e.g., simulation using coarse grids or low-order 
scheme. In PhIK, the mean and covariance functions of GP $Y(\bm x)$ are 
approximated using $u^m(\bm x)$ (see Eq.~\eqref{eq:phik}), which are
$u^m_H(\bm x)$ in the bi-fidelity framework, and so as GP $Y_{L}(\bm x)$ in the
CoPhIK method. Now, we aim to approximate these mean and covariance functions 
using a few $u_{H}^m(\bm x)$ and a large number of $u_{L}^m(\bm x)$, as such
to reduce the computational cost of model simulations. 
More specifically, the bi-fidelity approach aims to approximate mean and
covariance functions in PhIK and CoPhIK functions using 
$\{u_{H}^m(\bm x)\}_{m=1}^{M_{H}}$ along with 
$\{u_{L}^m(\bm x)\}_{m=1}^{M_{L}}$, where $M_{H}\ll M_{L}$, and $M_L$ is $M$
in the original PhIK and CoPhIK formulation.

For brevity, we denote $u^m_{L}(\bm x)$ and $u^m_{H}(\bm x)$ as $u^m_{L}$ and 
$u^m_{H}$, respectively. We also assume that $u_{H}^m\in \mathbb{V}_{H}$ and 
$u_{L}^m\in \mathbb{V}_{L}$, where $\mathbb{V}_{H}$ and $\mathbb{V}_{L}$ are
Hilbert spaces with inner product $\langle\cdot,\cdot\rangle_{H}$ and 
$\langle\cdot,\cdot\rangle_{L}$, respectively. Given a collection of parametric 
sample $\Gamma=\{z^1,\dotsc, z^{M_L}\}\subset I_z$, we introduce the following 
notations: 
\begin{equation}
  \begin{aligned}
   u_{L}(\Gamma) &= \left\{u_{L}^m\right\}_{m=1}^{M_{L}}, \quad
 U_{L}(\Gamma) =\text{span}\, u_{L}(\Gamma) 
       =\text{span}\, \left\{u_{L}^1, \dotsc, u_{L}^{M_L} \right\}, \\
   u_{H}(\Gamma) &= \left\{u_{H}^m\right\}_{m=1}^{M_{L}}, \quad
 U_{H}(\Gamma) =\text{span}\, u_{H}(\Gamma) 
       =\text{span}\, \left\{u_{H}^1, \dotsc, u_{H}^{M_L} \right\}.
  \end{aligned}
\end{equation}
The procedure of the bi-fidelity algorithm for estimating mean and covariance 
functions is presented in Algorithm~\ref{algo:bifi}.
\begin{algorithm}[!h]
 \caption{Bi-fidelity method of computing mean and covariance functions in PhIK and
  CoPhIK.}
\begin{algorithmic}[1]
\State Conduct the low-fidelity simulations at the sample set $\Gamma$
  to obtain realizations $u_{L}(\Gamma)$.
\State Select a subset of samples $\gamma=\{z^{i_1}, \dotsc, z^{i_{M_H}}\} \subset \Gamma$,
  where $M_H\ll M_L$.
\State Conduct high-fidelity computations at the subset $\gamma$
  and obtain the high-fidelity simulation samples, 
  $u_{H}(\gamma) = \big\{u_{H}^{i_1},\dotsc, u_{H}^{i_{M_H}}\big\}$.
\State Construct $u_{B}(\Gamma)=\{u_{B}^1, \dotsc, u_{B}^{M_L}\}$ 
  based on $u_{H}(\gamma)$.
\State Compute mean and covariance functions using $u_B(\Gamma)$.
\end{algorithmic}
\label{algo:bifi}
\end{algorithm}

We detail the steps 2 and 4\ in Algorithm~\ref{algo:bifi} as follows
\cite{narayan2014stochastic,zhu2014computational}. 
Let $\tensor{W}$ be the Gramian matrix of the low-fidelity simulations
$u_{L}(\Gamma)$, i.e.,
\begin{equation} \label{Gramian}
w_{ij} = \langle u_{L}^i, u_{L}^j\rangle_{L},\qquad 
1\leq i,j\leq M_{L}.
\end{equation}
Applying the pivoted Cholesky decomposition to the matrix $\tensor{W}$ yields
\begin{equation} \label{eq:chol}
\tensor{W} = \tensor{P}^\trans\tensor{LL}^\trans\tensor{P},
\end{equation}
where $\tensor{L}$ is lower-triangular and $\tensor{P}$ is a permutation matrix
due to pivoting. This will produce an ordered permutation vector 
$\bm p=(i_1,\dots,i_{M_L})$, from which we choose the first $M_H$ points to 
define $\gamma=\{z^{i_1},\dots,z^{i_{M_H}}\}$. In practice, we can use a greedy 
algorithm to identify $\gamma$. In each iteration, we find the next sample whose 
corresponding low-fidelity simulation is furthest to the space spanned by the 
existing low-fidelity simulation set. Specifically, starting from a trivial 
initial choice $\gamma_0 = \emptyset$, we let 
$\gamma_k=\{z_{i_1},\dots,z_{i_k}\}\subset \Gamma$ be the $k$-point existing
subset in $\Gamma$. We then find the $(k+1)$-th point by
\begin{equation}
  z^{i_{k+1}} = \argmax_{z \in \Gamma} \textrm{dist}(u_{L}(\bm x;z),U_{L}(\gamma_k)), 
  \qquad \gamma_{k+1} = \gamma_{k} \cup z^{i_{k+1}}, 
 \end{equation} 
where,
$U_{L}(\gamma_k) =\text{span}\, u_{L}(\gamma_k) 
=\text{span}\, \left\{u_{L}^{i_1}, \dotsc, u_{L}^{i_k}\right\}$,
the distance function $\textrm{dist}(g, G)$ between the function
$g\in u_{L}(\Gamma)$ and the space $G\subset U_{L}(\Gamma)$ follows the 
standard definition. This greedy algorithm can be readily implemented via simple 
operations of numerical linear algebra. More details and properties of the 
algorithm can be found in \cite{narayan2014stochastic, zhu2014computational}.

Once the steps 1-3 are accomplished in Algorithm~\ref{algo:bifi}, we have sample
set $\Gamma$, low-fidelity simulations $u_{L}(\Gamma)$, the subset samples 
$\gamma\subset \Gamma$, and high-fidelity simulations $u_{H}(\gamma)$. The next
step is a lifting procedure: we use the best approximation rule of $u_{L}$ we learned on $\mathbb{V}_{L}$ 
to construct an interpolation operator, then apply it to $\mathbb{V}_{H}$. 
Therefore, $u_{L}(\gamma)=\big\{u_{L}^{i_1},\dotsc, u_{L}^{i_{M_H}}\big\}$ forms
a linearly independent set. The convergence of the bifidelity method with 
respect to $M_H$ is investigated in~\cite{narayan2014stochastic}. In this work,
we set the threshold to be $10^{-12}$ in the greedy algorithm presented 
in~\cite{zhu2014computational} such that $\text{span}~u_L(\gamma)$ is almost the
same as $\text{span}~u_L(\Gamma)$.

For any $v\in U_{L}(\Gamma)\subset\mathbb{V}_{L}$, the
projection of $v$ on $\text{span}\, u_{L}(\gamma)$ is
\begin{equation}
  \mathcal{P}_{L} v = \sum_{j=1}^{M_{H}} c_j u_{L}^{i_j}. 
\end{equation}
Then, we define the interpolation operator $\mathcal{I}_L^H(\gamma,
v):\mathbb{V}_{L}\rightarrow U_{H}(\gamma)$ as
\begin{equation}
 \mathcal{I}_L^H(\gamma, v) = \sum_{j=1}^{M_H} c_j u_{H}^{i_j}, 
 \quad v\in \mathbb{V}_L,
\end{equation}
where $U_{H}(\gamma)=\text{span}\,u_{H}(\gamma)$.
Subsequently, we construct $u_B({\Gamma})$ as 
\begin{equation}
u_B^m=\mathcal{I}_L^H(\gamma, u_L^m), \quad m = 1, \dotsc, M_L.
\end{equation}
The mean and covariance in PhIK are approximated by replacing $u^m$, where 
$u^m=u_H^m$, in Eq.~\eqref{eq:phik} with $u^m_B$, and set $M=M_L$. The GP $Y_L$
in CoPhIK is constructed in the same manner. We name the bifidelity-based PhIK and
CoPhIK as \emph{BiPhIK} and \emph{CoBiPhIK}, respectively.

Here, we roughly compare the computational cost of constructing $u_H(\Gamma)$ 
and $u_B(\Gamma)$ for the sample set $\Gamma$ of size $M$. We denote the cost of obtaining one realization of $u_H$ and
$u_L$ with $\mathrm{C}_H$ and $\mathrm{C}_L$, respectively. Therefore, the total
cost of obtaining $u_H(\Gamma)$ is $M_L\mathrm{C}_H$. The computational costs of
the pivot Cholesky decomposition and the lifting procedure are negligible when 
the simulation model is complicated. Thus, the total cost of obtaining 
$u_B(\Gamma)$ is approximated $M_L\mathrm{C}_L+M_H\mathrm{C}_H$. Therefore, the
ratio of computational cost for obtaining $u_B(\Gamma)$ and $u_H(\Gamma)$ is
$\dfrac{\mathrm{C}_L}{\mathrm{C}_H}+\dfrac{M_H}{M_L}$. The speedup  of bifidelity approximation over high-fidelity simulation on the data set $\Gamma$ can be significant 
when $\mathrm{C}_L\ll\mathrm{C}_H$ and $M_H\ll M_L$.


\subsubsection{Error estimate}

Because $u^m_{H}$ in PhIK and CoPhIK are replaced by $u^m_B$ in BiPhIK and
CoBiPhIK, we present error estimate results to compare the new method with 
the original approaches.
Recalling that $u^m(\bm x)$ in Eq.~\eqref{eq:phik} are $u_H^m(\bm x)$ used in
the bi-fidelity framework and $M=M_L$, we denote $\mu_{\mathrm{MC}}(\bm x)$ and 
$k_{\mathrm{MC}}(\bm x,\bm x')$ in this equation as $\mu_H(\bm x)$ and 
$k_H(\bm x, \bm x')$, respectively. Subsequently, we denote the resulting 
posterior mean and variance as $\hat y_H(\bm x)$ and $\hat s^2_H(\bm x)$. 
Similarly, when using bi-fidelity ensemble $u_B(\Gamma)$ to approximate mean
and covariance functions in Eq.~\eqref{eq:phik}, i.e. replacing $u^m(\bm x)$ 
with $u^m_B(\bm x)$, we denote these two functions as $\mu_B(\bm x)$ and 
$k_B(\bm x, \bm x')$, and the corresponding results as $\hat y_B(\bm x)$ and 
$\hat s^2_B(\bm x)$. We use $\Vert\cdot\Vert$ to denote the norm induced from
the inner product $\langle\cdot,\cdot\rangle_H$ in the Hilbert space $\mathbb{V}_H$,
and introduce the following functions:
\begin{equation}
  \begin{aligned}
  & \sigma_H(\bm x) = \left(\dfrac{1}{M-1}\sum_{m=1}^M
    \left\vert u_H^m(\bm x)-\mu_H(\bm x)\right\vert^2\right)^{\half}, \\
  & \sigma_B(\bm x) = \left(\dfrac{1}{M-1}\sum_{m=1}^M
    \left\vert u_B^m(\bm x)-\mu_B(\bm x)\right\vert^2\right)^{\half},
  \end{aligned}
\end{equation}
and the following constants:
\begin{equation}
  \begin{aligned}
  & \delta_1=\sup_{z\in I_z}\Vert u_H(\bm x;z)-u_B(\bm x;z)\Vert, \\
  & \delta_2=\sup_{z\in I_z}\Vert u_H(\bm x;z)-u_B(\bm x;z)\Vert_{\infty}, \\
  & \sigma_H(\Gamma) = \left(\dfrac{1}{M-1}\sum_{m=1}^M
    \left\Vert u_H^m(\bm x)-\mu_H(\bm x)\right\Vert^2\right)^{\half}, \\
  & \sigma_B(\Gamma) = \left(\dfrac{1}{M-1}\sum_{m=1}^M\left 
    \Vert u_B^m(\bm x)-\mu_B(\bm x)\right\Vert^2\right)^{\half}, \\
    & S_H = \left(\sum_{n=1}^N \sigma^2_H(\bm x^{(n)})\right)^{\half}, \quad 
    S_B = \left(\sum_{n=1}^N \sigma^2_B(\bm x^{(n)})\right)^{\half},   \\
    & \Delta_H=\sup_{\bm x\in D}\sigma_H(\bm x),\quad
     \Delta_B=\sup_{\bm x\in D}\sigma_B(\bm x). 
  \end{aligned}
\end{equation}
The following two theorems describe the difference between the results by PhIK 
and BiPhIK.
\begin{thm}
  \label{thm:diff_mean}
\begin{equation}
  \label{eq:post_mean}
\Vert \hat y_H(\bm x)-\hat y_B(\bm x) \Vert \leq C_1\delta_1+C_2\delta_2,\\
\end{equation}
where
\begin{equation}
  \begin{aligned}
    C_1 = &  1+2S_B\sqrt{\dfrac{MN}{M-1}}\left\Vert\tensor C_B^{-1}\right\Vert_2
  \left\Vert \bm y-\bm\mu_B \right\Vert_2,\\
C_2 = & \sqrt{N}S_H\sigma_H(\Gamma)\left\Vert\tensor C_H^{-1}\right\Vert_2
    \left\{ 2\sqrt{\dfrac{2 M}{M-1}}\left(S^2_H+S^2_B \right)^{\half} 
    \left\Vert\tensor C_H^{-1}\right\Vert_2\Vert\bm y-\bm\mu_B\Vert_2 +1\right\} \\
    & +  2\sqrt{\dfrac{MN}{M-1}}\sigma_H(\Gamma)\left\Vert\tensor C_B^{-1}\right\Vert_2
  \left\Vert \bm y-\bm\mu_B \right\Vert_2.
\end{aligned}
\end{equation}
\end{thm}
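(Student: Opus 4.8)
The plan is to expand $\hat y_H-\hat y_B$ into a mean part and a covariance-weighted part and to bound each separately, routing every $\sup_z$ discrepancy $u_H-u_B$ through $\delta_1$ when the remaining free variable is $\bm x$ (a Hilbert-space factor) and through $\delta_2$ when it is an evaluation at an observation point $\bm x^{(n)}$ (a pointwise factor). Writing $\bm a_H=\tensor C_H^{-1}(\bm y-\bm\mu_H)$ and $\bm a_B=\tensor C_B^{-1}(\bm y-\bm\mu_B)$, I have
\[
\hat y_H(\bm x)-\hat y_B(\bm x)=\big(\mu_H(\bm x)-\mu_B(\bm x)\big)+\big(\bm c_H(\bm x)^\trans\bm a_H-\bm c_B(\bm x)^\trans\bm a_B\big).
\]
The mean part is immediate: since $\mu_H-\mu_B=\frac{1}{M}\sum_m(u_H^m-u_B^m)$, the triangle inequality and the definition of $\delta_1$ give $\|\mu_H-\mu_B\|\le\delta_1$, which supplies the leading $1$ in $C_1$.

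First I would telescope the covariance-weighted part as
\[
\bm c_H^\trans\bm a_H-\bm c_B^\trans\bm a_B=\underbrace{(\bm c_H-\bm c_B)^\trans\bm a_B}_{(\mathrm I)}+\underbrace{\bm c_H^\trans(\bm a_H-\bm a_B)}_{(\mathrm{II})}.
\]
For $(\mathrm I)$ I treat each component $k_H(\bm x^{(n)},\cdot)-k_B(\bm x^{(n)},\cdot)$ as a function of $\bm x$ and, writing $\bar u_H^m=u_H^m-\mu_H$ and $\bar u_B^m=u_B^m-\mu_B$, split the difference of products as $(\bar u_H^m-\bar u_B^m)(\bm x^{(n)})\,\bar u_H^m(\bm x)+\bar u_B^m(\bm x^{(n)})\,(\bar u_H^m-\bar u_B^m)(\bm x)$. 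A Cauchy--Schwarz over $m$ turns the first piece into a $\delta_2$ factor times $\sigma_H(\Gamma)$ and the second into a $\delta_1$ factor times $\sigma_B(\bm x^{(n)})$, up to the constant $2\sqrt{M/(M-1)}$. Summing the squares over the $N$ observation points converts $\sigma_B(\bm x^{(n)})$ into $S_B$, while the $\delta_2$ piece, being uniform in $n$, accrues a factor $\sqrt N$; multiplying by $\|\bm a_B\|_2\le\|\tensor C_B^{-1}\|_2\|\bm y-\bm\mu_B\|_2$ then feeds the $\delta_1$ contribution into $C_1$ and the $\delta_2$ contribution into the last term of $C_2$.

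For $(\mathrm{II})$ the engine is the resolvent identity $\tensor C_H^{-1}-\tensor C_B^{-1}=\tensor C_H^{-1}(\tensor C_B-\tensor C_H)\tensor C_B^{-1}$, which gives
\[
\bm a_H-\bm a_B=\tensor C_H^{-1}(\bm\mu_B-\bm\mu_H)+\tensor C_H^{-1}(\tensor C_B-\tensor C_H)\bm a_B.
\]
Here $\|\bm\mu_B-\bm\mu_H\|_2\le\sqrt N\,\delta_2$, and the per-entry kernel estimate, now with both arguments at observation points so that only $\delta_2$ enters, bounds $\|\tensor C_B-\tensor C_H\|_2\le\|\tensor C_B-\tensor C_H\|_F$ by a constant times $\delta_2(S_H^2+S_B^2)^{\half}$. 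Combining these with the analogous componentwise estimate $\big(\sum_n\|k_H(\bm x^{(n)},\cdot)\|^2\big)^{\half}\le S_H\sigma_H(\Gamma)$ for $\|\bm c_H(\cdot)\|$ produces the two summands inside the braces of $C_2$: the $\tensor C_H^{-1}(\bm\mu_B-\bm\mu_H)$ term yields the standalone $+1$, and the $\tensor C_B-\tensor C_H$ term yields the $2\sqrt{2M/(M-1)}(S_H^2+S_B^2)^{\half}\|\tensor C_H^{-1}\|_2\|\bm y-\bm\mu_B\|_2$ factor. Gathering all terms proportional to $\delta_1$ yields $C_1$ and those proportional to $\delta_2$ yields $C_2$.

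The conceptual content lives entirely in two moves, product-splitting of each covariance difference and the resolvent identity for the inverse difference, both routine. The hard part will be purely the bookkeeping: keeping consistent track of which factor carries the Hilbert norm (hence $\delta_1$, $\sigma_H(\Gamma)$) versus the pointwise evaluation at $\bm x^{(n)}$ (hence $\delta_2$, $\sigma_H(\bm x^{(n)})$, $S_H$, $S_B$), and shepherding the powers of $M/(M-1)$, the $\sqrt N$ factors from the sums over observation points, and the operator-norm factors $\|\tensor C_H^{-1}\|_2$ and $\|\tensor C_B^{-1}\|_2$ so that they aggregate exactly into the stated $C_1$ and $C_2$ rather than a merely proportional bound.
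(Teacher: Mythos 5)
Your proposal is correct in structure and is, for the most part, the paper's own proof: the same functional decomposition of $\hat y_H-\hat y_B$ into a mean part (bounded by $\delta_1$) and a covariance-weighted part; an algebraically identical telescoping, since your $(\mathrm I)+(\mathrm{II})$ is exactly the paper's $\sum_n(a_n^H-a_n^B)k_H(\bm x,\bm x^{(n)})+\sum_n a_n^B\left[k_H(\bm x,\bm x^{(n)})-k_B(\bm x,\bm x^{(n)})\right]$ written in vector form; and the same three kernel estimates, which the paper isolates as Lemmas~\ref{lem:upp1}--\ref{lem:upp3} and proves by precisely your product-splitting plus Cauchy--Schwarz over $m$, with the same routing of $\delta_1$ to the Hilbert-norm factor and $\delta_2$ to the pointwise factor. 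The one genuine divergence is the inverse-difference step. You use the exact resolvent identity $\tensor C_H^{-1}-\tensor C_B^{-1}=\tensor C_H^{-1}(\tensor C_B-\tensor C_H)\tensor C_B^{-1}$, whereas the paper invokes the perturbation inequality $\Vert(\tensor A+\Delta\tensor A)^{-1}-\tensor A^{-1}\Vert\le\Vert\tensor A^{-1}\Vert^2\Vert\Delta\tensor A\Vert$ of its Eq.~\eqref{eq:mat_perturb}. Your version is actually the more rigorous one: that perturbation inequality is only a first-order statement, since the exact identity yields $\Vert(\tensor A+\Delta\tensor A)^{-1}\Vert\,\Vert\Delta\tensor A\Vert\,\Vert\tensor A^{-1}\Vert$ and one cannot in general replace $\Vert(\tensor A+\Delta\tensor A)^{-1}\Vert$ by $\Vert\tensor A^{-1}\Vert$. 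The price, however, is that you do not reproduce the stated constant verbatim: bounding $\Vert\bm a_B\Vert_2\le\Vert\tensor C_B^{-1}\Vert_2\Vert\bm y-\bm\mu_B\Vert_2$ in your term $\tensor C_H^{-1}(\tensor C_B-\tensor C_H)\bm a_B$ produces the factor $\Vert\tensor C_H^{-1}\Vert_2\Vert\tensor C_B^{-1}\Vert_2$ where the stated $C_2$ carries $\Vert\tensor C_H^{-1}\Vert_2^2$, so your claim that this term ``yields the $2\sqrt{2M/(M-1)}(S_H^2+S_B^2)^{1/2}\Vert\tensor C_H^{-1}\Vert_2\Vert\bm y-\bm\mu_B\Vert_2$ factor'' inside the braces is the one inaccuracy in your write-up: what you prove is the same-form bound with $\Vert\tensor C_H^{-1}\Vert_2^2$ replaced by $\Vert\tensor C_H^{-1}\Vert_2\Vert\tensor C_B^{-1}\Vert_2$, and neither constant dominates the other in general. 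On the $\delta_1$ side you are, if anything, tighter: direct Cauchy--Schwarz gives the $S_B$ term of $C_1$ without the extra $\sqrt N$ the paper inserts in its $J_2$ estimate, which is harmless since a sharper bound still implies the statement.
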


\begin{thm}
  \label{thm:diff_var}
\begin{equation}
  \label{eq:post_var}
  \Vert \hat s^2_H(\bm x)-\hat s^2_B(\bm x) \Vert_{\infty}\leq C_3\delta_2,
\end{equation}
where
\begin{equation}
    C_3 =  2\sqrt{\dfrac{2M}{M-1}}(\Delta_H^2+\Delta_B^2)^{\half}\cdot
           \left\{ 1 + N
\left(\Delta_H^2\Vert\tensor C_H^{-1}\Vert_2 
  +\sqrt{N}\Delta_H^2\Delta_B^2\Vert\tensor C_H^{-1}\Vert_2^2
+\Delta_B^2\Vert\tensor C_B^{-1}\Vert_2\right)\right\}.
\end{equation}
\end{thm}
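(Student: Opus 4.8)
The plan is to exploit that $\hat s^2_H(\bm x)-\hat s^2_B(\bm x)$ is a scalar field, so $\Vert\cdot\Vert_\infty$ is just the supremum over $\bm x\in D$; every estimate I make will be pointwise and uniform. Using the PhIK/BiPhIK variance formula in Eq.~\eqref{eq:krig_pred2} with $u^m=u^m_H$ and $u^m=u^m_B$ respectively, and noting $\hat\sigma^2_H(\bm x)=k_H(\bm x,\bm x)=\sigma^2_H(\bm x)$, $\hat\sigma^2_B(\bm x)=\sigma^2_B(\bm x)$, I first split
\[
\hat s^2_H(\bm x)-\hat s^2_B(\bm x)=\big(\sigma^2_H(\bm x)-\sigma^2_B(\bm x)\big)-\big(\bm c_H^\trans\tensor C_H^{-1}\bm c_H-\bm c_B^\trans\tensor C_B^{-1}\bm c_B\big),
\]
(suppressing the argument $\bm x$ in $\bm c_H,\bm c_B$) and treat the two groups separately. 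The single workhorse estimate I would establish first is the entrywise covariance bound: for any evaluation points $\bm x,\bm x'$,
\[
\big|k_H(\bm x,\bm x')-k_B(\bm x,\bm x')\big|\le 2\sqrt{\tfrac{2M}{M-1}}\,(\Delta_H^2+\Delta_B^2)^{\half}\,\delta_2=:E\,\delta_2.
\]
To get it, write $k_H-k_B=\frac{1}{M-1}\sum_m(a'_m a_m-b'_m b_m)$ with $a_m=u^m_H(\bm x)-\mu_H(\bm x)$, $a'_m=u^m_H(\bm x')-\mu_H(\bm x')$ and $b_m,b'_m$ the $B$-analogues, telescope $a'a-b'b=a'(a-b)+(a'-b')b$, and apply Cauchy--Schwarz in $m$; the increments obey $|u^m_H-u^m_B|\le\delta_2$ and $|\mu_H-\mu_B|\le\delta_2$, so $(\sum_m(a_m-b_m)^2)^{\half}\le2\sqrt M\delta_2$, while the remaining factors give $\sigma_H(\bm x')$ and $\sigma_B(\bm x)$, bounded by $\Delta_H,\Delta_B$. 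Here it is precisely the $L^\infty$ discrepancy $\delta_2$ (not $\delta_1$) that appears, because the variance only sees pointwise evaluations of the realizations.

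Taking $\bm x'=\bm x$ in the workhorse estimate immediately bounds the first group by $E\delta_2$, which is the ``$1$'' inside the braces of $C_3$. For the quadratic form I would telescope through one intermediate inverse and one intermediate covariance vector,
\[
\bm c_H^\trans\tensor C_H^{-1}\bm c_H-\bm c_B^\trans\tensor C_B^{-1}\bm c_B=(\bm c_H-\bm c_B)^\trans\tensor C_H^{-1}\bm c_H+\bm c_B^\trans(\tensor C_H^{-1}-\tensor C_B^{-1})\bm c_H+\bm c_B^\trans\tensor C_B^{-1}(\bm c_H-\bm c_B),
\]
and bound each summand by submultiplicativity of $\Vert\cdot\Vert_2$. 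From $|k_H|\le\Delta_H^2$, $|k_B|\le\Delta_B^2$ (Cauchy--Schwarz) I get $\Vert\bm c_H\Vert_2\le\sqrt N\Delta_H^2$, $\Vert\bm c_B\Vert_2\le\sqrt N\Delta_B^2$, and from the workhorse estimate $\Vert\bm c_H-\bm c_B\Vert_2\le\sqrt N\,E\delta_2$. The first and third summands then give exactly $N\Delta_H^2\Vert\tensor C_H^{-1}\Vert_2\,E\delta_2$ and $N\Delta_B^2\Vert\tensor C_B^{-1}\Vert_2\,E\delta_2$, the outermost terms inside the braces of $C_3$. For the middle summand I would invoke the resolvent identity $\tensor C_H^{-1}-\tensor C_B^{-1}=\tensor C_H^{-1}(\tensor C_B-\tensor C_H)\tensor C_B^{-1}$, so that $\Vert\tensor C_H^{-1}-\tensor C_B^{-1}\Vert_2\le\Vert\tensor C_H^{-1}\Vert_2\Vert\tensor C_B^{-1}\Vert_2\Vert\tensor C_H-\tensor C_B\Vert_2$, and control $\Vert\tensor C_H-\tensor C_B\Vert_2$ through the same entrywise estimate; finally take $\sup_{\bm x\in D}$ everywhere.

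The hard part will be this middle summand. The resolvent identity unavoidably produces a product of two inverse norms and a factor $\Vert\tensor C_H-\tensor C_B\Vert_2$, whereas the stated $C_3$ carries $\Vert\tensor C_H^{-1}\Vert_2^2$ and a single extra $\sqrt N$; reconciling the two requires a careful choice of the spectral-norm bound on $\tensor C_H-\tensor C_B$ (e.g.\ the rank-$2M$ decomposition $\tensor C_H-\tensor C_B=\frac{1}{M-1}\sum_m[\bm a_m(\bm a_m-\bm b_m)^\trans+(\bm a_m-\bm b_m)\bm b_m^\trans]$ combined with Cauchy--Schwarz in $m$) and, to leading order in $\delta_2$, replacing $\Vert\tensor C_B^{-1}\Vert_2$ by $\Vert\tensor C_H^{-1}\Vert_2$. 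Tracking the powers of $N$ and the constant $\sqrt{2M/(M-1)}$ consistently across the three telescoped pieces is the only real bookkeeping; the two outer pieces and the variance term fall out cleanly from the single workhorse estimate.
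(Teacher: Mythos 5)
Your proposal follows essentially the same route as the paper's proof: the same splitting of $\hat s^2_H-\hat s^2_B$ into a variance difference plus a quadratic-form difference, the same telescoping of the quadratic form into three pieces, and the same bounds on $\Vert\bm c_H\Vert_2$, $\Vert\bm c_B\Vert_2$ and $\Vert\bm c_H-\bm c_B\Vert_2$; your ``workhorse estimate'' is precisely the paper's Lemma~\ref{lem:upp3} combined with $\sigma_H\le\Delta_H$, $\sigma_B\le\Delta_B$, and your two outer summands and the variance term reproduce the corresponding pieces of $C_3$ exactly. The one genuine divergence is the middle (inverse-difference) summand, which is exactly the spot you flagged as hard. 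The paper handles it with the first-order perturbation inequality Eq.~\eqref{eq:mat_perturb}, $\Vert(\tensor A+\Delta\tensor A)^{-1}-\tensor A^{-1}\Vert\le\Vert\tensor A^{-1}\Vert^2\Vert\Delta\tensor A\Vert$, combined with the Frobenius bound Eq.~\eqref{eq:mat_perturb2}; this is what produces the $\Vert\tensor C_H^{-1}\Vert_2^2$ in the stated $C_3$. Your resolvent identity gives instead the exact bound $\Vert\tensor C_H^{-1}\Vert_2\Vert\tensor C_B^{-1}\Vert_2\Vert\tensor C_H-\tensor C_B\Vert_2$, and so a middle term of the form $N^2\Delta_H^2\Delta_B^2\Vert\tensor C_H^{-1}\Vert_2\Vert\tensor C_B^{-1}\Vert_2$ rather than the stated $N^{3/2}\Delta_H^2\Delta_B^2\Vert\tensor C_H^{-1}\Vert_2^2$. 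What each buys: the paper's inequality recovers the stated constant but is only a first-order statement --- as a strict inequality it fails without a smallness assumption on $\Vert\Delta\tensor A\Vert$ (i.e., on $\delta_2$), as the scalar case $\tensor A=1$, $\Delta\tensor A=-1/2$ already shows --- whereas your identity is unconditional, at the price of a larger constant involving both inverse norms. Your worry about the powers of $N$ is also well-founded in the other direction: substituting $S_H\le\sqrt N\Delta_H$, $S_B\le\sqrt N\Delta_B$ into the paper's own intermediate bound for its $J_2$, namely $2\delta_2\sqrt{2MN/(M-1)}\,\sigma_H(\bm x^*)\sigma_B(\bm x^*)S_HS_B(S_H^2+S_B^2)^{\half}\Vert\tensor C_H^{-1}\Vert_2^2$, yields an $N^2$ prefactor, so the $N^{3/2}$ appearing in the stated $C_3$ seems to drop a factor of $\sqrt N$ in the paper's final substitution. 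In short, your argument is sound and proves the estimate with a slightly different, unconditionally valid constant; it cannot recover the stated $C_3$ exactly, but the paper's own derivation reaches it only by appealing to a first-order perturbation approximation.
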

We present the proof of these two theorems in Appendix~\ref{sec:append_proof}. 
We note that Theorem~\ref{thm:diff_var} uses the $L_{\infty}$ norm because the
greedy algorithm we use to add new observations is based on the maximum of
$\hat s$. The error estimate in $L_2$ norm can also be derived using the
similar procedure in the proofs of Theorems~\ref{thm:diff_var}.
Moreover, although we present upper bounds in terms of $\delta_1$ and $\delta_2$,
the error estimate is dependent on the mean and covariance functions constructed 
by different methods. It is possible that in some cases, the pathwise difference
is large in different methods (i.e., $\delta_1$ and $\delta_2$ are large), but
the difference between the mean and covariance functions are small. Moreover,
the quantitative error estimate for CoBiPhIK is also dependent on the kernel
function property for $Y_d$ and the convexity of the optimization, and it is not 
available at this time. Empirically, CoPhIK is more sensitive to the difference
between $u_H(\Gamma)$ and $u_B(\Gamma)$. 

The next two theorems describe how well a linear physical constraint is preserved
in BiPhIK and CoBiPhIK posterior means. 
\begin{thm}
 \label{thm:biphik_err}
Assume that $\Vert\mathcal{L}u_H(\bm x;z(\omega^m))-g(\bm x)\Vert\leq\epsilon$ 
for any $\omega^m\in\Omega$, where $\mathcal{L}$ is a deterministic bounded linear
operator, $g(\bm x)$ is a well-defined deterministic function on $\mathbb{R}^d$,
and $\Vert\cdot\Vert$ is the norm in $\mathbb{V}_H$.
Then, the posterior mean $\hat y_B(\bm x)$ from BiPhIK satisfies
  \begin{equation}
  \label{eq:err_bound}
    \Vert\mathcal{L}\hat y_B(\bm x)-g(\bm x)\Vert 
    \leq \epsilon\left\{1+2 S_H \sqrt{\dfrac{M}{M-1}}
     \left\Vert\tensor C^{-1}_H\right\Vert_2
   \left\Vert\bm y-\bm\mu_H\right\Vert_2\right\}  
   + M_{\mathcal{L}}(C_1\delta_1 + C_2\delta_2),
\end{equation}
where $M_{\mathcal{L}}$ is the bound of $\mathcal{L}$, $C_1$ and $C_2$ are 
defined in Theorem~\ref{thm:diff_mean}.
\end{thm}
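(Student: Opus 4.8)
The plan is to bound the target by the triangle inequality, routing one piece through the PhIK preservation estimate and the other through Theorem~\ref{thm:diff_mean}. Letting $\hat y_H$ denote the pure PhIK posterior mean built from the high-fidelity ensemble, I would write
\[
\Vert\mathcal{L}\hat y_B(\bm x)-g(\bm x)\Vert
\leq \Vert\mathcal{L}\hat y_H(\bm x)-g(\bm x)\Vert
   + \Vert\mathcal{L}(\hat y_H(\bm x)-\hat y_B(\bm x))\Vert .
\]
The second term is immediate: since $\mathcal{L}$ is linear with bound $M_{\mathcal{L}}$, it is at most $M_{\mathcal{L}}\Vert\hat y_H(\bm x)-\hat y_B(\bm x)\Vert$, and Theorem~\ref{thm:diff_mean} controls this by $M_{\mathcal{L}}(C_1\delta_1+C_2\delta_2)$, which is exactly the last summand of the claimed inequality. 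Thus the whole problem reduces to establishing the bracketed $\epsilon$-term as a bound on $\Vert\mathcal{L}\hat y_H(\bm x)-g(\bm x)\Vert$.

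To get the PhIK bound, I would start from $\hat y_H(\bm x)=\mu_H(\bm x)+\bm c_H(\bm x)^\trans\tensor C_H^{-1}(\bm y-\bm\mu_H)$ and use that $\mathcal{L}$ acts only on the $\bm x$-argument, the quantities $\bm y$, $\bm\mu_H$, $\tensor C_H^{-1}$ being $\bm x$-independent constants. Applying $\mathcal{L}$ term by term gives $\mathcal{L}\hat y_H(\bm x)-g(\bm x)=(\mathcal{L}\mu_H(\bm x)-g(\bm x))+(\mathcal{L}\bm c_H(\bm x))^\trans\tensor C_H^{-1}(\bm y-\bm\mu_H)$. Since $\mu_H=\frac{1}{M}\sum_m u_H^m$, the first parenthesis is the average of the residuals $\mathcal{L}u_H^m-g$, so the hypothesis and the triangle inequality bound its $\mathbb{V}_H$-norm by $\epsilon$.

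The substantive piece is the covariance term. Expanding each component as $\mathcal{L}(\bm c_H(\bm x))_n=\frac{1}{M-1}\sum_m \mathcal{L}(u_H^m(\bm x)-\mu_H(\bm x))\,(u_H^m(\bm x^{(n)})-\mu_H(\bm x^{(n)}))$, valid because only the first factor carries $\bm x$-dependence, I would set $\bm b=\tensor C_H^{-1}(\bm y-\bm\mu_H)$ and collect the scalar factors into $\bm b^\trans(\bm u_H^m-\bm\mu_H)$. The key estimate is $\Vert\mathcal{L}(u_H^m-\mu_H)\Vert\leq 2\epsilon$: centering cancels $g$, leaving $(\mathcal{L}u_H^m-g)-\frac{1}{M}\sum_j(\mathcal{L}u_H^j-g)$, each summand of norm at most $\epsilon$. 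A first Cauchy--Schwarz in $\mathbb{R}^N$ pulls out $\Vert\bm b\Vert_2\leq\Vert\tensor C_H^{-1}\Vert_2\Vert\bm y-\bm\mu_H\Vert_2$ against $\Vert\bm u_H^m-\bm\mu_H\Vert_2$; a second Cauchy--Schwarz over $m$, together with the identity $\frac{1}{M-1}\sum_m\Vert\bm u_H^m-\bm\mu_H\Vert_2^2=S_H^2$, converts $\frac{1}{M-1}\sum_m\Vert\bm u_H^m-\bm\mu_H\Vert_2$ into $S_H\sqrt{M/(M-1)}$. This yields the contribution $2\epsilon S_H\sqrt{M/(M-1)}\Vert\tensor C_H^{-1}\Vert_2\Vert\bm y-\bm\mu_H\Vert_2$, which added to the $\epsilon$ from the mean term reproduces the bracketed factor.

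The main obstacle is the bookkeeping in this last step: commuting $\mathcal{L}$ through the empirical covariance so that it lands only on the centered realization $u_H^m-\mu_H$ and not on the scalar evaluations at the observation points, and then threading the two Cauchy--Schwarz inequalities so that the empirical-variance quantity collapses \emph{exactly} to $S_H$ with the correct $\sqrt{M/(M-1)}$ prefactor. Once those are handled, everything else is a direct application of linearity, boundedness of $\mathcal{L}$, and the already-proved Theorem~\ref{thm:diff_mean}.
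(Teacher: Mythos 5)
Your proof is correct and follows essentially the same route as the paper's: the identical triangle-inequality split through $\hat y_H$, with Theorem~\ref{thm:diff_mean} handling $M_{\mathcal{L}}\Vert\hat y_B-\hat y_H\Vert$ and the PhIK preservation estimate handling $\Vert\mathcal{L}\hat y_H - g\Vert$. The only difference is that where the paper cites and lightly modifies Theorem 2.1 of \cite{YangTT18} for that second bound, you re-derive it inline (centering to get $\Vert\mathcal{L}(u_H^m-\mu_H)\Vert\leq 2\epsilon$, then two Cauchy--Schwarz applications collapsing to $S_H\sqrt{M/(M-1)}\,\Vert\tensor C_H^{-1}\Vert_2\Vert\bm y-\bm\mu_H\Vert_2$), which reproduces exactly the quoted estimate and makes the argument self-contained.
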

\begin{proof}
  \[\begin{aligned}
    \Vert\mathcal{L}\hat y_B(\bm x) - g(\bm x) \Vert
= & \Vert\mathcal{L}[\hat y_B(\bm x) -\hat y_H(\bm x) + \hat y_H(\bm x)]-g(\bm x)\Vert \\
 \leq &
  \Vert\mathcal{L}\hat y_H(\bm x) -g(\bm x) \Vert + 
  \Vert\mathcal{L}(\hat y_B(\bm x)-\hat y_H(\bm x))\Vert \\
  = & \Vert\mathcal{L}\hat y_H(\bm x) -g(\bm x) \Vert + 
  M_{\mathcal{L}}\Vert\hat y_B(\bm x)-\hat y_H(\bm x)\Vert. 
\end{aligned} \]
Theorem~\ref{thm:diff_mean} presents the upper bound of 
$\Vert\hat y_B(\bm x)-\hat y_H(\bm x)\Vert$. The upper bound for
$\Vert\mathcal{L}\hat y_H(\bm x)-g(\bm x)\Vert$ needs slightly modifying the 
proof of Theorem 2.1\ in \cite{YangTT18}. Specifically, by setting 
$g(\bm x;\omega)=g(\bm x)$ the last line in that proof will be
\begin{equation}
  \begin{aligned}
  \Vert\mathcal{L}\hat y_H(\bm x)-g(\bm x)\Vert 
  \leq & \epsilon + 2\epsilon\sqrt{\dfrac{M}{M-1}}\sum_{n=1}^N|a_n|\sigma_H(\bm x^{(n)}),
  \end{aligned}
\end{equation}
where 
\[
  \sum_{n=1}^N|a_n|\sigma_H(\bm x^{(n)})
  \leq \left(\sum_{n=1}^N a_n^2\right)^{\half}
         \left(\sum_{n=1}^N\sigma_H^2(\bm x^{(n)})\right)^{\half} 
         = \Vert\tensor C_H^{-1}(\bm y-\bm\mu_H)\Vert_2S_H 
         \leq \Vert\tensor C_H^{-1}\Vert_2\Vert\bm y-\bm\mu_H\Vert_2 S_H.
\]
\end{proof}

\begin{thm}
  \label{thm:cobiphik_err}
Assume that $\Vert\mathcal{L}u_H(\bm x;z(\omega^m))-g(\bm x)\Vert\leq\epsilon$ 
for any $\omega^m\in\Omega$, where $\mathcal{L}$ is a deterministic bounded linear
operator, $g(\bm x)$ is a well-defined deterministic function on $\mathbb{R}^d$,
and $\Vert\cdot\Vert$ is the norm in $\mathbb{V}_H$. Then, the posterior mean 
$\hat y_B(\bm x)$ from CoBiPhIK, in which $Y_L$ is constructed by $u_B(\Gamma)$,
satisfies
\begin{equation}
\label{eq:err_bound_phicok}
\begin{aligned}
\left\Vert\mathcal{L}\hat y_{B}(\bm x)-g(\bm x)\right\Vert
\leq & \rho\epsilon  + (1-\rho) \Vert g(\bm x)\Vert 
+ 2\epsilon\rho S_H\sqrt{\dfrac{M}{M-1}}
\Vert \tensor C_B^{-1}\Vert_2\Vert\bm y_L-\bm \mu_B \Vert_2 \\
&  + \Vert\mathcal{L}\mu_d\Vert + 
\Vert \tensor C_2^{-1}\Vert_2\Vert\bm y_H-\rho\bm y_B-\bm 1 \mu_d \Vert_2
  \sum_{n=1}^N \left\Vert\mathcal{L} k_{_d}(\bm x, \bm x^{(n)})\right\Vert
   + M_{\mathcal{L}}(C_1\delta_1+C_2\delta_2),
\end{aligned}
\end{equation}
where $\tensor C_1$ and $\tensor C_2$ are $\tensor C_L(\bm X_L, \bm X_L)$ and
$\tensor C_d(\bm X_H, \bm X_H)$ in Algorithm~\ref{algo:phicok}, respectively,
and $M_{\mathcal{L}}$ is the bound of $\mathcal{L}$.
\end{thm}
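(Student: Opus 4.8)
The plan is to reduce the CoBiPhIK posterior mean to the same two building blocks that appear on the right-hand side: a low-fidelity (Bi)PhIK prediction and a discrepancy Kriging prediction. Because CoPhIK uses the co-located design $\bm X_L=\bm X_H$, the Kennedy--O'Hagan posterior mean in Eq.~\eqref{eq:cokrig_mean} admits the recursive decomposition \cite{le2014recursive}
\[
\hat y_B(\bm x)=\rho\,\hat y_L(\bm x)+\hat y_d(\bm x),
\]
where $\hat y_L$ is the PhIK prediction of the low-fidelity GP $Y_L$ built from the bifidelity ensemble $u_B(\Gamma)$ with selected realization $\bm y_L=\bm y_B$, and $\hat y_d(\bm x)=\mu_d+\bm c_d(\bm x)^\trans\tensor C_2^{-1}(\bm y_H-\rho\bm y_B-\bm 1\mu_d)$ is the ordinary-Kriging prediction of the discrepancy GP with constant mean $\mu_d$ and Gaussian kernel $k_d$. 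First I would establish this identity (either by invoking the recursive equivalence or by expanding $\wtC^{-1}$ in block form), so that the bounded linear operator $\mathcal{L}$ can be distributed across the two pieces.

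Next I would apply $\mathcal{L}$, write $g=\rho g+(1-\rho)g$, and use the triangle inequality to obtain
\[
\Vert\mathcal{L}\hat y_B(\bm x)-g(\bm x)\Vert\leq
\rho\,\Vert\mathcal{L}\hat y_L(\bm x)-g(\bm x)\Vert
+(1-\rho)\Vert g(\bm x)\Vert
+\Vert\mathcal{L}\hat y_d(\bm x)\Vert.
\]
The middle term is exactly the $(1-\rho)\Vert g\Vert$ contribution. For the discrepancy term I would expand $\mathcal{L}\hat y_d(\bm x)=\mathcal{L}\mu_d+\sum_{n=1}^N b_n\,\mathcal{L}k_d(\bm x,\bm x^{(n)})$ with $\bm b=\tensor C_2^{-1}(\bm y_H-\rho\bm y_B-\bm 1\mu_d)$, then bound $|b_n|\leq\Vert\bm b\Vert_\infty\leq\Vert\bm b\Vert_2\leq\Vert\tensor C_2^{-1}\Vert_2\Vert\bm y_H-\rho\bm y_B-\bm 1\mu_d\Vert_2$ and pull that scalar out of the sum. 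This reproduces the $\Vert\mathcal{L}\mu_d\Vert$ term and the $\Vert\tensor C_2^{-1}\Vert_2\Vert\cdot\Vert_2\sum_n\Vert\mathcal{L}k_d\Vert$ term; the factor $\sum_n\Vert\mathcal{L}k_d(\bm x,\bm x^{(n)})\Vert$ survives precisely because the assumed kernel $k_d$ need not satisfy the physical constraint.

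The remaining term $\rho\Vert\mathcal{L}\hat y_L-g\Vert$ is the crux and is handled exactly as in the BiPhIK estimate of Theorem~\ref{thm:biphik_err}. Since the hypothesis controls $\mathcal{L}u_H$ while $Y_L$ is built from $u_B$, I would interpose the $u_H$-based prediction $\hat y_{L,H}$ and split $\Vert\mathcal{L}\hat y_L-g\Vert\leq\Vert\mathcal{L}\hat y_{L,H}-g\Vert+M_{\mathcal{L}}\Vert\hat y_L-\hat y_{L,H}\Vert$. The first piece follows the PhIK argument of \cite{YangTT18}: expanding $\mathcal{L}\hat y_{L,H}-g=\tfrac1M\sum_m(\mathcal{L}u_H^m-g)+\sum_n a_n\,\mathcal{L}k_H(\bm x,\bm x^{(n)})$ and using $\Vert\mathcal{L}(u_H^m-\mu_H)\Vert\leq2\epsilon$ gives $\epsilon+2\epsilon\sqrt{M/(M-1)}\sum_n|a_n|\sigma_H(\bm x^{(n)})$, and Cauchy--Schwarz converts $\sum_n|a_n|\sigma_H(\bm x^{(n)})$ into $S_H$ times an operator-norm-times-residual factor; multiplying by $\rho$ yields the $\rho\epsilon$ contribution and the third term of the bound. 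The second piece, $M_{\mathcal{L}}\Vert\hat y_L-\hat y_{L,H}\Vert$, is exactly the quantity estimated in Theorem~\ref{thm:diff_mean}, giving $M_{\mathcal{L}}(C_1\delta_1+C_2\delta_2)$ after absorbing $\rho\leq1$.

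I expect the main obstacle to be the bookkeeping that reconciles the $u_B$-based GP with the $u_H$-based hypothesis. One must track carefully which observation vector ($\bm y_L=\bm y_B$ versus the high-fidelity $\bm y$) and which statistics ($\bm\mu_B,\tensor C_B$ versus $\bm\mu_H,\tensor C_H$) enter each intermediate bound, since Theorem~\ref{thm:diff_mean} is stated with $\bm y$ and must be re-read with $\bm y$ replaced by $\bm y_B$; matching the third right-hand-side term (which carries $\tensor C_B^{-1}$ and $\bm\mu_B$ yet $S_H$) to the argument forces a careful reconciliation of the mixed $H/B$ quantities. A secondary subtlety is justifying the recursive decomposition of Step~1 under the co-located design $\bm X_L=\bm X_H$, and isolating the discrepancy GP cleanly so that the constraint-violating kernel $k_d$ enters only through the explicit $\sum_n\Vert\mathcal{L}k_d\Vert$ factor rather than being hidden inside the error constants.
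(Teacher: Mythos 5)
Your proposal is correct in substance but follows a genuinely different route from the paper. The paper's proof is a two-line reduction: it writes $\Vert\mathcal{L}\hat y_B(\bm x)-g(\bm x)\Vert\leq\Vert\mathcal{L}\hat y_H(\bm x)-g(\bm x)\Vert+M_{\mathcal{L}}\Vert\hat y_B(\bm x)-\hat y_H(\bm x)\Vert$, where $\hat y_H$ is the CoPhIK posterior mean built from $u_H(\Gamma)$, then cites Theorem 2.2 of \cite{YangBTT18} (modified via Cauchy--Schwarz, as in Theorem~\ref{thm:biphik_err}) for the first term and, implicitly, Theorem~\ref{thm:diff_mean} for the second. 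You instead decompose the CoBiPhIK predictor itself, $\hat y_B=\rho\hat y_L+\hat y_d$, which is indeed valid here by block inversion of $\wtC$ under the co-located design $\bm X_L=\bm X_H$ (the Schur complement of the $\tensor C_1$ block is exactly $\tensor C_2$, and the cross terms cancel to leave $\bm y_H-\rho\bm y_L-\bm 1\mu_d$ in the discrepancy part); you then bound the discrepancy Kriging piece directly and interpose the $u_H$-based prediction only inside the low-fidelity component. Your route buys two things the paper's does not: (i) your difference term compares two \emph{ordinary} Kriging predictions sharing the same data vector $\bm y_B$, which is exactly the situation Theorem~\ref{thm:diff_mean} covers (after replacing $\bm y$ by $\bm y_B$ in $C_1,C_2$), whereas the paper applies that theorem to the difference of two \emph{CoKriging} posterior means with differently trained $Y_d$ --- a step the paper itself concedes is not rigorously available (``the quantitative error estimate for CoBiPhIK \dots is not available at this time''); (ii) the terms $\Vert\mathcal{L}\mu_d\Vert$ and $\Vert\tensor C_2^{-1}\Vert_2\Vert\bm y_H-\rho\bm y_B-\bm 1\mu_d\Vert_2\sum_{n=1}^N\Vert\mathcal{L}k_d(\bm x,\bm x^{(n)})\Vert$ emerge exactly with the CoBiPhIK quantities named in the statement, whereas the paper's route would produce their CoPhIK counterparts and leave them to be reconciled. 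The residual mismatch you flag is real and not resolvable by either route: your argument yields $\Vert\tensor C_H^{-1}\Vert_2\Vert\bm y_B-\bm\mu_H\Vert_2$ in the third term, while the statement pairs $S_H$ with $\Vert\tensor C_B^{-1}\Vert_2\Vert\bm y_L-\bm\mu_B\Vert_2$; this hybrid of high-fidelity and bifidelity quantities is an inconsistency of the statement itself (the paper's own proof would give all-$H$ quantities there), so the constants should be read as agreeing only up to $\mathcal{O}(\delta_1+\delta_2)$ perturbations rather than exactly.
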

\begin{proof}
  \[\begin{aligned}
    \Vert\mathcal{L}\hat y_B(\bm x) - g(\bm x) \Vert
  \leq \Vert\mathcal{L}\hat y_H(\bm x) -g(\bm x) \Vert + 
  M_{\mathcal{L}}\Vert\hat y_B(\bm x)-\hat y_H(\bm x)\Vert,
\end{aligned} \]
where $\hat y_H(\bm x)$ is the posterior mean by the original CoPhIK method,
i.e., $\{u^m(\bm x)\}_{m=1}^M$ is taken as $u_H(\Gamma)$ in 
Algorithm~\ref{algo:phicok}. Then similar to the proof in 
Theorem~\ref{thm:cobiphik_err}, we set $g(\bm x;\omega)=g(\bm x)$ and use 
Cauchy-Schwartz inequality to slightly modify the upper bound estiamte of 
$\sum_{n=1}^N|a_n|\sigma_H(\bm x^{(n)})$ in Theorem 2.2\ in 
\cite{YangBTT18} to finish the proof.
\end{proof}

\section{Numerical Examples}
\label{sec:numeric}

We present three numerical examples to demonstrate the performance of the
proposed methods. The first two examples are drawn from the previous examples 
in~\cite{YangBTT18} to compare different methods, and they are two-dimensional
problems in physical space. We denote a reference solution, a discretized 
two-dimensional field, as matrix $\tensor F$, the reconstructed field (posterior
mean) as $\tensor F_r$. We present the RMSE $\hat s$, difference 
$\tensor F_r-\tensor F$ and relative error 
$\Vert\tensor F_r-\tensor F\Vert_F/\Vert \tensor F\Vert_F$
($\Vert\cdot\Vert_F$ is the Frobenius norm) to compare different methods. 
Moreover, we adaptively add new observations at the maxima of
$\hat s$ (see Appendix~\ref{sec:append_act}) to numerically study the 
convergence with respect to number of observations. In all three examples, we
use Gaussian kernel in Kriging, CoPhIK, and CoBiPhIK because the fields in the
examples are relatively smooth. 

\subsection{Branin function}
We consider the following modified Branin function \cite{forrester2008engineering}:
\begin{equation}
\label{eq:branin_fun}
f(\bm x) = a(\bar y-b\bar x^2+c\bar x-r)^2 + g(1-p)\cos(\bar x)+g + qx,
\end{equation}
where $\bm x=(x, y)$,
\[\bar x=15x-5,~\bar y=15y,~(x,y)\in D=[0,1]\times [0,1],\]
and
\[a=1,~b=\frac{5.1}{4\pi^2},~c=\frac{5}{\pi},~r=6,~g=10,~p=\frac{1}{8\pi},~q=5.\]
The contour of $f$ and eight randomly chosen observation locations
$\{(0.1,0.225),\\ (0.475,0.2), (0.625,0.5), (0.675,0.55), (0.7,0), (0.775,0.1), (0.8,0.9), (0.925,0.9)\}$
are \\ presented in Fig.~\ref{fig:branin_truth}. The function $f$ is evaluated on 
a $41\times 41$ uniform grid. 
\begin{figure}[!h]
\centering
\includegraphics[width=0.40\textwidth]{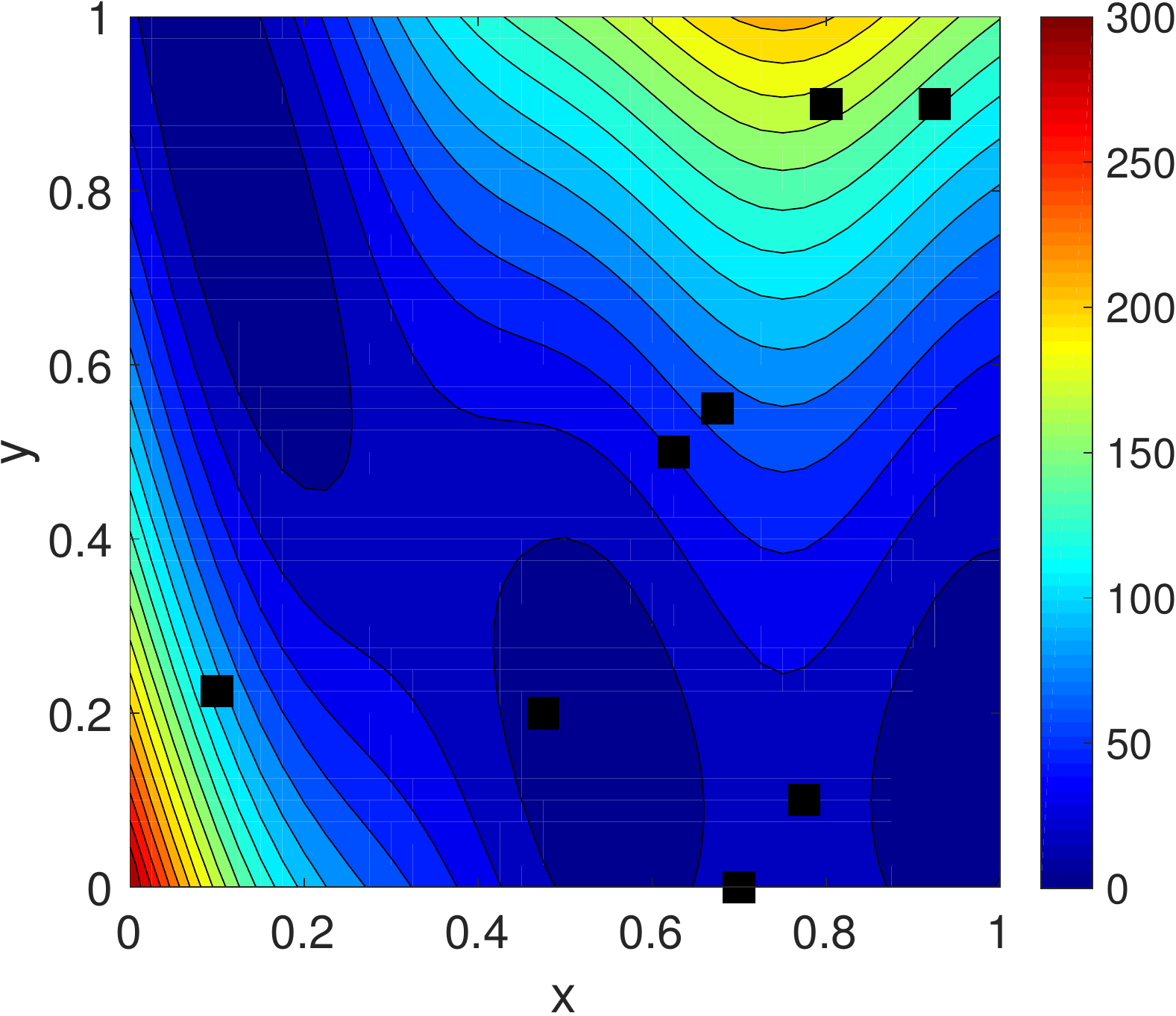}
\caption{Contours of modified Branin function (on $41\times 41$ uniform grids)
         and locations of eight observations (black squares).}
\label{fig:branin_truth}
\end{figure}

We assume that based on ``domain knowledge",  $f(\bm x)$ is partially known.
Specifically, its form is known but the coefficients $b$ and $q$ are unknown. 
Then, we treat these coefficients as random fields $\hat b$ and $\hat q$, and we
also modify the second $g$ as $\hat g$, which indicates that the field $f$ is 
described by a random function $\hat f: D\times\Omega\rightarrow\mathbb{R}$:
\begin{equation}
\label{eq:branin_rf}
  \hat f(\bm x;\omega) = a(\bar y-\hat b(\bm x;\omega)\bar x^2+c\bar x-r)^2 
      + g(1-p)\cos(\bar x)+ \hat g + \hat q(\bm x;\omega)x,
\end{equation}
where $\hat g=20$,
\begin{align}
  \hat b(\bm x;\omega) 
  & = b\bigg\{0.9+\dfrac{0.2}{\pi}\sum_{i=1}^3\bigg[
  \dfrac{1}{4i-1}\sin((2i-0.5)\pi x)\xi_{2i-1}(\omega)\notag  
    + \dfrac{1}{4i+1}\sin((2i+0.5)\pi y)\xi_{2i}(\omega)\bigg]\bigg\}, \\
  \hat q(\bm x;\omega) 
  & = q\bigg\{1.0+\dfrac{0.6}{\pi}\sum_{i=1}^3\bigg[
  \dfrac{1}{4i-3}\cos((2i-1.5)\pi x)\xi_{2i+5}(\omega)\notag 
   + \dfrac{1}{4i-1}\cos((2i-0.5)\pi y)\xi_{2i+6}(\omega)\bigg]\bigg\},
\end{align}
and $\{\xi_i(\omega)\}_{i=1}^{12}$ are i.i.d. Gaussian random variables with 
zero mean and unit variance. We use this ``physical knowledge" to compute the 
mean and covariance function of $\hat{f}$ by generating $M=300$ samples of 
$\xi_i(\omega)$ and evaluating $\hat f$ on the $21\times 21$ uniform grid for 
each sample of $\xi_i(\omega)$ to obtain realization ensemble $u_L(\Gamma)$.
We set $M_H=21$ to construct $\gamma$ and subsequently evaluate $\hat f$ on a
$41\times 41$ uniform grid to obtain $u_H(\gamma)$. Finally, we construct 
$u_B(\Gamma)$ based on $u_H(\gamma)$ and $u_L(\Gamma)$ 
(Algorithm~\ref{algo:bifi}), and use it in BiPhIK and CoBiPhIK.

It is shown in~\cite{YangTT18, YangBTT18} that Kriging results in inaccurate
reconstruction of $\tensor F$ by using the eight observation data. The results 
of PhIK and BiPhIK are very similar in this case, and we present the latter in
Fig.~\ref{fig:branin_bphik}. These results are much better than the Kriging 
(see also the quantitative comparison in Fig.~\ref{fig:branin_rel_err}).
\begin{figure}[!h]
\centering
\subfigure[BiPhIK $\tensor F_r$]{
\includegraphics[height=0.20\textwidth]{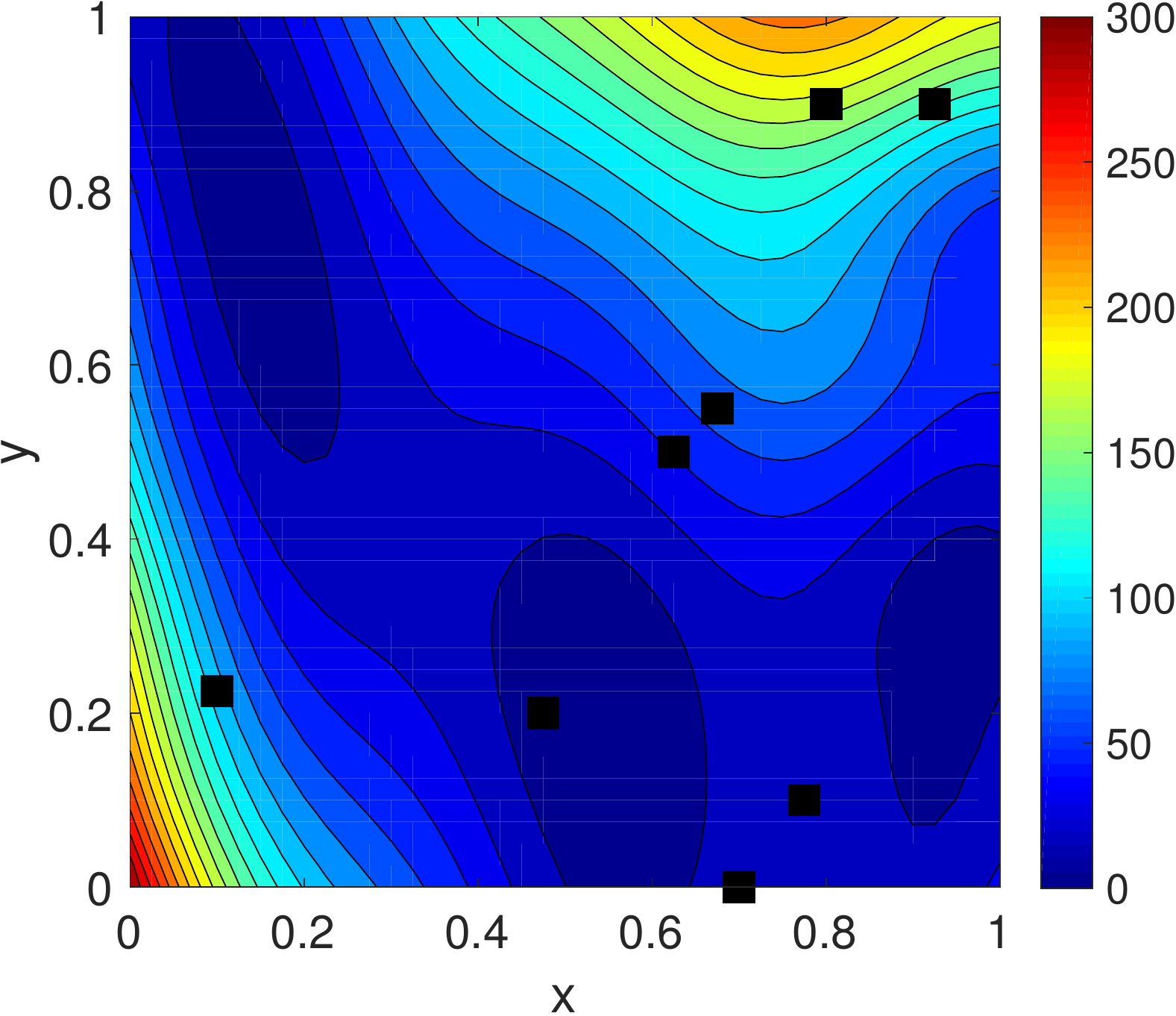}}\qquad
\subfigure[BiPhIK $\hat s$]{
\includegraphics[height=0.20\textwidth]{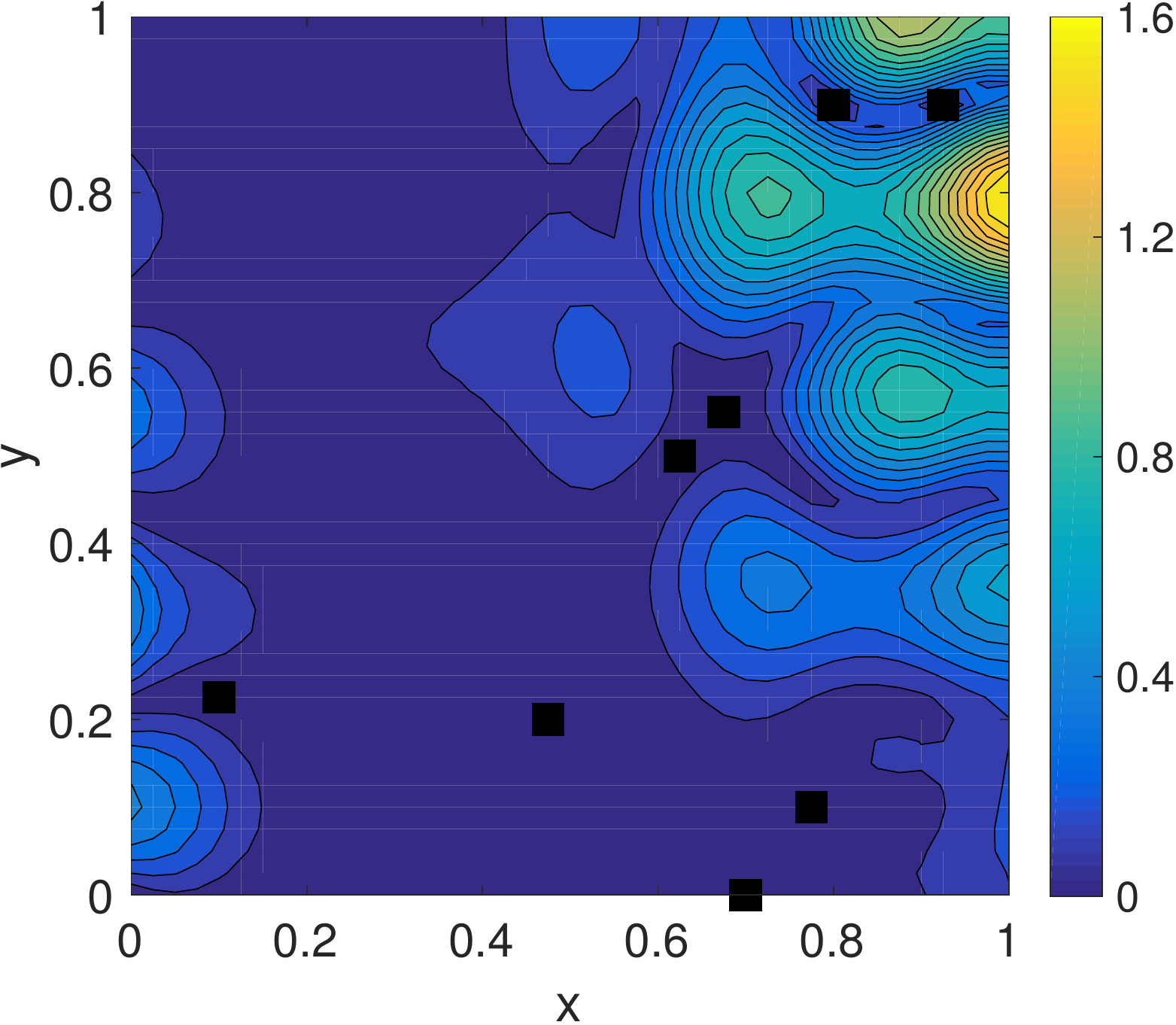}}\qquad
\subfigure[BiPhIK $\tensor F_r-\tensor F$]{
\includegraphics[height=0.20\textwidth]{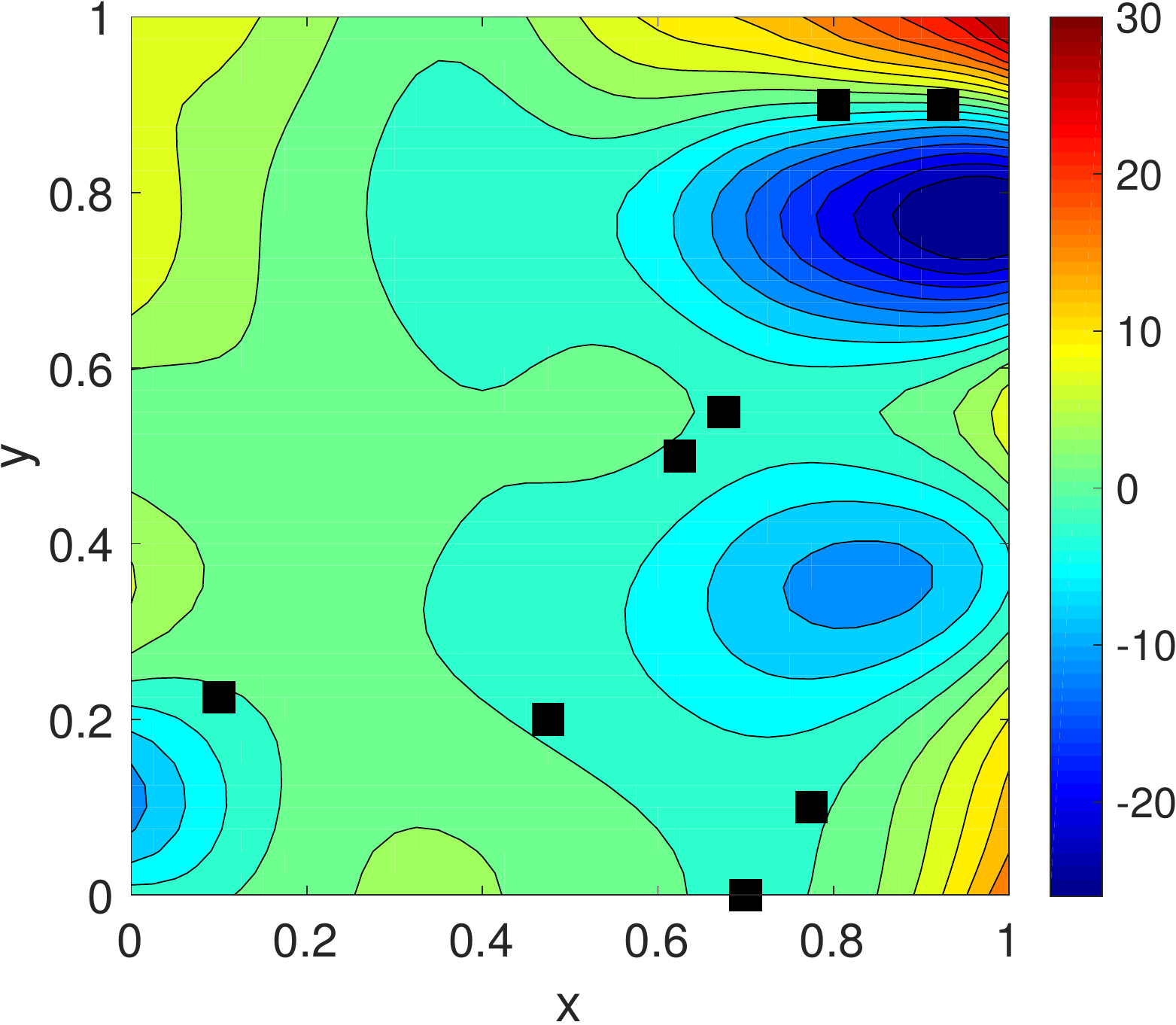}} 
\caption{Reconstruction of the modified Branin function by BiPhIK with eight
original observations (squares).}
\label{fig:branin_bphik}
\end{figure}
There are slight difference betwen the results from CoPhIK and CoBiPhIK as shown in
Fig.~\ref{fig:branin_bphicok}.
\begin{figure}[!h]
\centering
\subfigure[CoPhIK $\tensor F_r$]{
\includegraphics[height=0.20\textwidth]{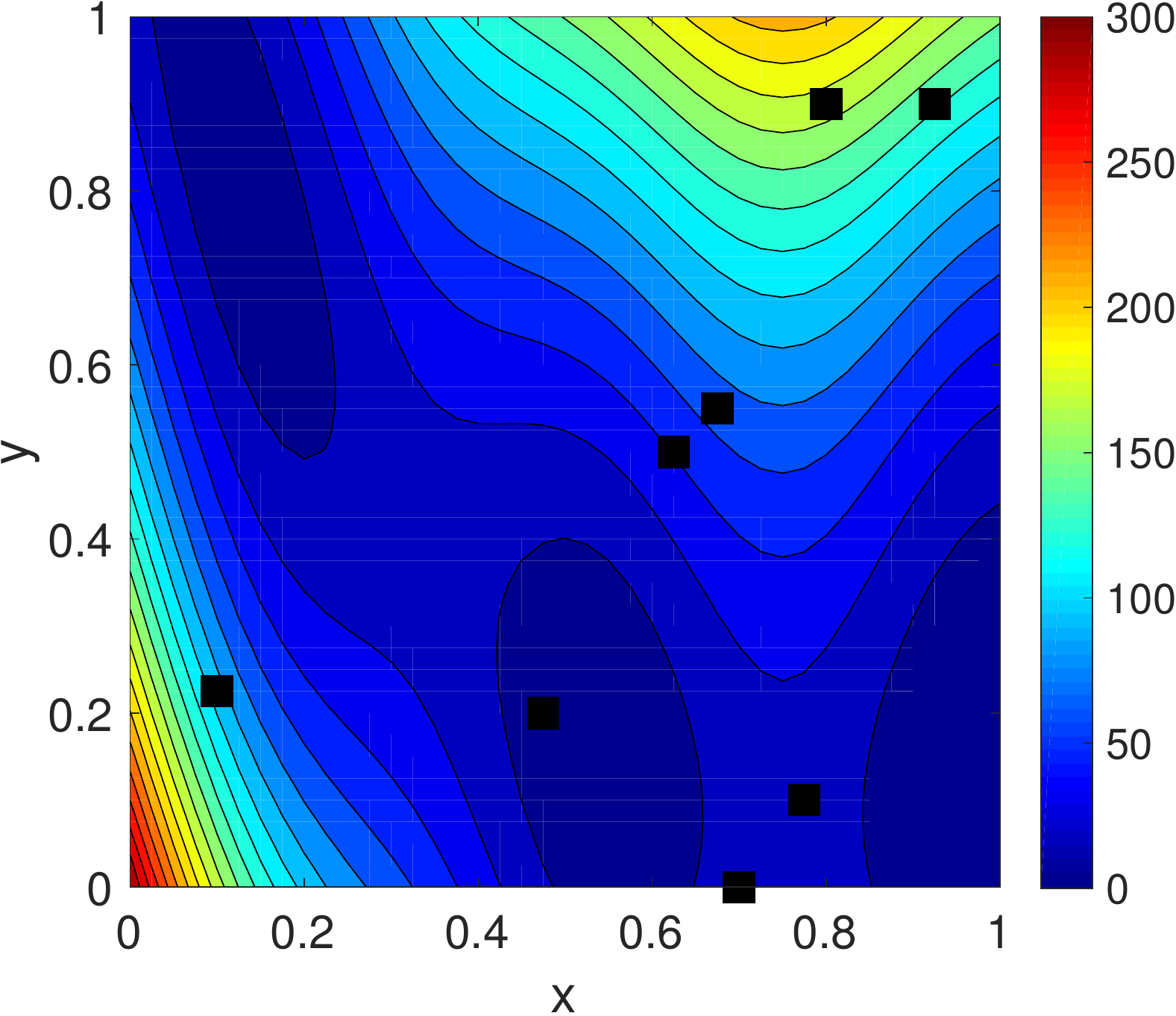}}\qquad
\subfigure[CoPhIK $\hat s$]{
\includegraphics[height=0.20\textwidth]{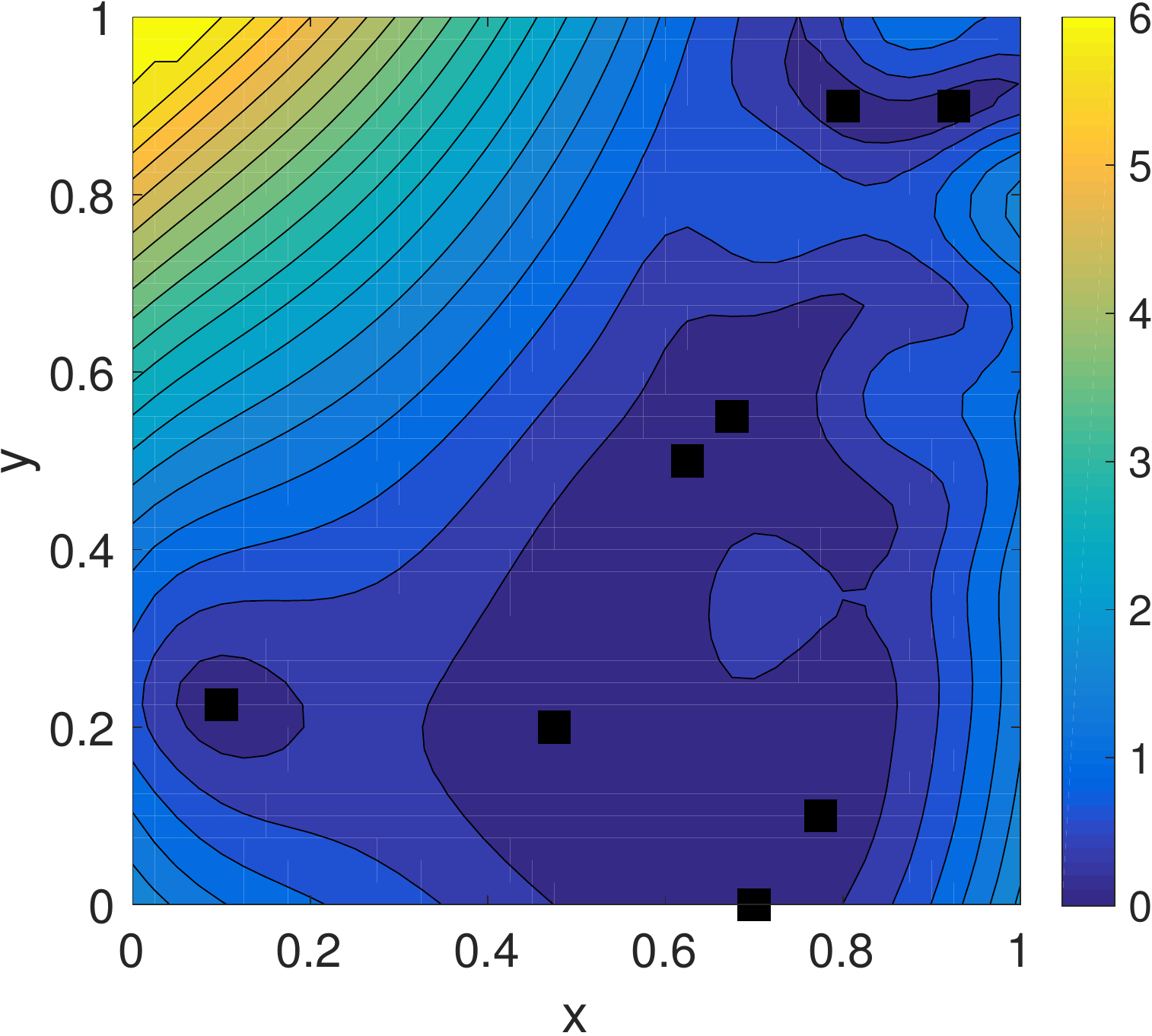}}\qquad
\subfigure[CoPhIK $\tensor F_r-\tensor F$]{
\includegraphics[height=0.20\textwidth]{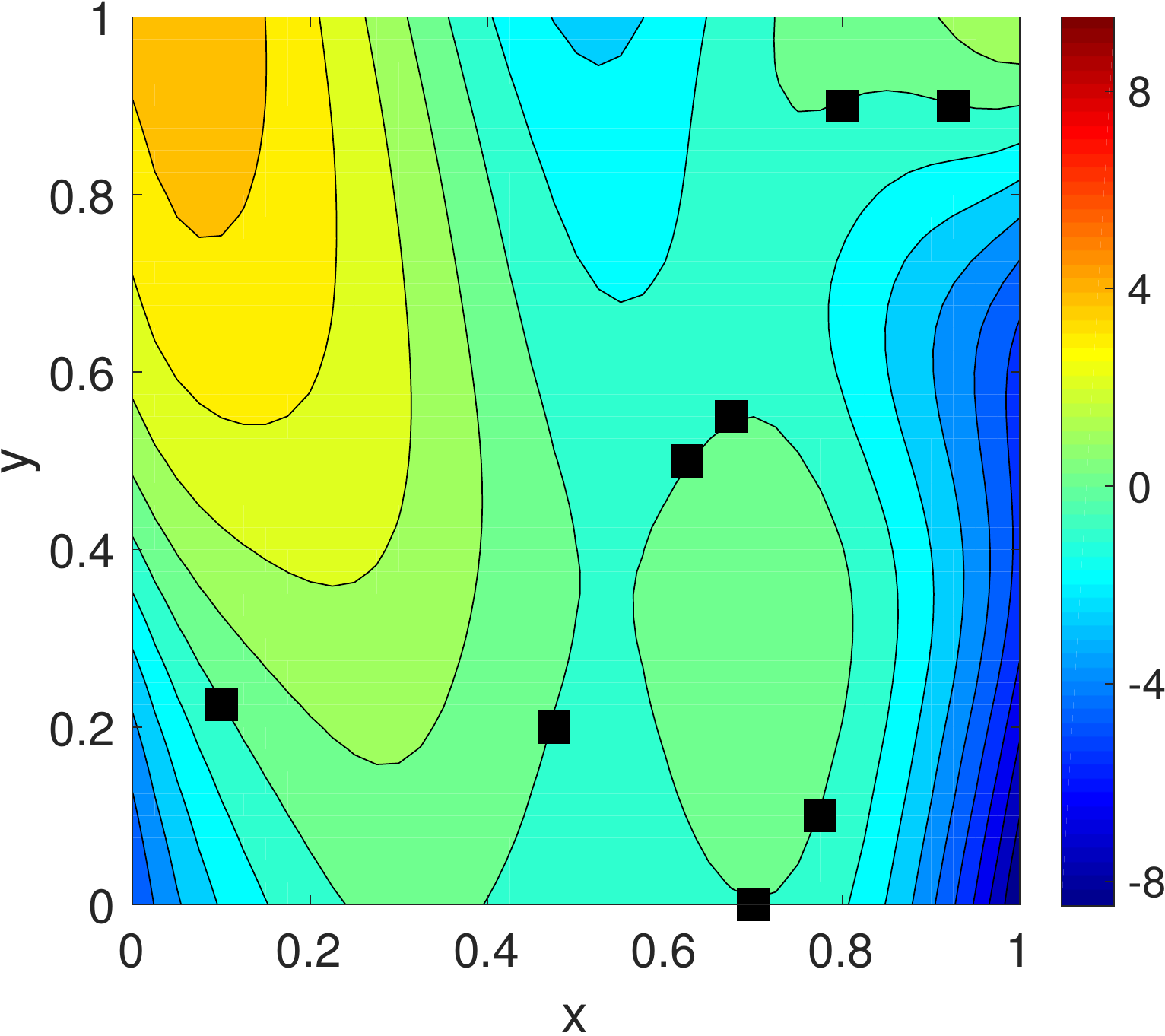}} \\
\subfigure[CoBiPhIK $\tensor F_r$]{
\includegraphics[height=0.20\textwidth]{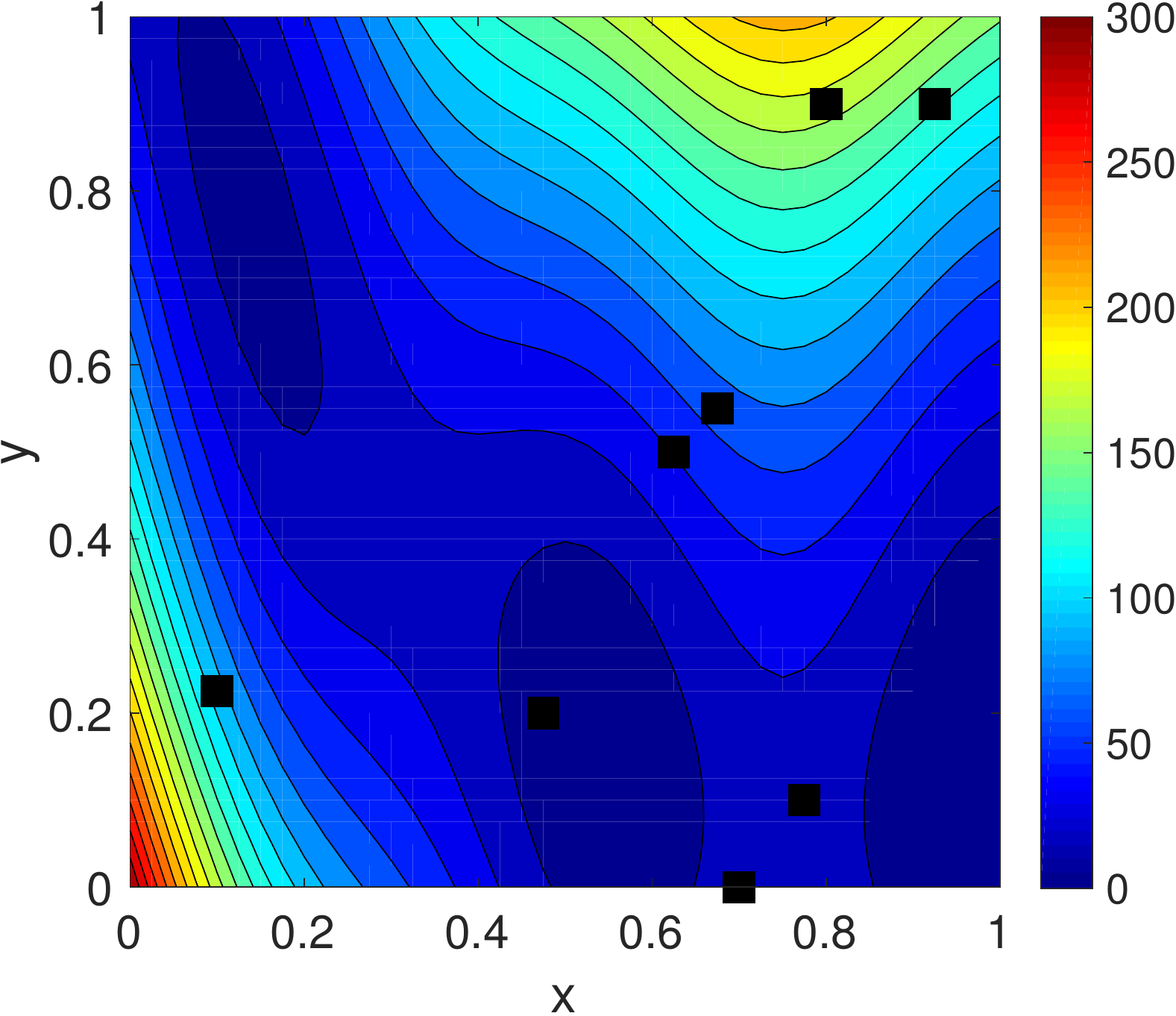}}\qquad
\subfigure[CoBiPhIK $\hat s$]{
\includegraphics[height=0.20\textwidth]{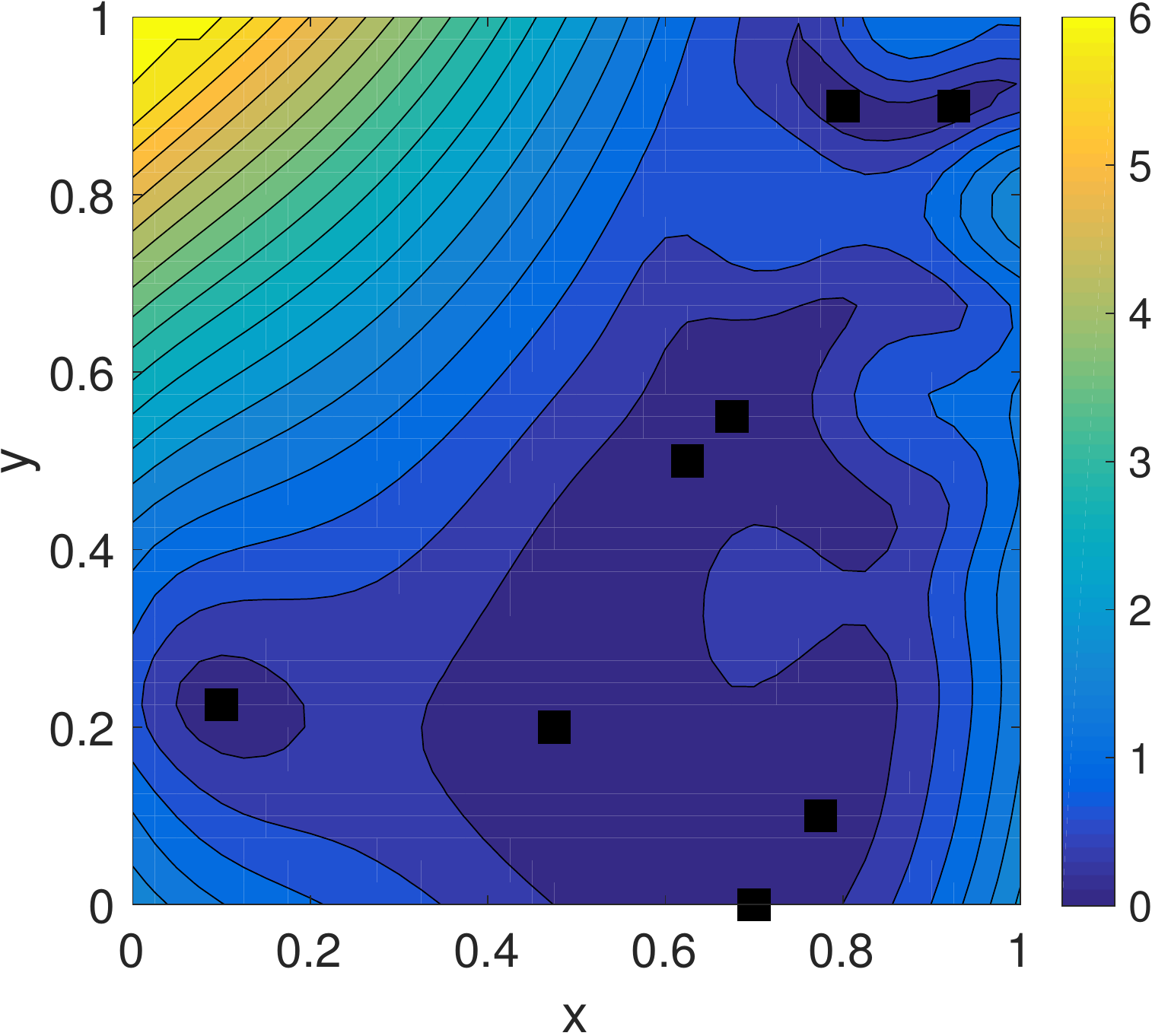}}\qquad
\subfigure[CoBiPhIK $\tensor F_r-\tensor F$]{
\includegraphics[height=0.20\textwidth]{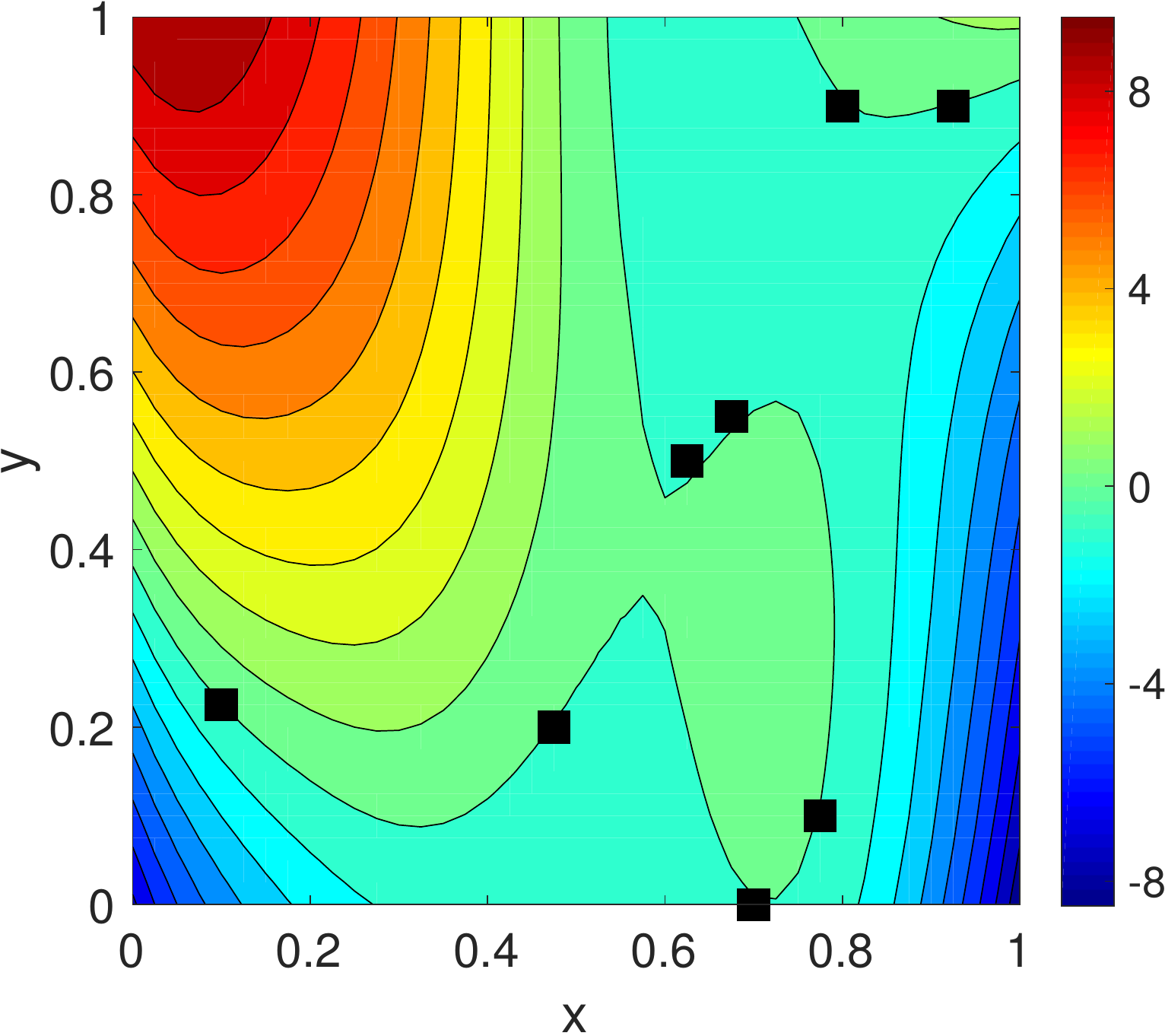}}
\caption{Reconstruction of the modified Branin function by CoPhIK and Co-BiPhIK with eight
original observations (squares).}
\label{fig:branin_bphicok}
\end{figure}

We then use a greedy algorithm (in the appendix) that acquires additional observations of the 
exact field one by one.
Fig.~\ref{fig:branin_bphik_act} presents the comparison of PhIK and BiPhIK
when eight additional observations (marked as black stars) are added, which 
shows a slight difference in the location of additional observations and small 
discrepancies between the results. Also, Fig.~\ref{fig:branin_bphicok_act}
illustrates the difference between CoPhIK and CoBiPhIK, and we see more
significant differences in the pattern of $\hat s$ and $\bm F_r-\bm F$ than the
comparison between PhIK and BiPhIK.
\begin{figure}[!h]
\centering
\subfigure[PhIK $\tensor F_r$]{
\includegraphics[height=0.20\textwidth]{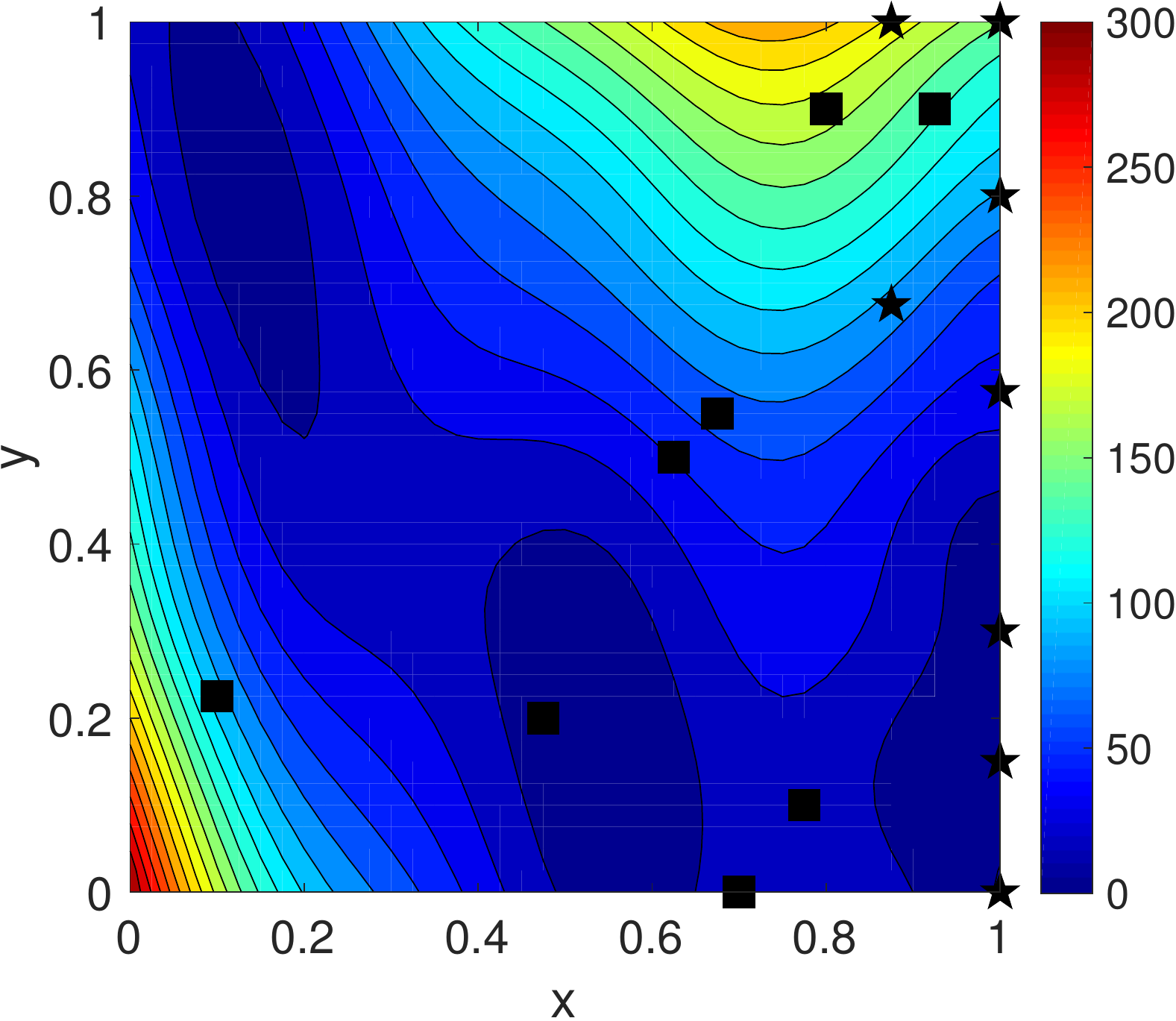}}\qquad
\subfigure[PhIK $\hat s$]{
\includegraphics[height=0.20\textwidth]{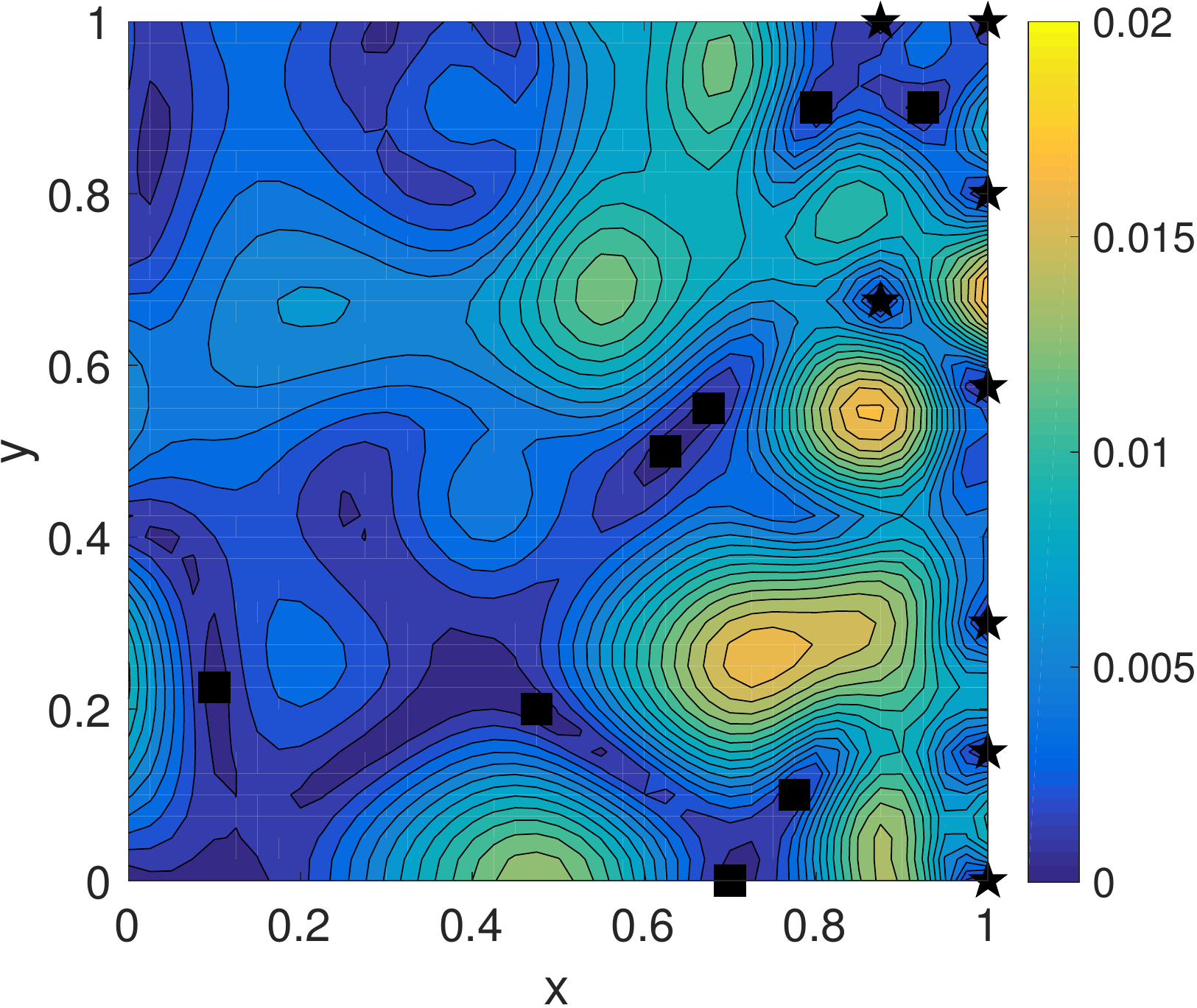}}\qquad
\subfigure[PhIK $\tensor F_r-\tensor F$]{
\includegraphics[height=0.20\textwidth]{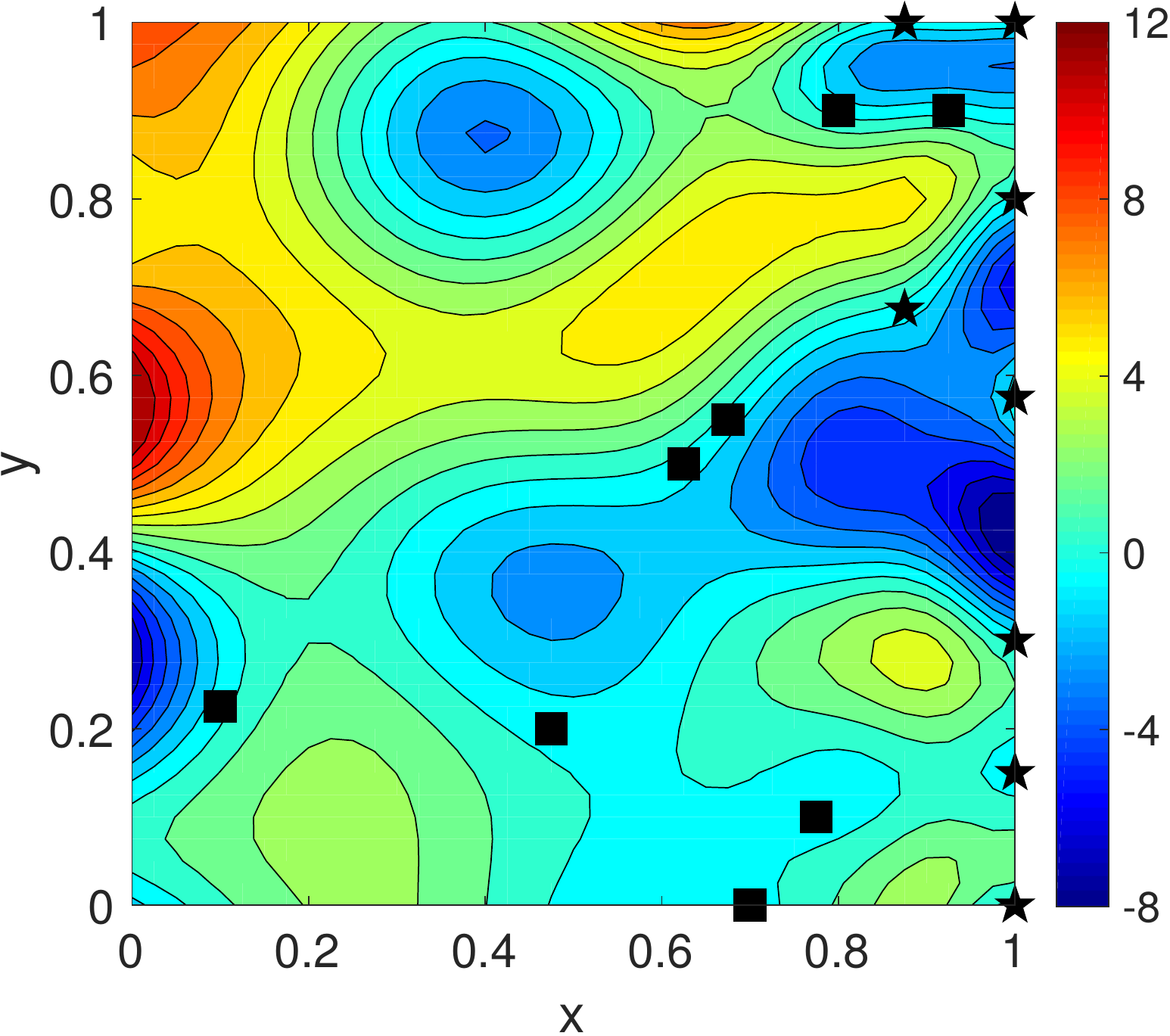}} \\
\subfigure[BiPhIK $\tensor F_r$]{
\includegraphics[height=0.20\textwidth]{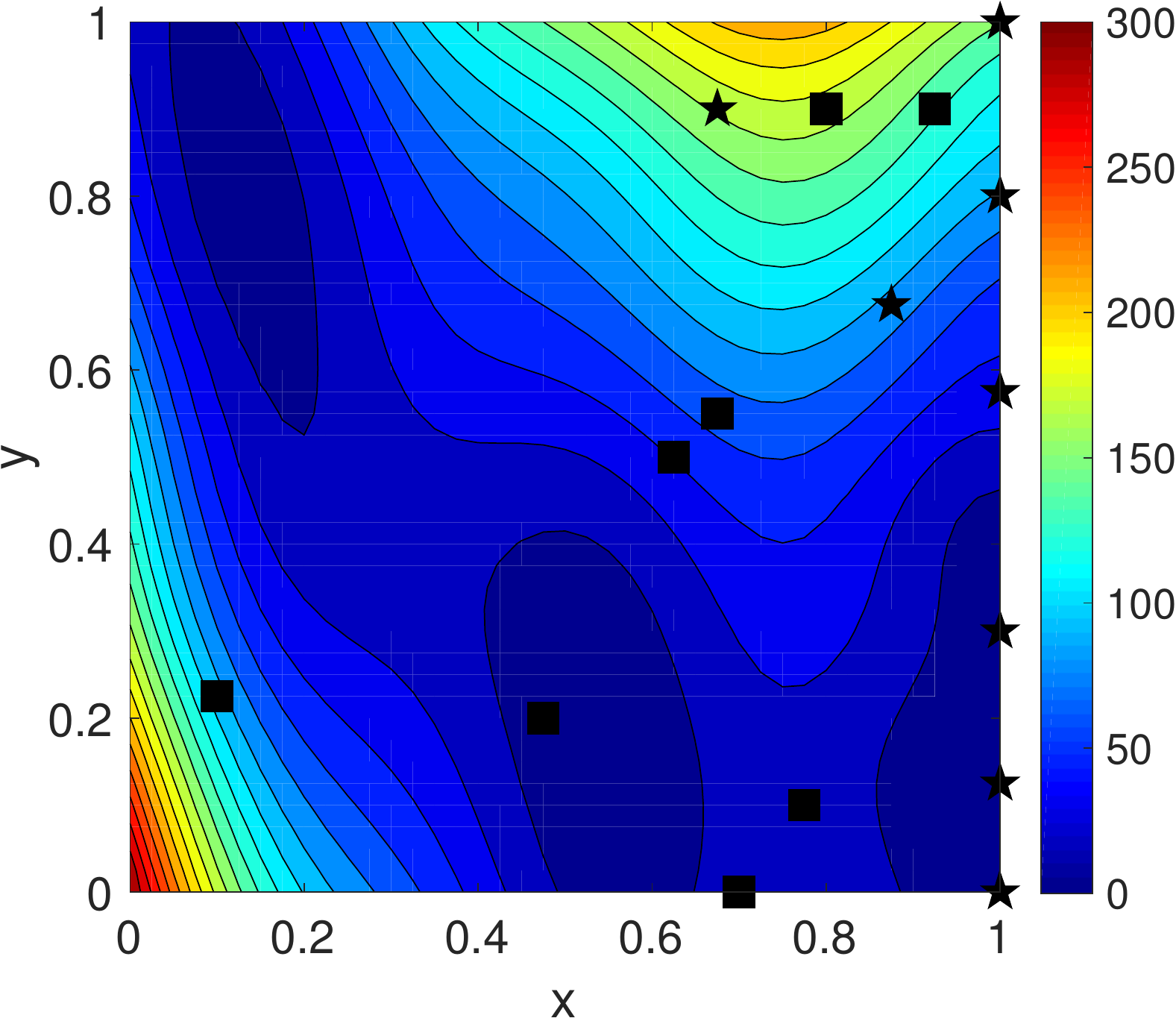}}\qquad
\subfigure[BiPhIK $\hat s$]{
\includegraphics[height=0.20\textwidth]{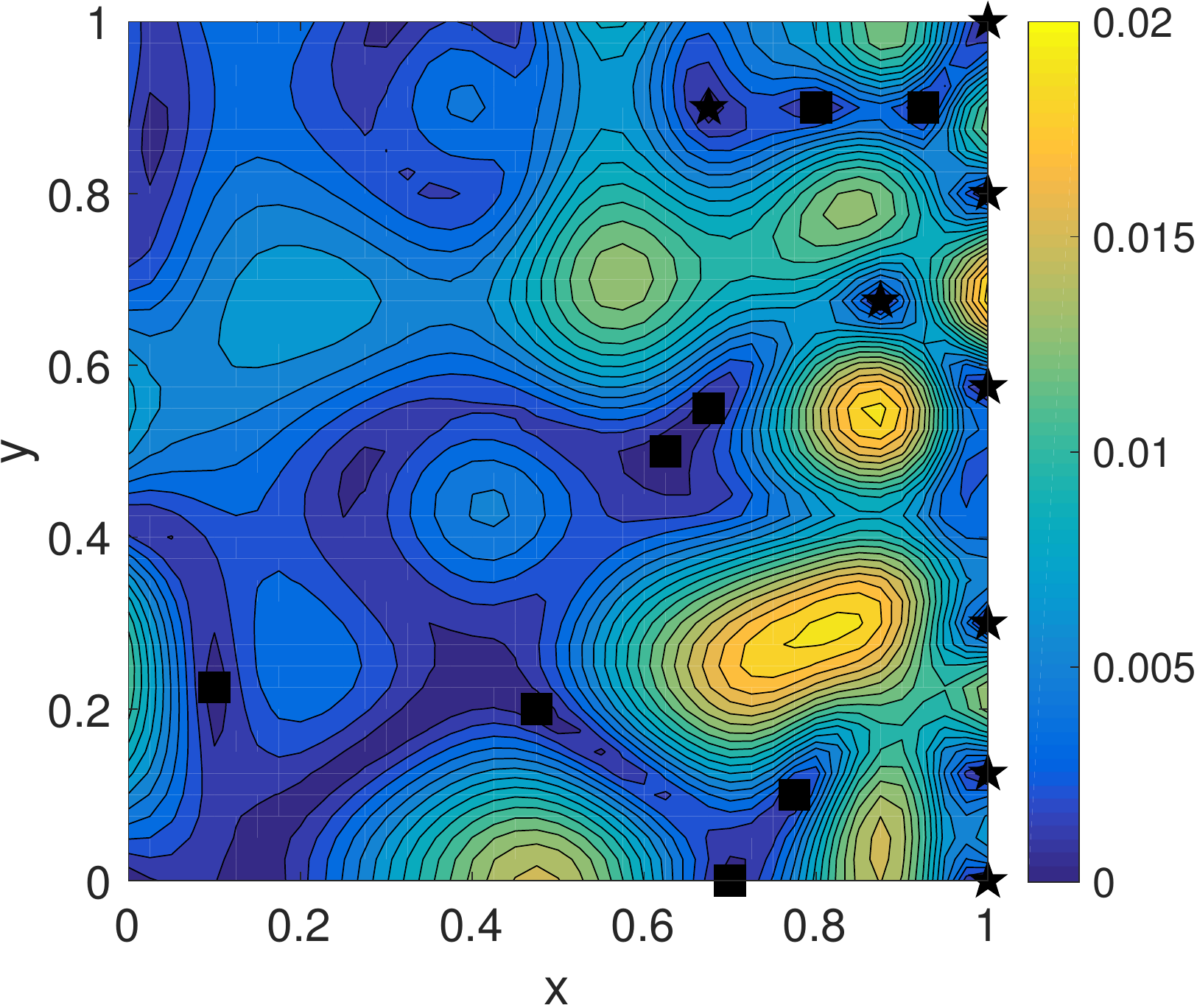}}\qquad
\subfigure[BiPhIK $\tensor F_r-\tensor F$]{
\includegraphics[height=0.20\textwidth]{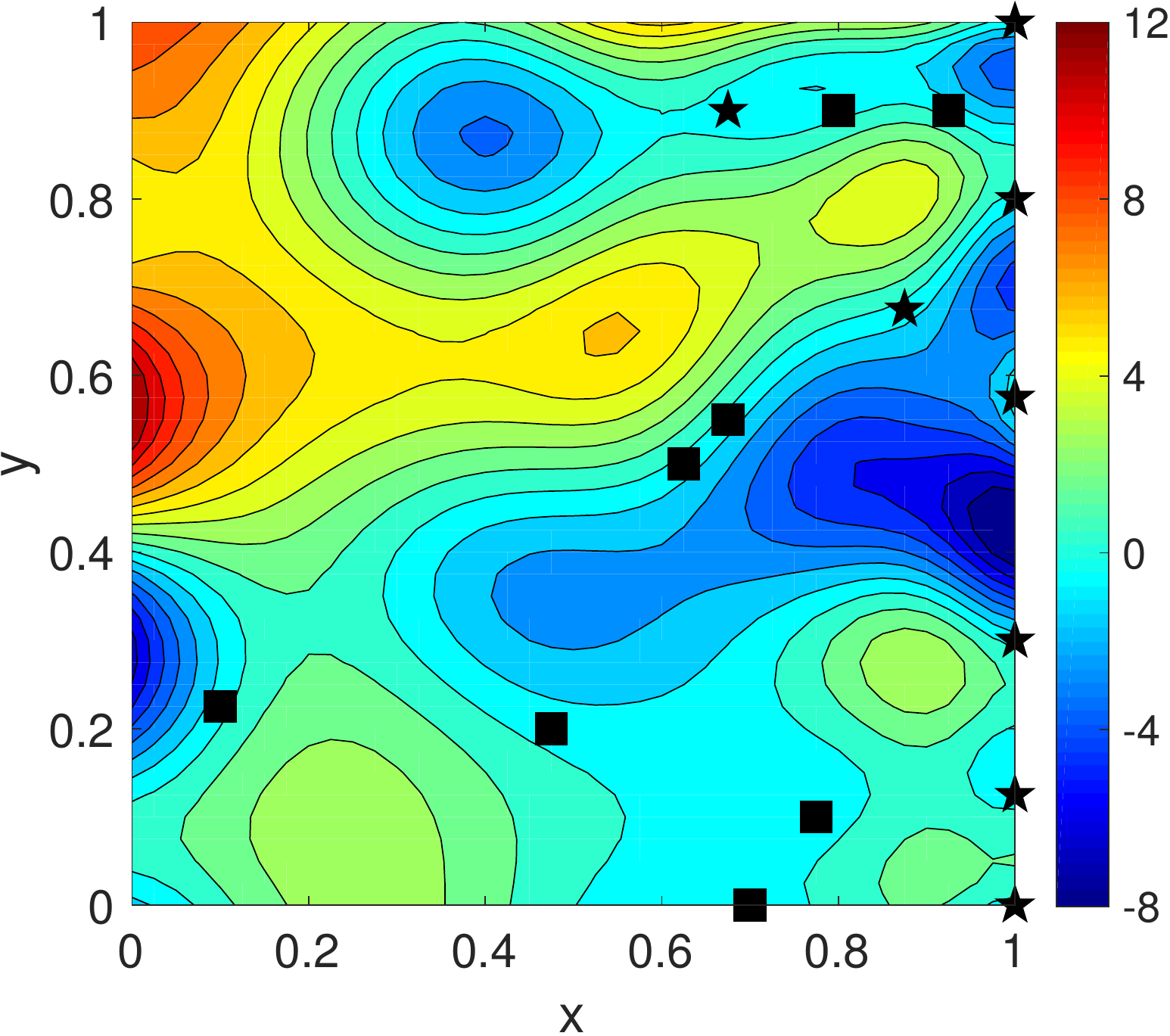}}
\caption{Reconstruction of the modified Branin function by PhIK and BiPhIK with eight
original observations (squares) and eight additional observations (stars).}
\label{fig:branin_bphik_act}
\end{figure}
\begin{figure}[!h]
\centering
\subfigure[CoPhIK $\tensor F_r$]{
\includegraphics[height=0.20\textwidth]{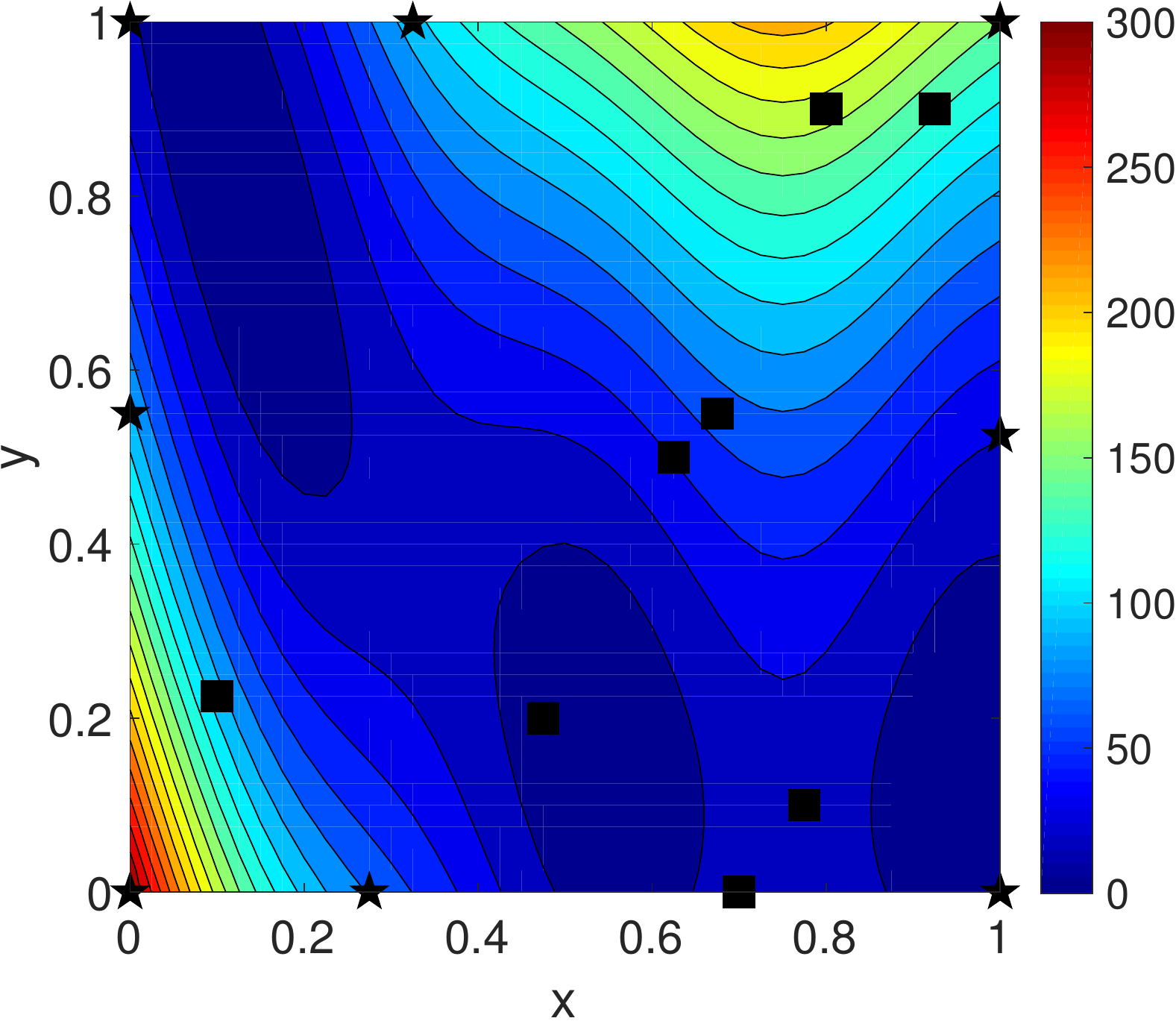}}\qquad
\subfigure[CoPhIK $\hat s$]{
\includegraphics[height=0.20\textwidth]{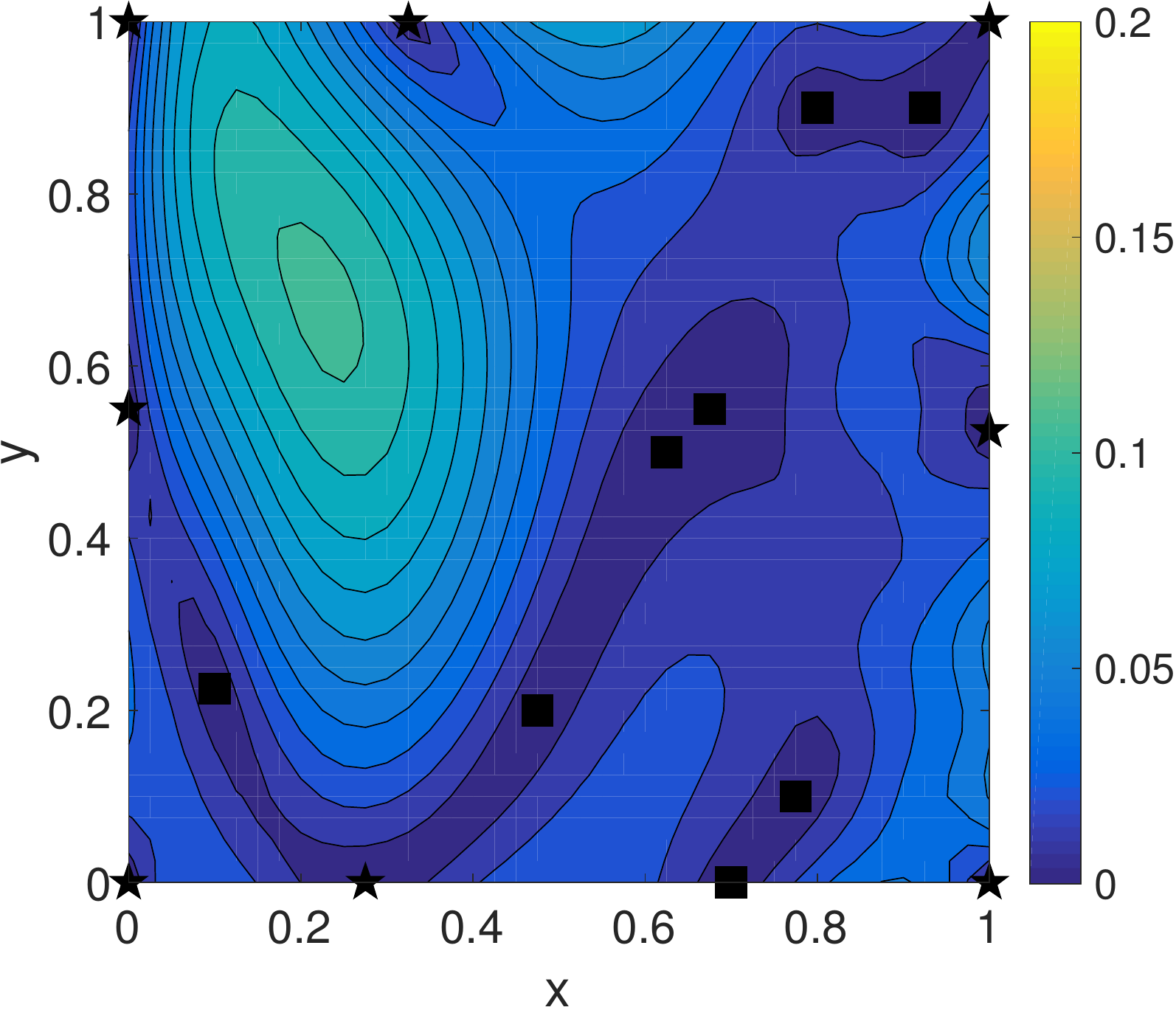}}\qquad
\subfigure[CoPhIK $\tensor F_r-\tensor F$]{
\includegraphics[height=0.20\textwidth]{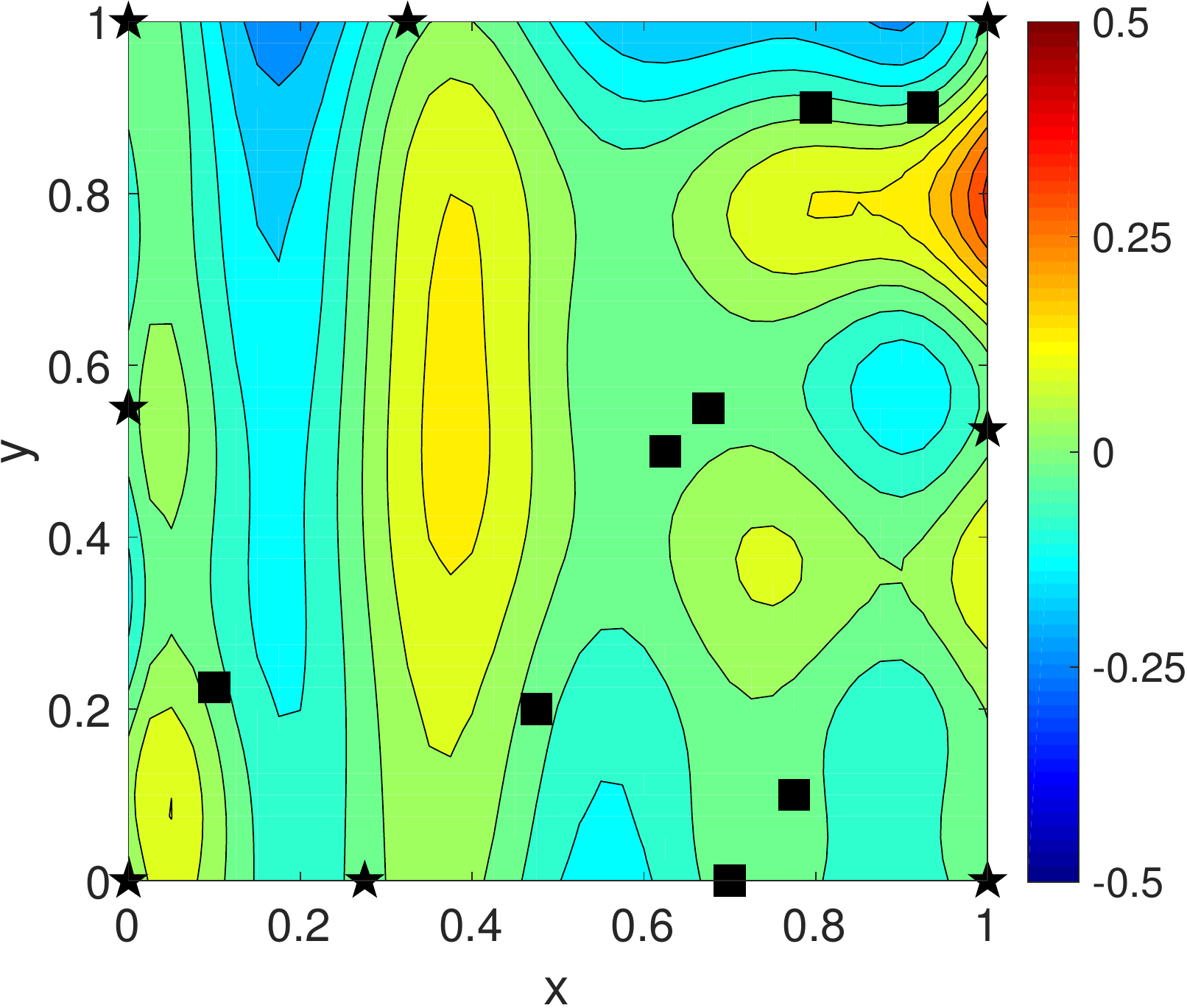}} \\
\subfigure[CoBiPhIK $\tensor F_r$]{
\includegraphics[height=0.20\textwidth]{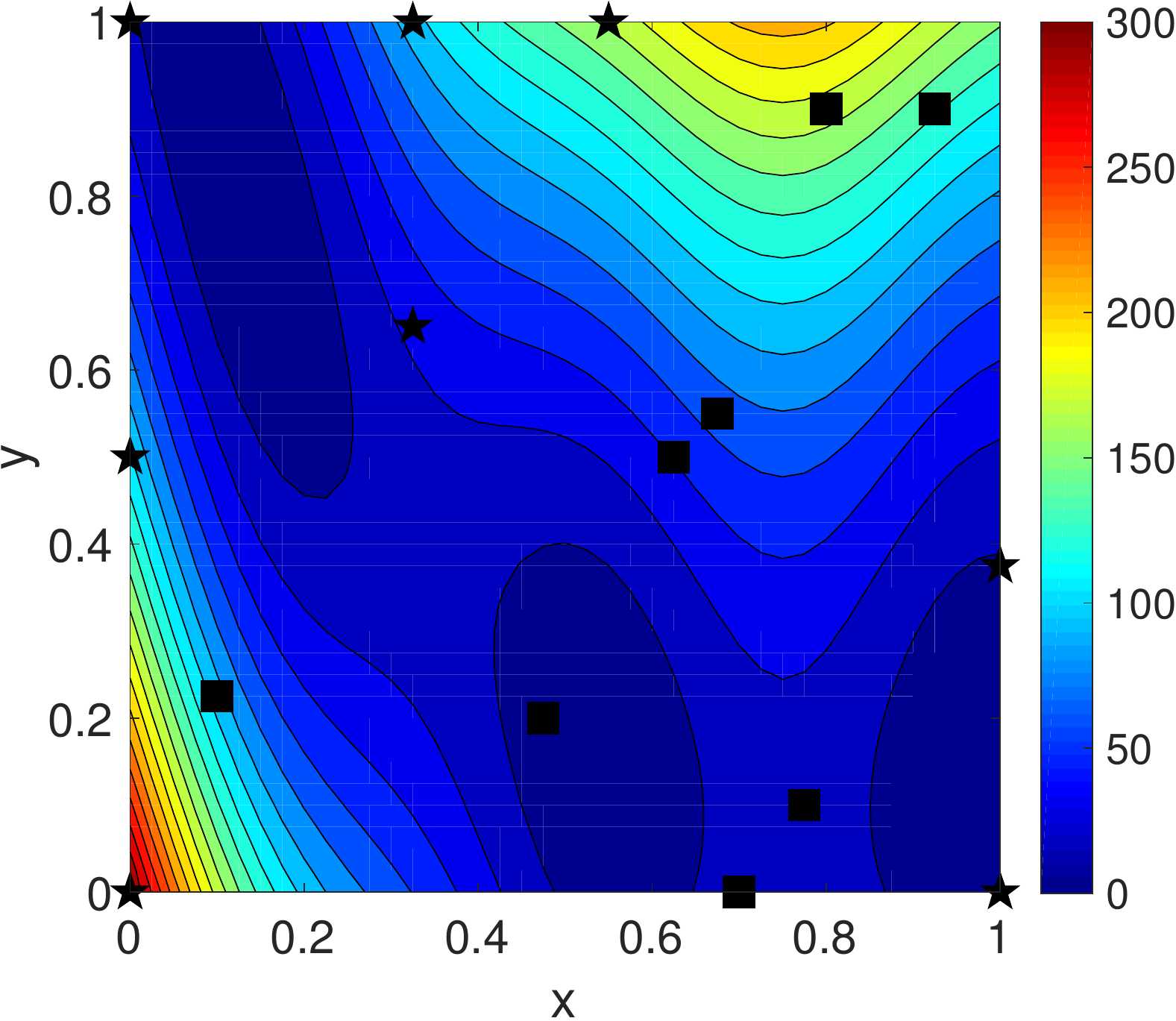}}\qquad
\subfigure[CoBiPhIK $\hat s$]{
\includegraphics[height=0.20\textwidth]{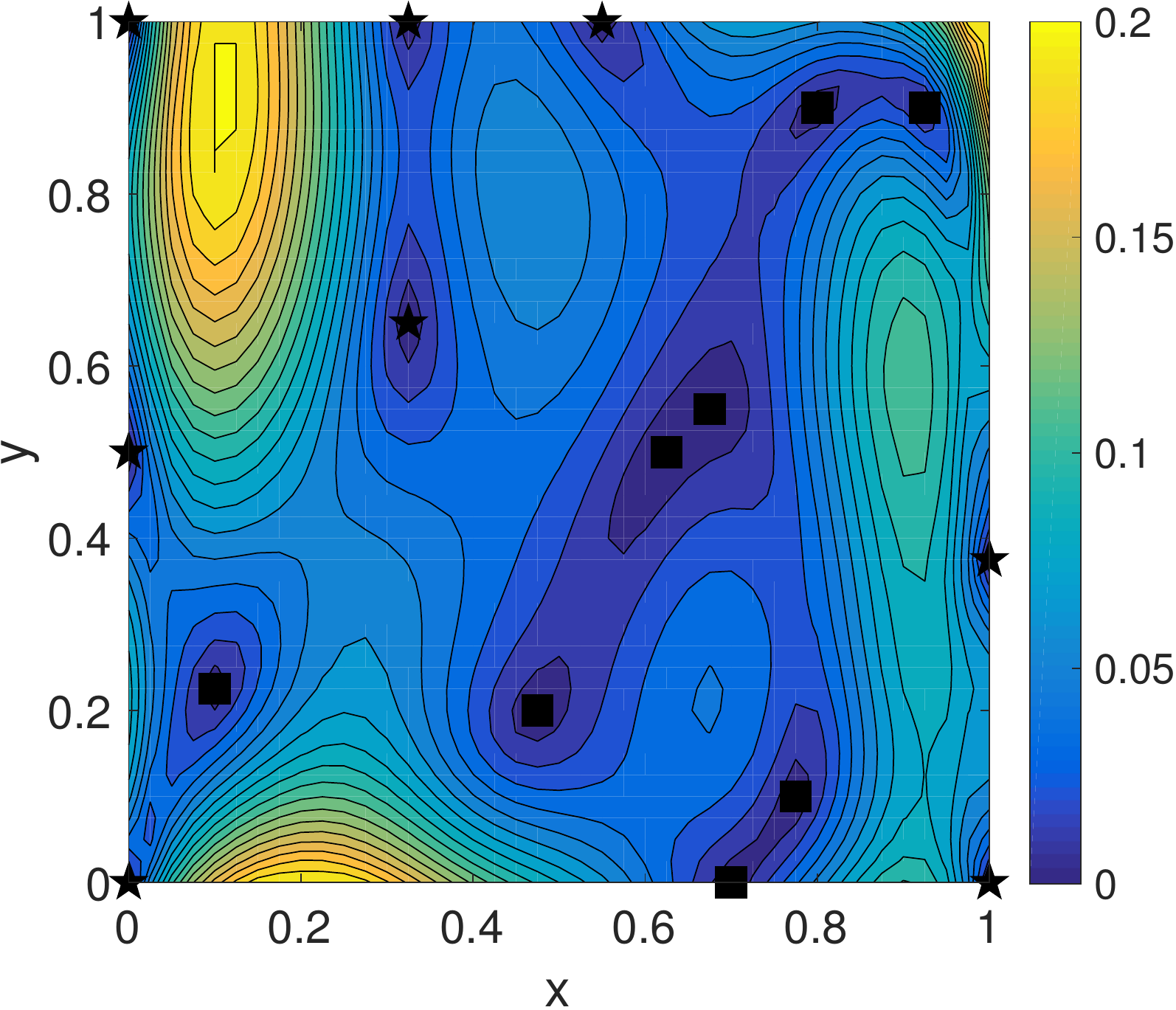}}\qquad
\subfigure[CoBiPhIK $\tensor F_r-\tensor F$]{
\includegraphics[height=0.20\textwidth]{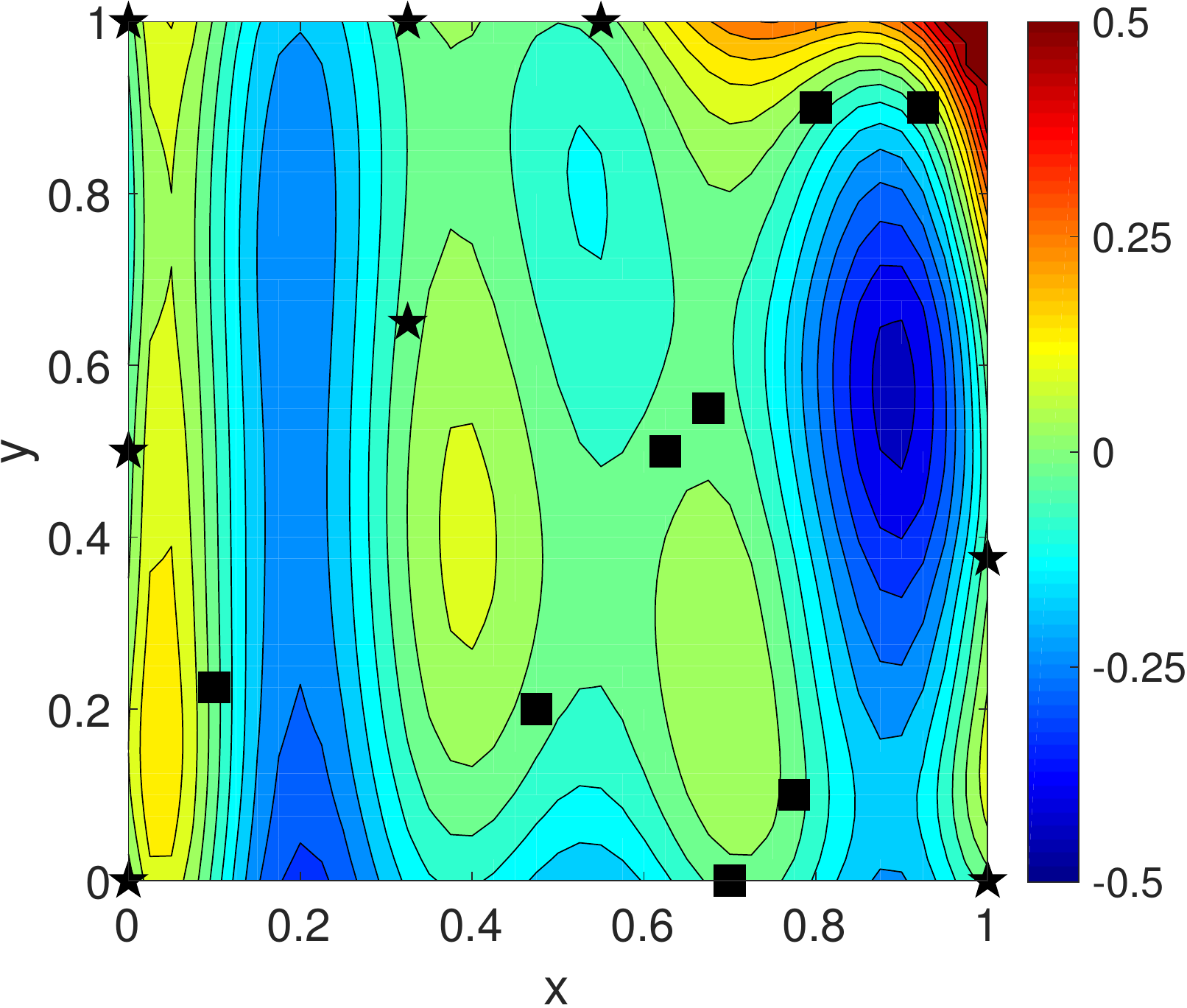}}
\caption{Reconstruction of the modified Branin function by CoPhIK and CoBiPhIK with eight
original observations (squares) and eight additional observations (stars).}
\label{fig:branin_bphicok_act}
\end{figure}

Fig.~\ref{fig:branin_rel_err} presents a quantitative study of the difference 
between the posterior mean and the reference solution with respect to the total
number of observation data. The results are consistent with 
Fig.s~\ref{fig:branin_bphik}-\ref{fig:branin_bphicok_act} in that the 
difference between PhIK and BiPhIK is very small while the difference between 
CoPhIK and CoBiPhIK is larger. We note that the latter is still very small 
ranging from $\mathcal{O}(10^{-3})$ to $\mathcal{O}(10^{-2})$ depending on the
number of observations. This is because $u_B(\Gamma)$ approximates $u_H(\Gamma)$
very well in this case. Specifically, $\delta_1=0.0279$ and $\delta_2=0.0012$.
\begin{figure}[!h]
\centering
\includegraphics[width=0.40\textwidth]{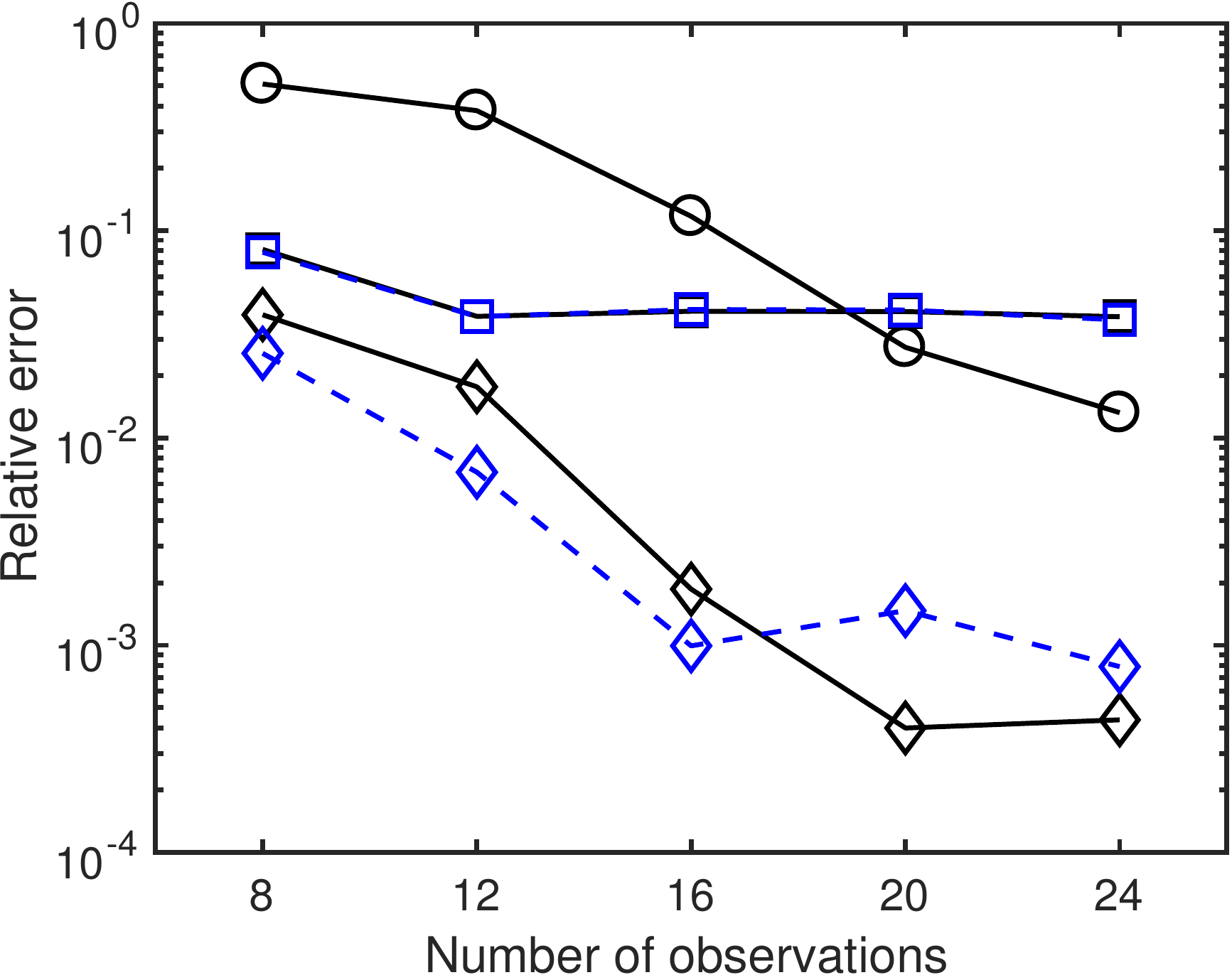}
\caption{Relative error of reconstructed modified Branin function 
  $\Vert\bm F_r-\bm F\Vert_F/\Vert\bm F\Vert_F$ using Kriging (``$\,\circ$"), 
  PhIK (blue ``$\,\square$"), BiPhIK (black ``$\,\square$"), CoPhIK 
  (blue ``$\,\diamond$") and CoBiPhIK (black ``$\,\diamond$") with different numbers
  of total observations via active learning.}
\label{fig:branin_rel_err}
\end{figure}


\subsection{Heat transfer}

In the second example, we consider the steady state of a heat transfer problem.
The nondimesionalized heat equation is given as
\begin{equation}
  \label{eq:heat}
  \dfrac{\partial T}{\partial t} - \nabla\cdot(\kappa(T) \nabla T) = 0, \quad
  \bm x\in D,
\end{equation}
where $T(\bm x, t)$ is the temperature, and the heat conductivity $\kappa$ is 
set as a function of $T$. The computational domain $D$ is a rectangule
$[-0.5, 0.5]\times [-0.2, 0.2]$ with two circular cavities $R_1(O_1, r_1)$ and 
$R_2(O_2, r_2)$, where $O_1=(-0.3,0), O_2=(0.2,0), r_1=0.1, r_2=0.15$ (see 
Fig.~\ref{fig:heat_truth}). The boundary conditions are given as follows:
\begin{equation}
\begin{dcases}
    T = -30\cos(2\pi x)+40, & x\in\Gamma_1; \\
    \dfrac{\partial T}{\partial\bm n} = -20, & x\in \Gamma_2;  \\
    T = 30\cos(2\pi(x+0.1))+40, & x\in\Gamma_3; \\
    \dfrac{\partial T}{\partial\bm n} = 20, & x\in \Gamma_4; \\
    \dfrac{\partial T}{\partial\bm n} = 0, & x\in \Gamma_5. 
\end{dcases}
\end{equation}
The ``real" conductivity is set as 
\begin{equation}
  \label{eq:heat_cond_true}
 \kappa(T)=1.0+\exp(0.02T),
\end{equation}
and the profile of the steady state temperature is presented in 
Fig.~\ref{fig:heat_truth}. This solution is obtained by the finite element 
method with unstructured triangular mesh using MATLAB PDE toolbox, and the 
degree of freedom (DOF) is $1319$ (maximum grid size is $0.02$). The observations of
this exact profile (denoted as $\tensor F$) are collected at six locations 
$\{(-0.4,\pm 0.1), (-0.05,\pm 0.1), (0.4,\pm 0.1)\}$ (black squares in 
Fig.~\ref{fig:heat_truth}).
\begin{figure}[!h]
\centering
\includegraphics[width=0.45\textwidth]{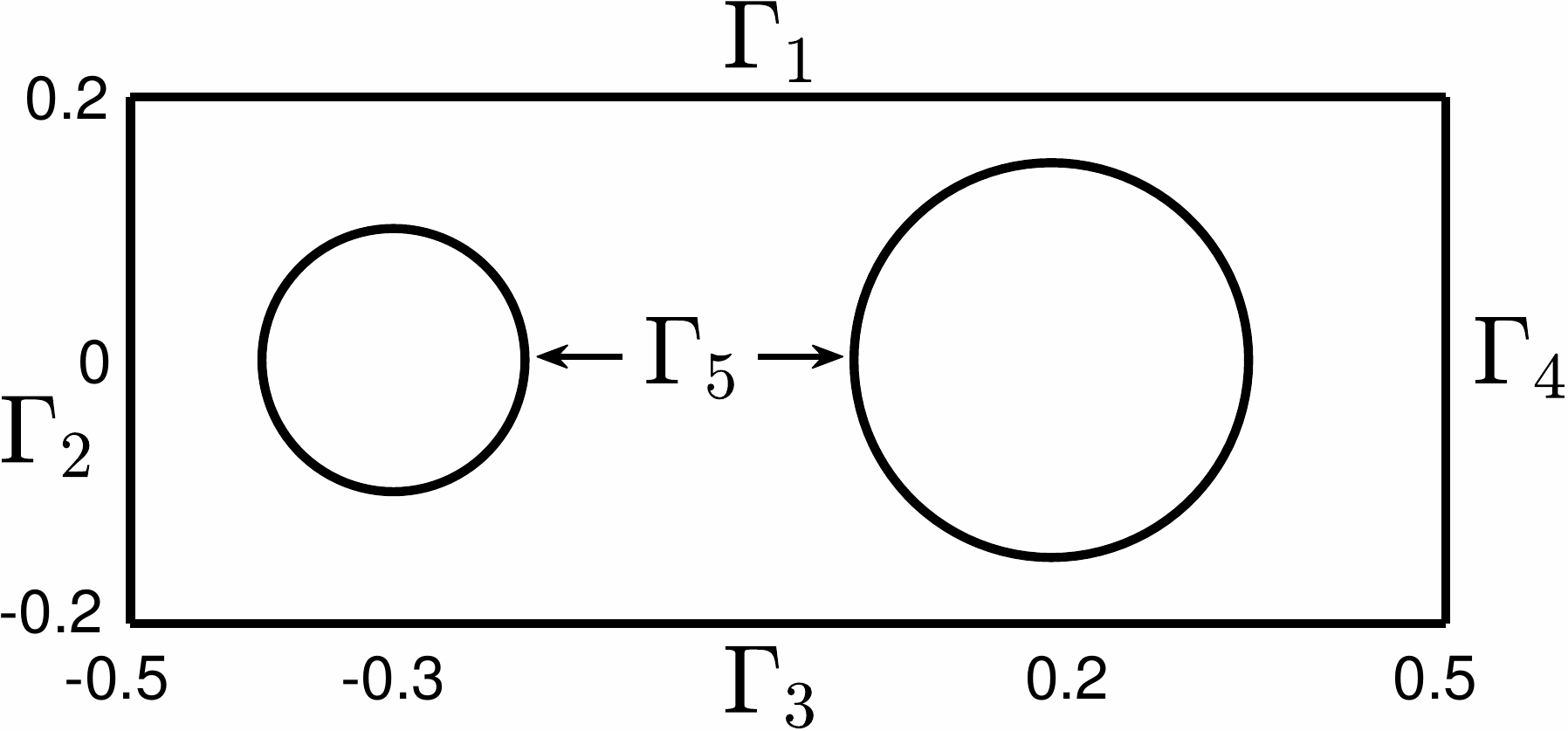}\quad
\includegraphics[width=0.45\textwidth]{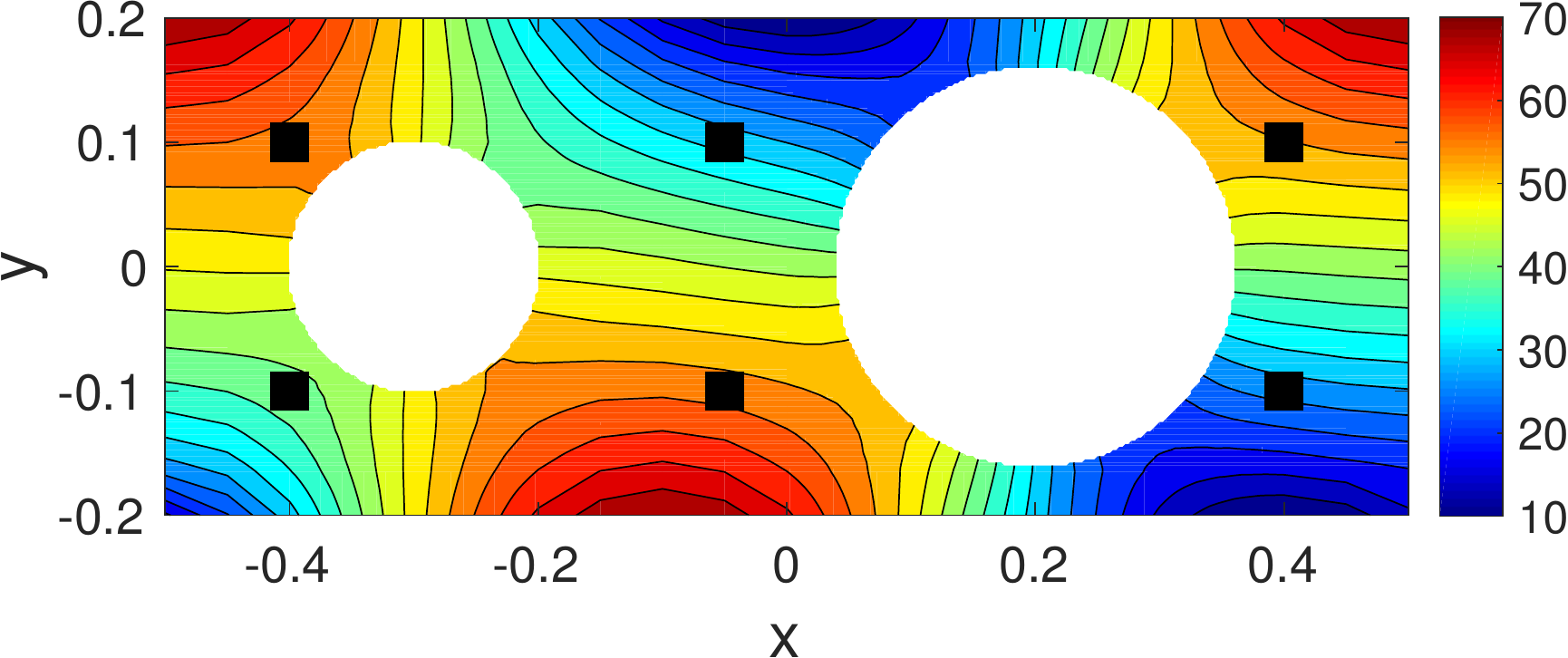}
\caption{Heat transfer problem. Left: computational domain; right: contours of
  steady state solution and locations of six observations (black squares).}
\label{fig:heat_truth}
\end{figure}

Now we assume that due to the lack of knowledge, the conductivity is modeled as 
\begin{equation}
  \label{eq:heat_cond_model}
  \kappa(T;\omega) = 0.1+\xi T, 
\end{equation}
where $\xi(\omega)$ is a uniform random variable $\mathcal{U}[0.0012,0.0108]$.
Apparently, this physical model significantly \emph{underestimates} the heat 
conductivity, and its form is incorrect. We generate $M=400$ samples of
$\xi(\omega)$ and solve Eq.~\eqref{eq:heat} on a coarser grid (maximum grid size
is $0.1$) with $\text{DOF}=96$ to obtain corresponding temperature solutions 
which forms $u_L(\Gamma)$. We set $M_H=19$ in this example.

It is shown in~\cite{YangBTT18}, the Kriging reconstruction is not accurate
because of the selection of observations and the property of the exact solution.
Fig.~\ref{fig:heat_bphik} presents the results by BiPhIK and CoBiPhIK, and
they are very similar to the results by PhIK and CoPhIK in~\cite{YangBTT18} (not
shown here), respectively. These results are better than Kriging (see
Fig.~\ref{fig:heat_rel_err} for quantitative comparison).
\begin{figure}[!h]
\centering
\subfigure[BiPhIK $\tensor F_r$]{
\includegraphics[height=0.13\textwidth]{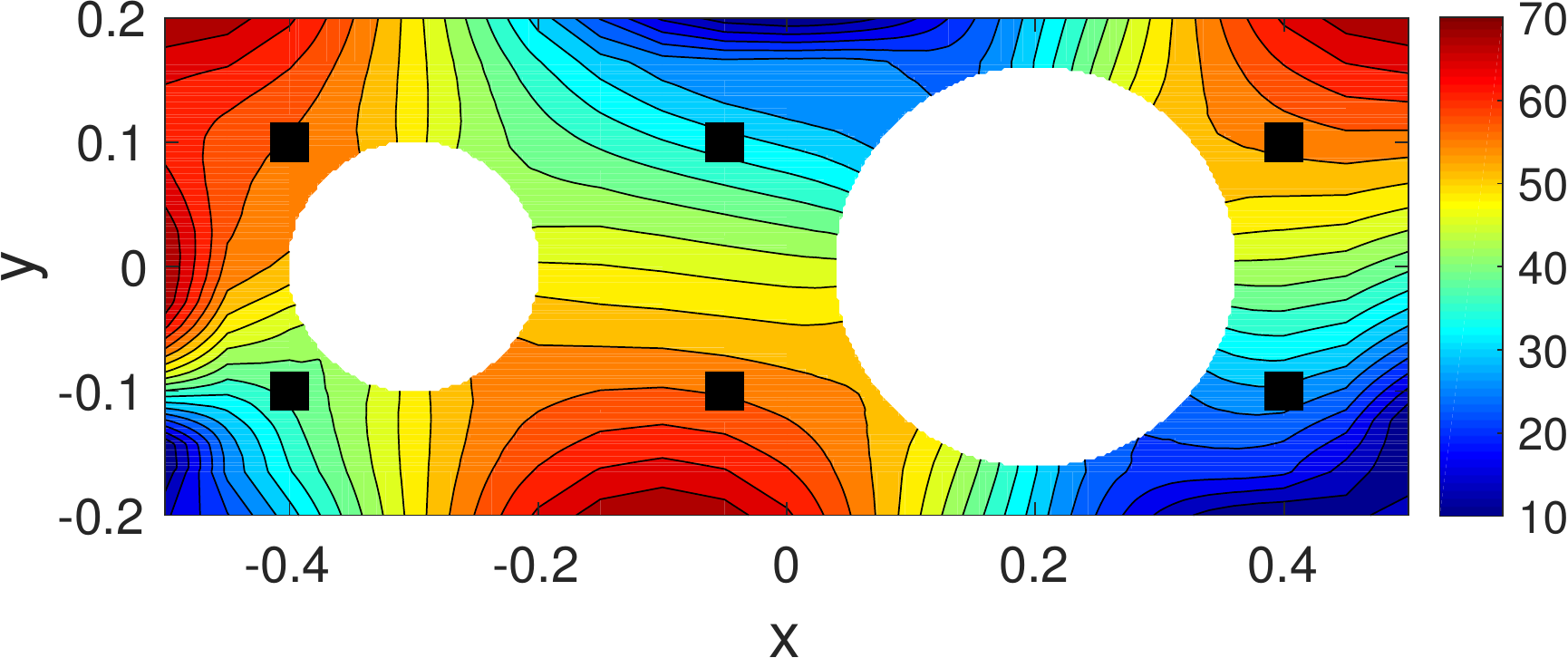}}
\subfigure[BiPhIK $\hat s$]{
\includegraphics[height=0.14\textwidth]{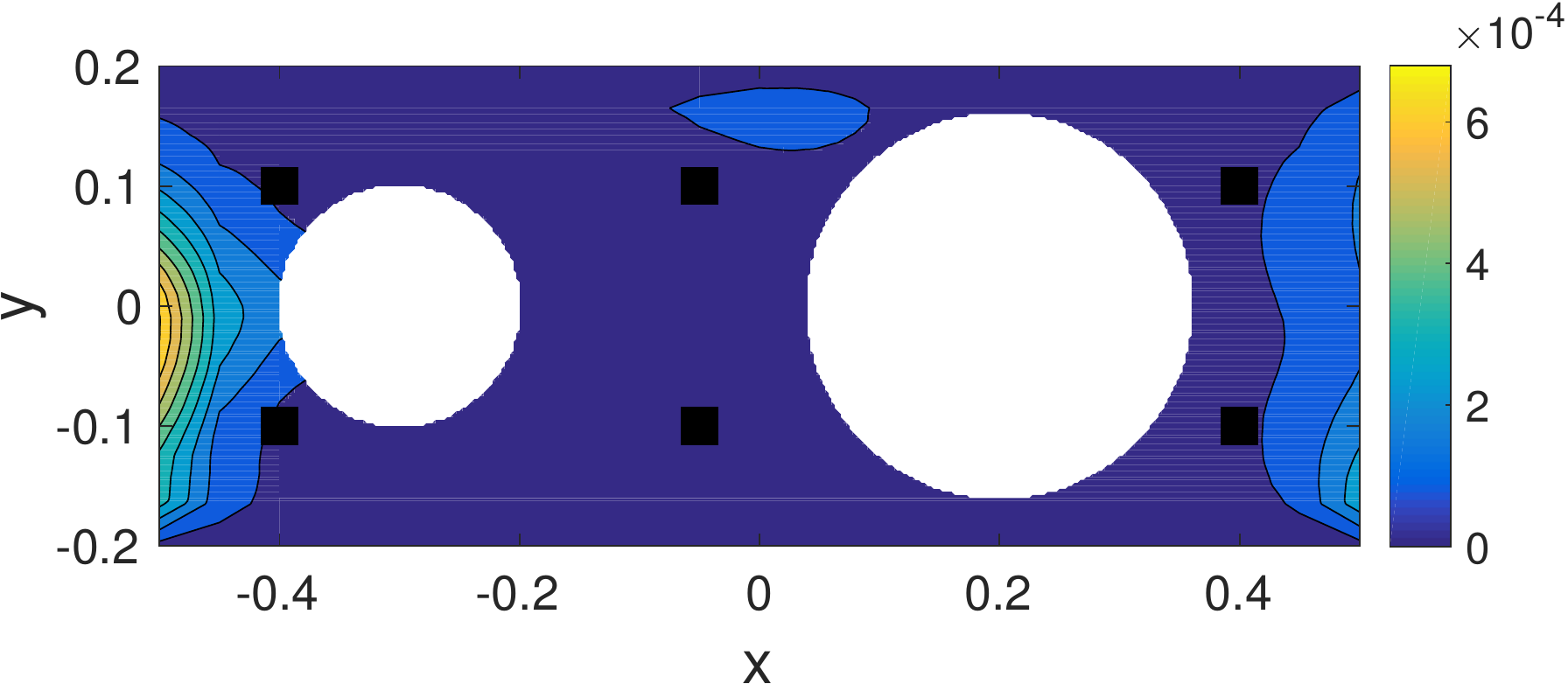}}
\subfigure[BiPhIK $\tensor F_r-\tensor F$]{
\includegraphics[height=0.13\textwidth]{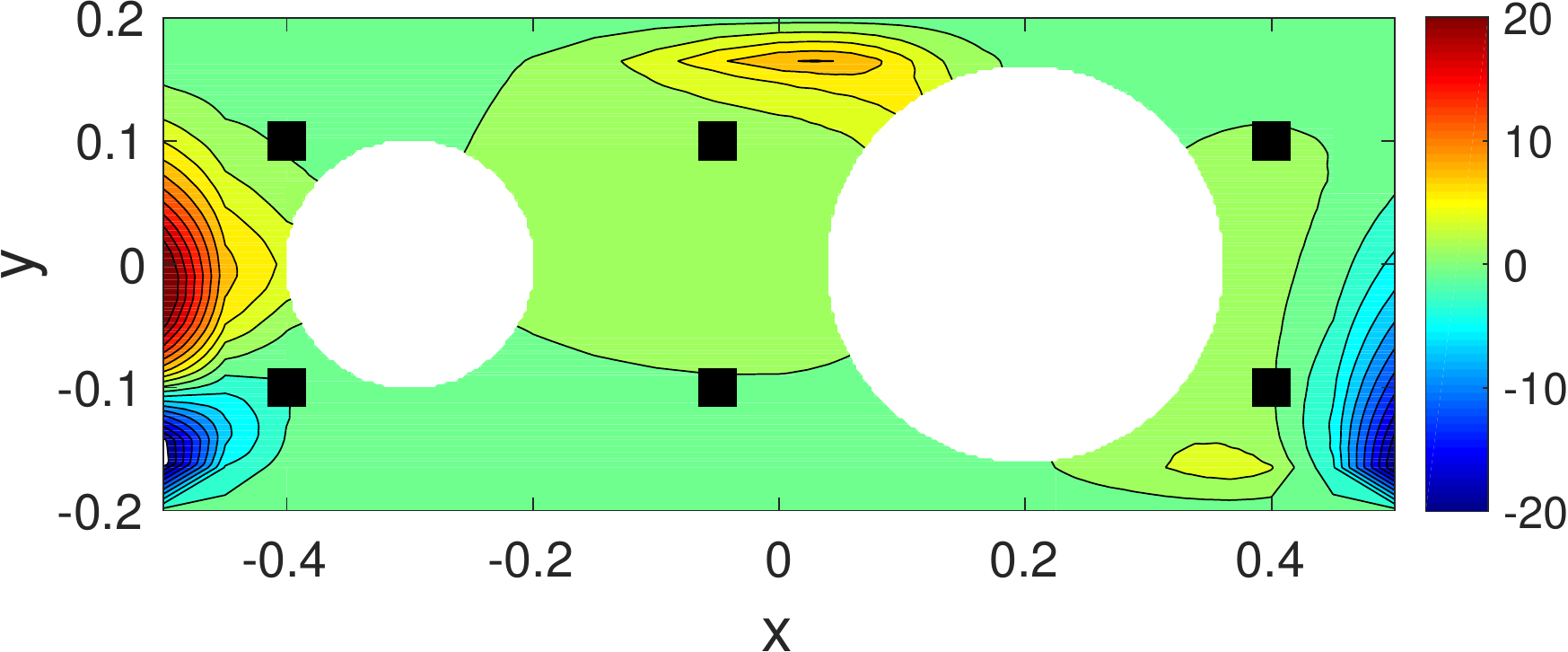}} \\
\subfigure[CoBiPhIK $\tensor F_r$]{
\includegraphics[height=0.13\textwidth]{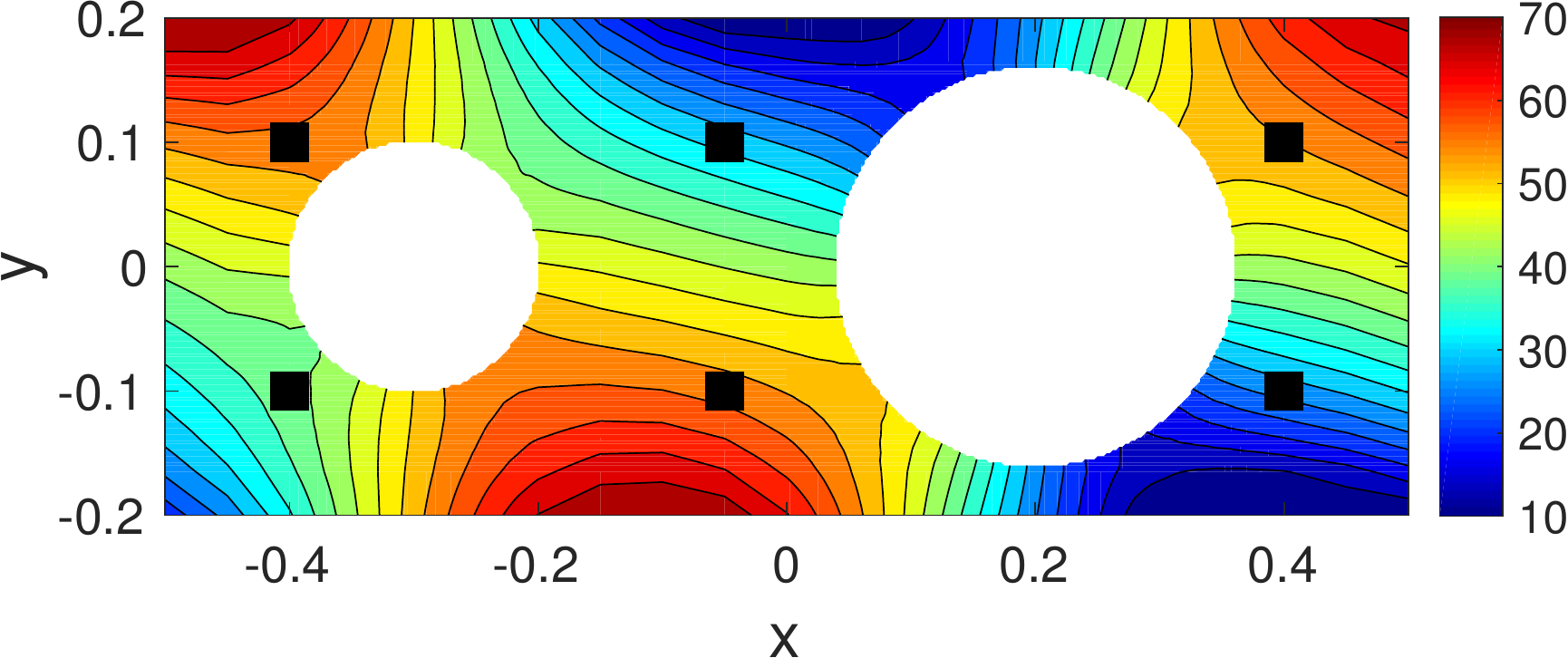}} 
\subfigure[CoBiPhIK $\hat s$]{
\includegraphics[height=0.13\textwidth]{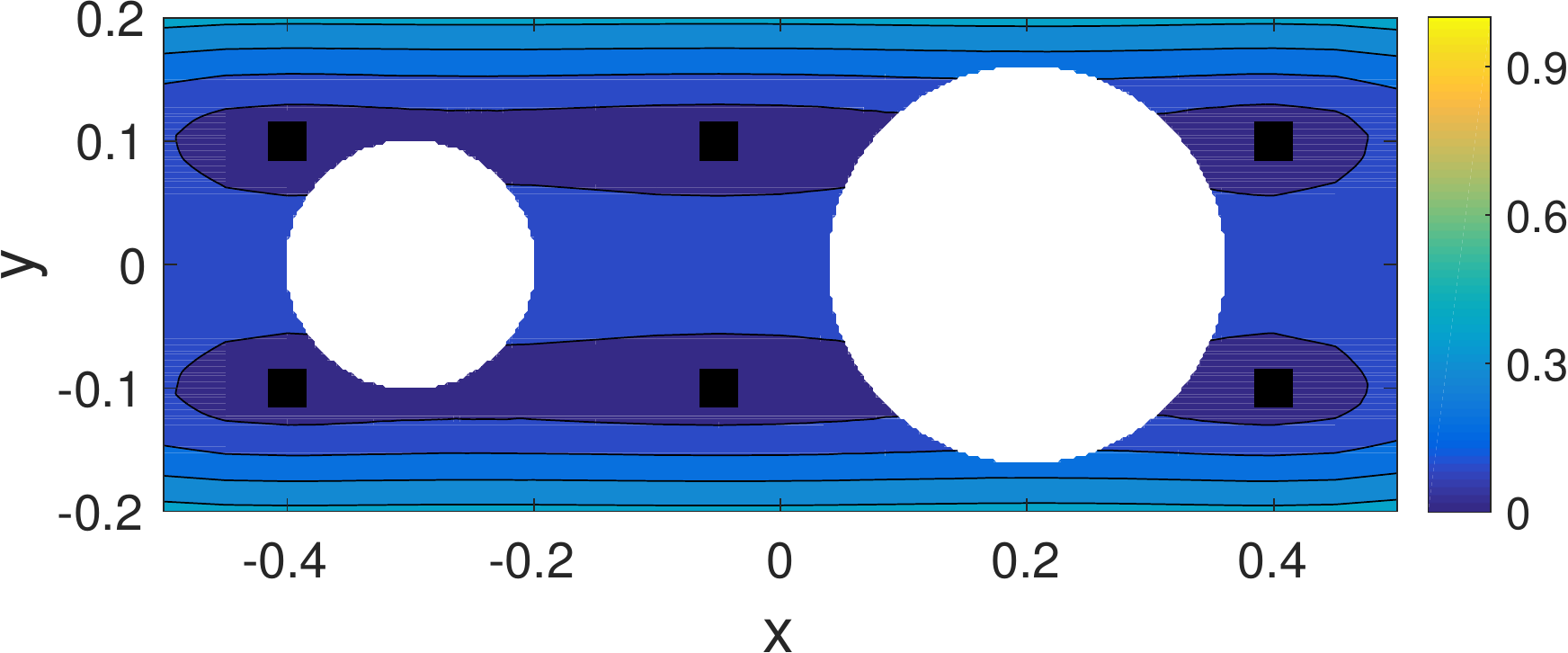}}~
\subfigure[CoBiPhIK $\tensor F_r-\tensor F$]{
\includegraphics[height=0.13\textwidth]{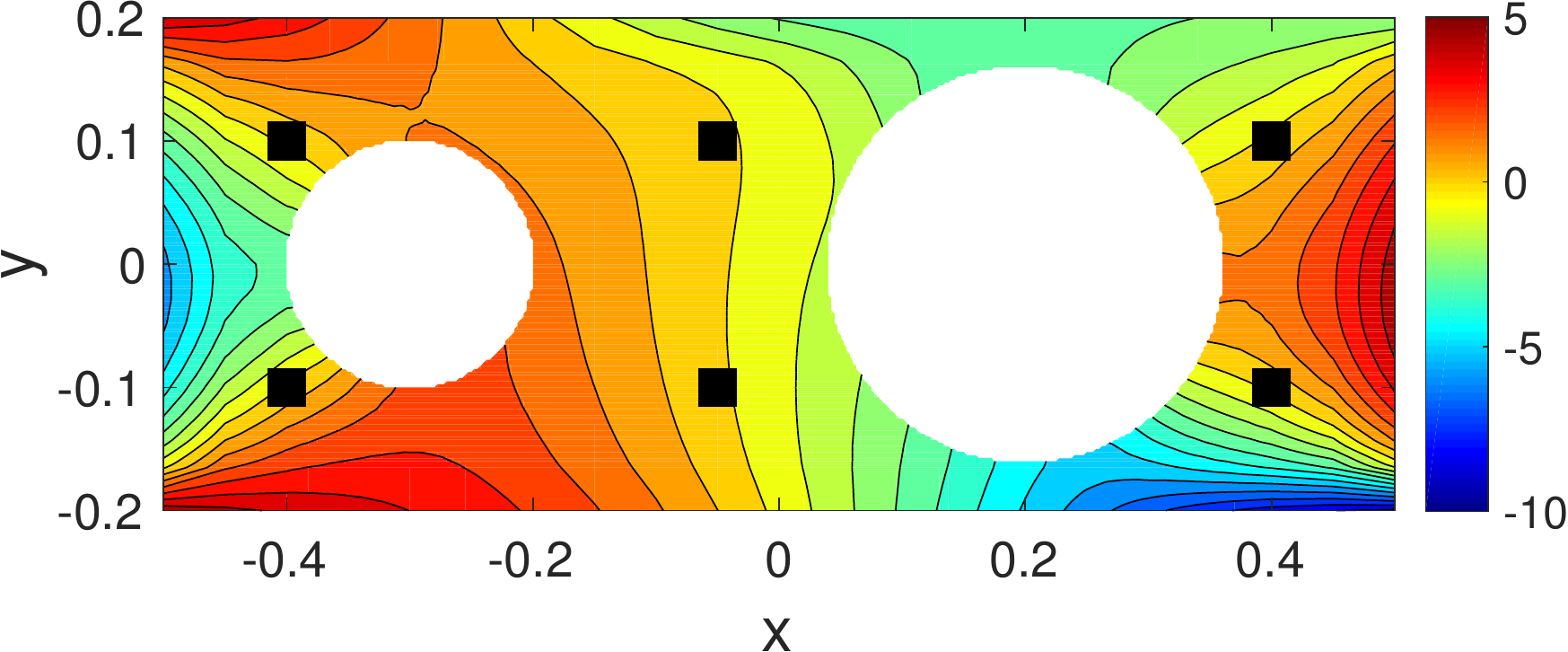}}
\caption{Reconstruction of the steady state solution of heat transfer problem
  by BiPhIK (first row) and CoBiPhIK (second row) with eight original 
  observations (squares).}
\label{fig:heat_bphik}
\end{figure}

Next, we adaptively add more observation data one by one. The results by PhIK 
and BiPhIK are very similar. We only present $\tensor F_r$ by these two methods
in Fig.~\ref{fig:heat_bphik_act} for comparison. We can see that the discrepancy
in $\tensor F_r$ is very insignificant, and there is only a slight difference in 
the locations of new observations (marked as starts) on the boundary $\Gamma_2$.
Fig.~\ref{fig:heat_bphicok_act} compares the results by CoPhIK and CoBiPhIK.
Although $\tensor F_r$ and $\tensor F_r-\tensor F$ are similar, there is
significant discrepancy in $\hat s$, mainly because the correlation length $l_i$
in the $Y_{_d}$'s kernel function are different.
\begin{figure}[!h]
\centering
\subfigure[PhIK $\tensor F_r$]{
\includegraphics[height=0.13\textwidth]{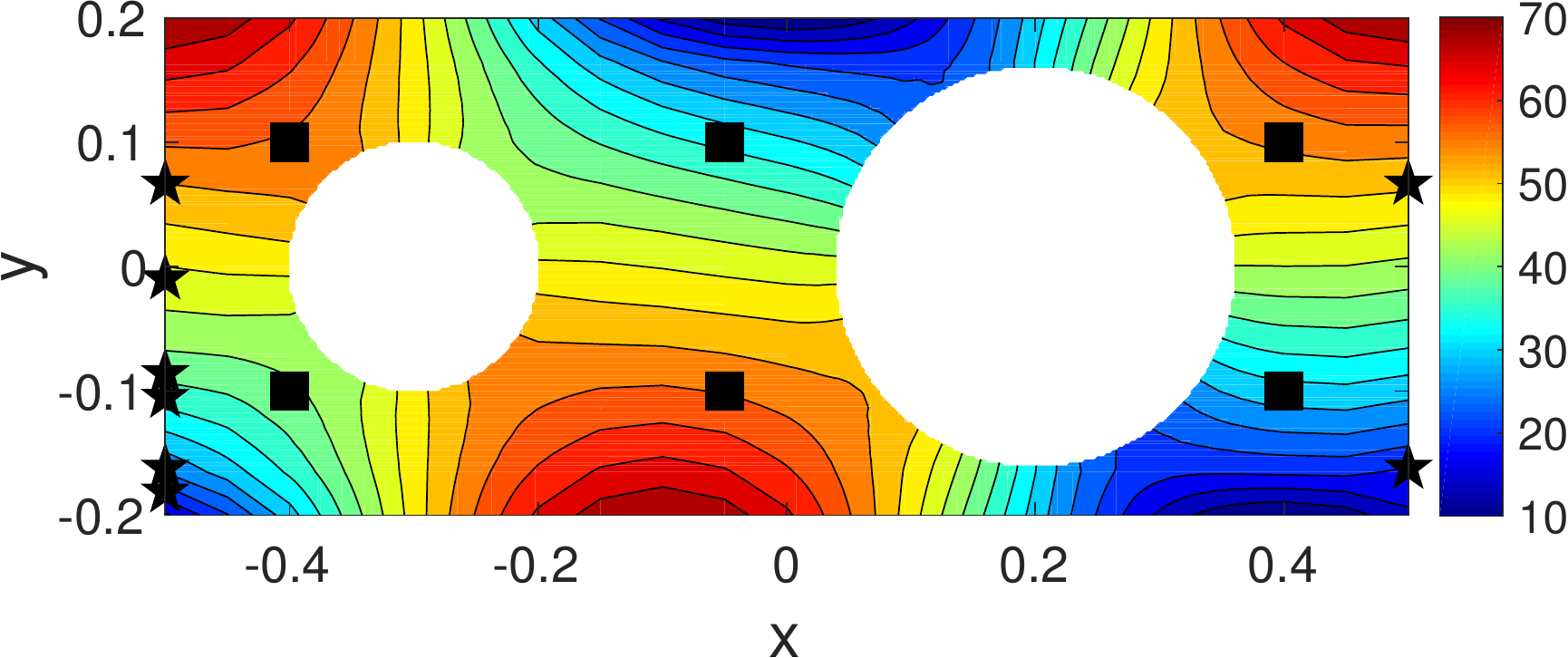}}\qquad
\subfigure[BiPhIK $\tensor F_r$]{
\includegraphics[height=0.13\textwidth]{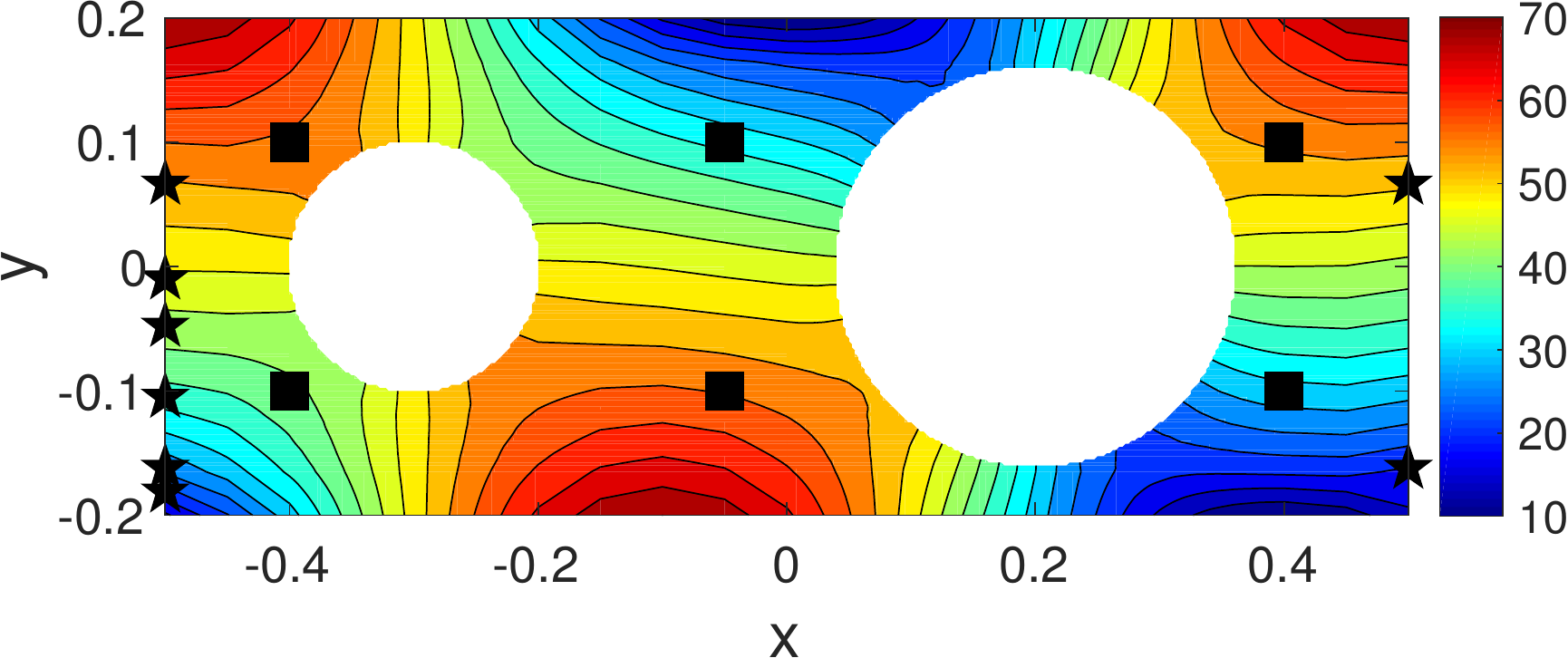}}
\caption{Reconstruction of the steady state solution of heat transfer problem by
  PhIK and BiPhIK with eight original observations (squares) and eight additional 
  observations (stars).} 
  \label{fig:heat_bphik_act}
\end{figure}
\begin{figure}[!h]
\centering
\subfigure[CoPhIK $\tensor F_r$]{
\includegraphics[height=0.13\textwidth]{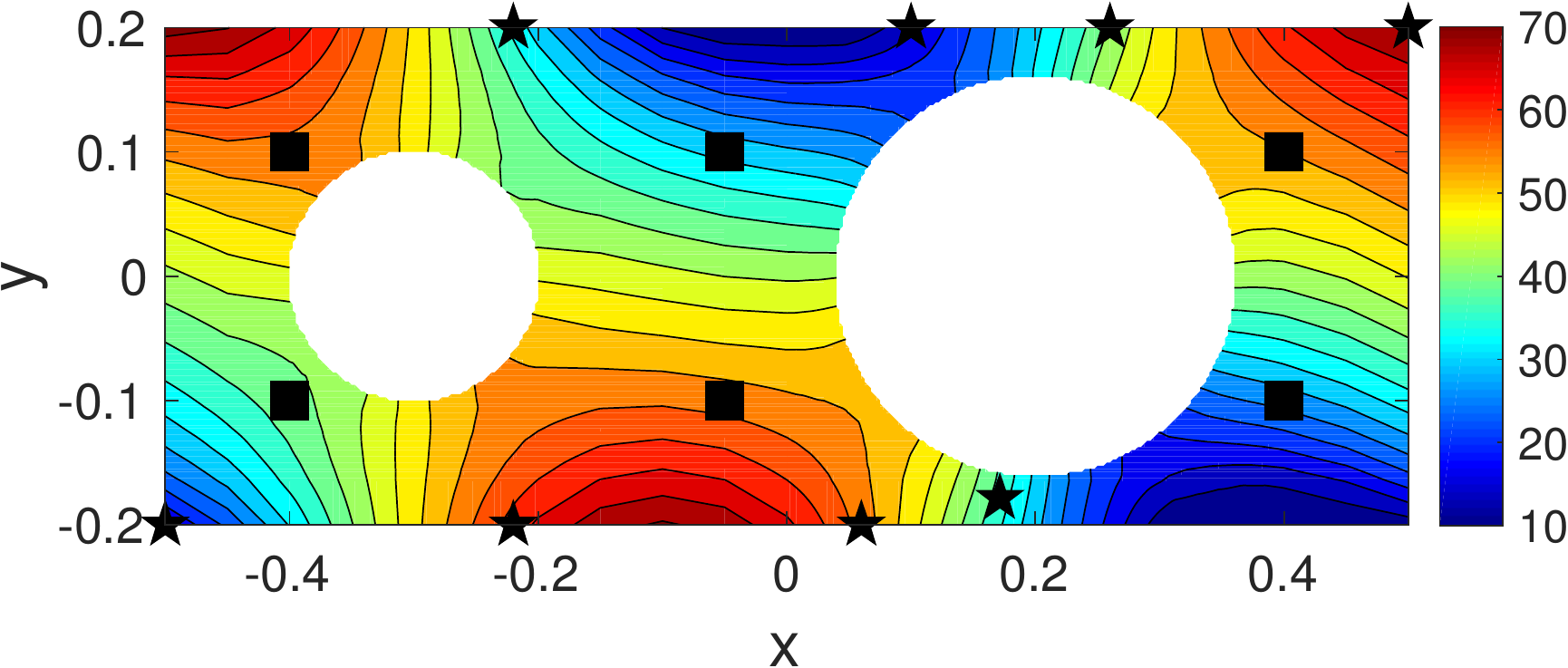}}
\subfigure[CoPhIK $\hat s$]{
\includegraphics[height=0.13\textwidth]{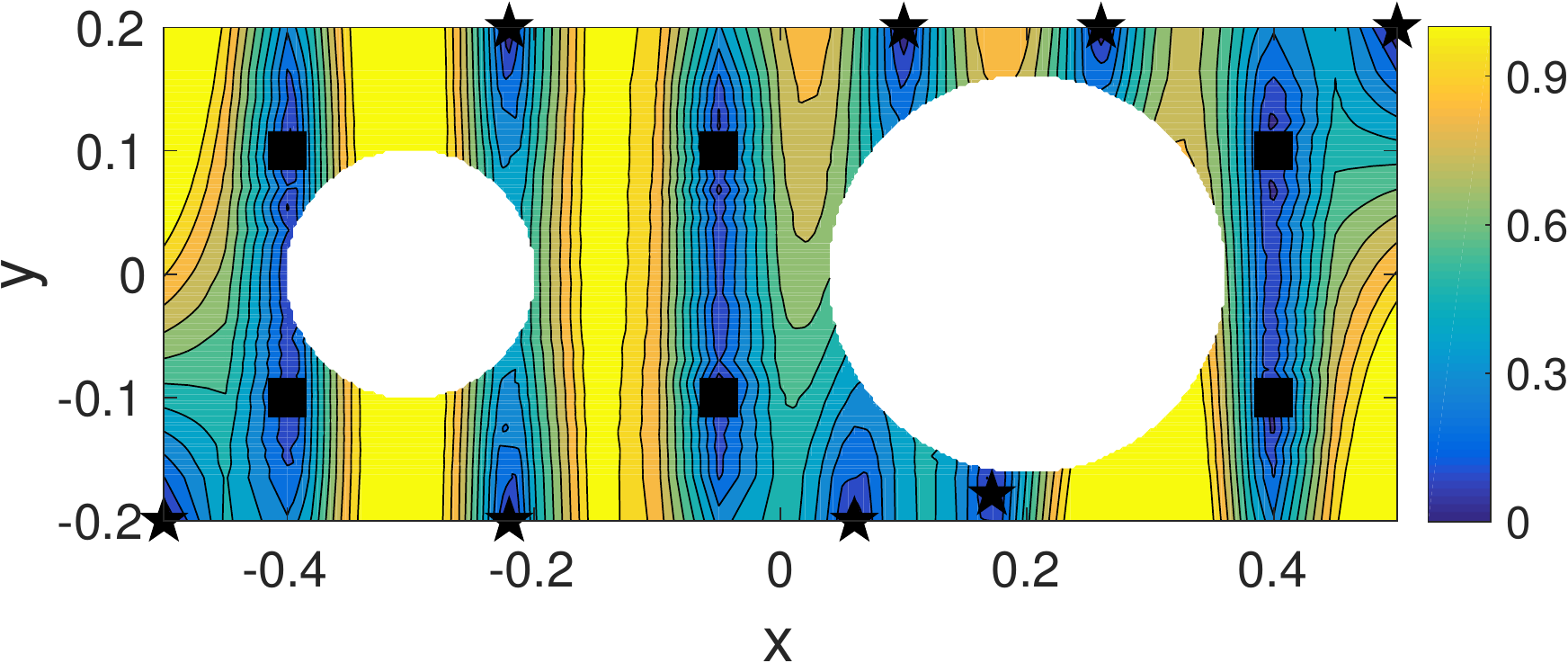}}
\subfigure[CoPhIK $\tensor F_r-\tensor F$]{
\includegraphics[height=0.13\textwidth]{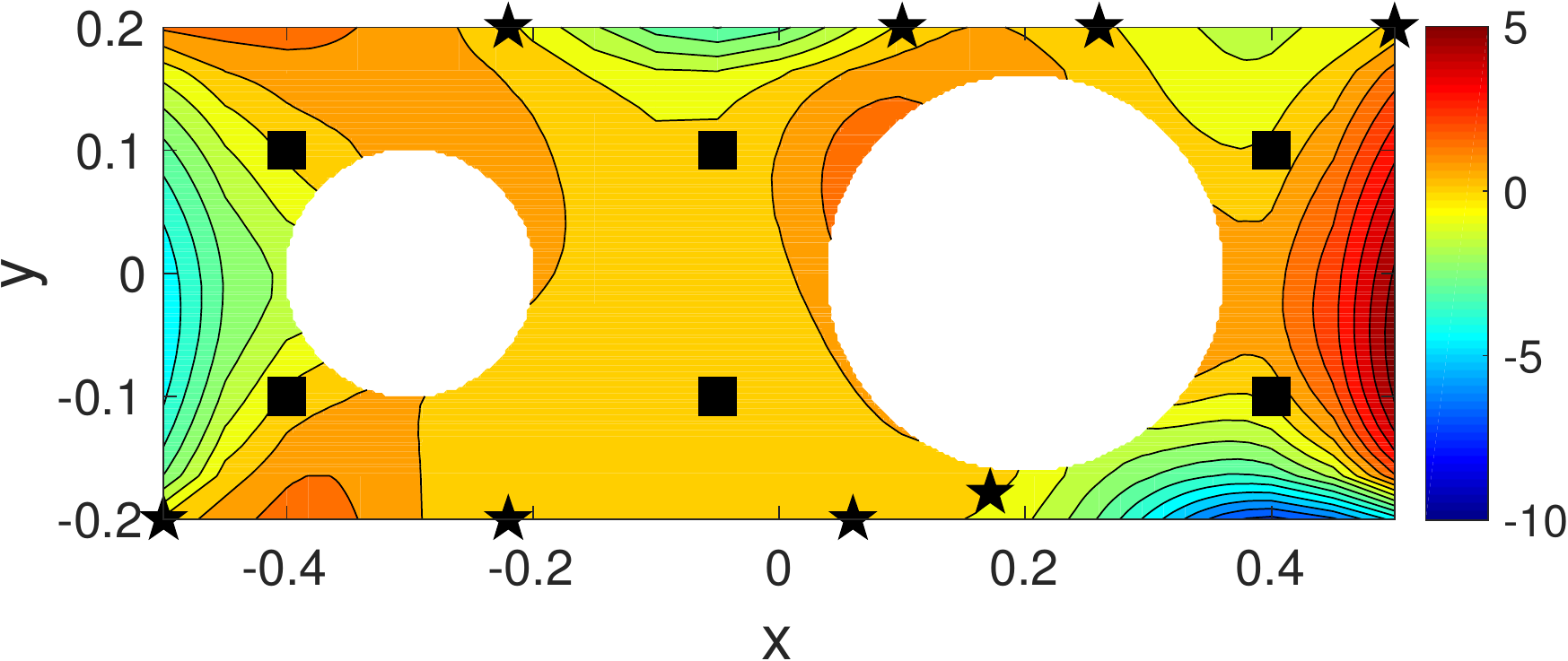}} \\
\subfigure[CoBiPhIK $\tensor F_r$]{
\includegraphics[height=0.13\textwidth]{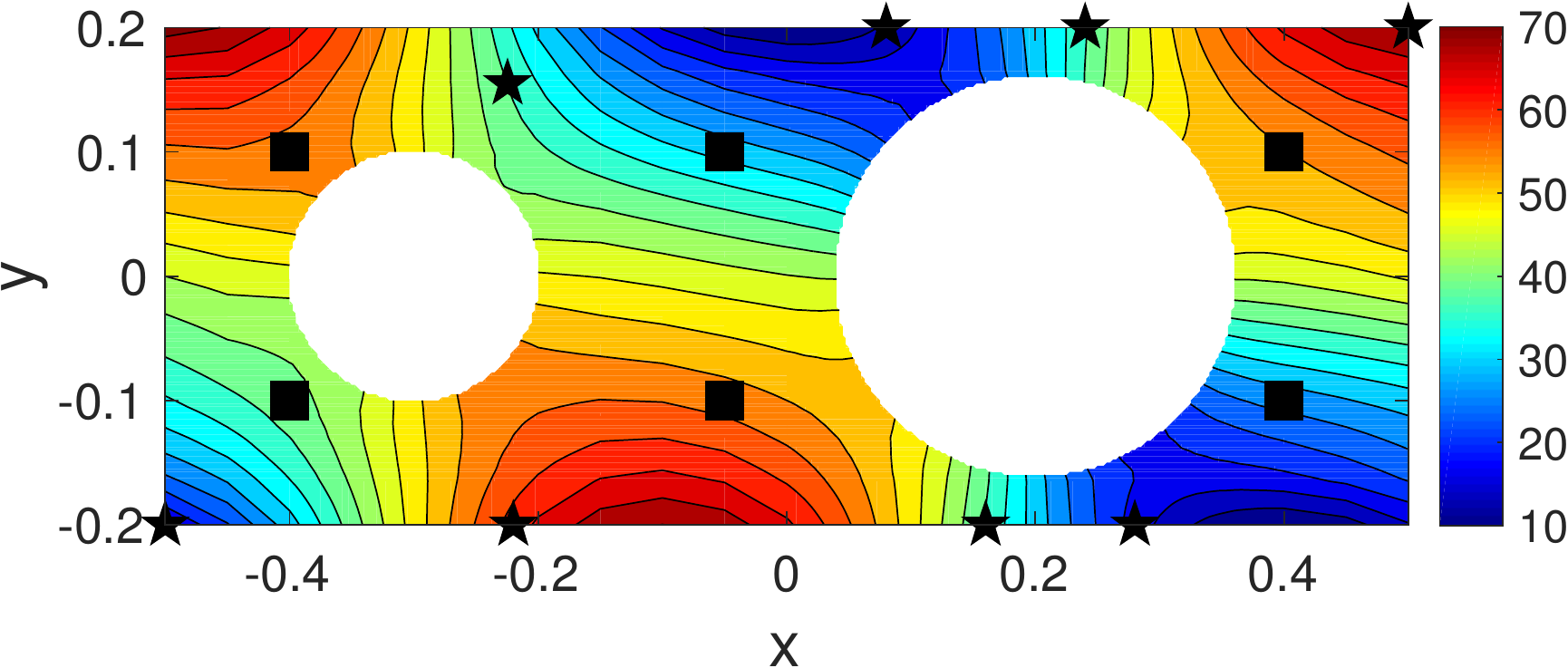}}
\subfigure[CoBiPhIK $\hat s$]{
\includegraphics[height=0.13\textwidth]{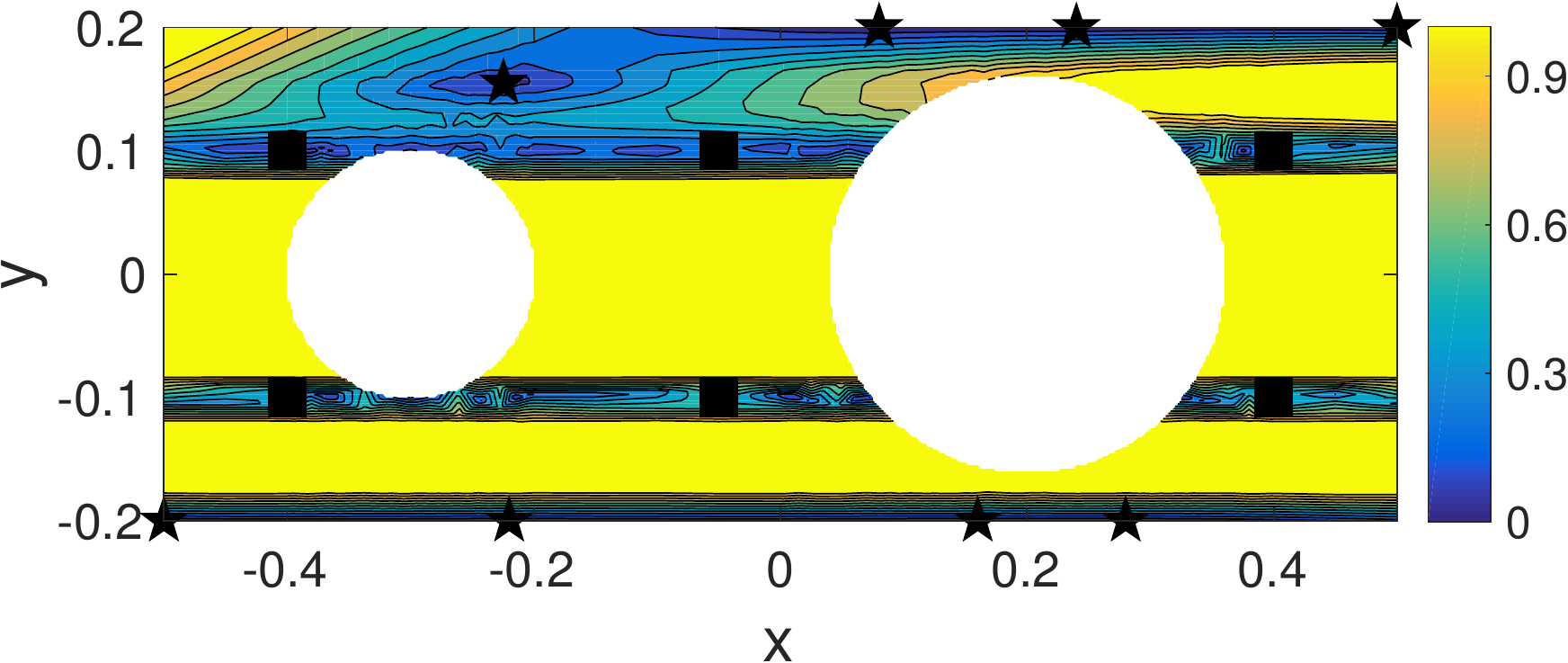}}
\subfigure[CoBiPhIK $\tensor F_r-\tensor F$]{
\includegraphics[height=0.13\textwidth]{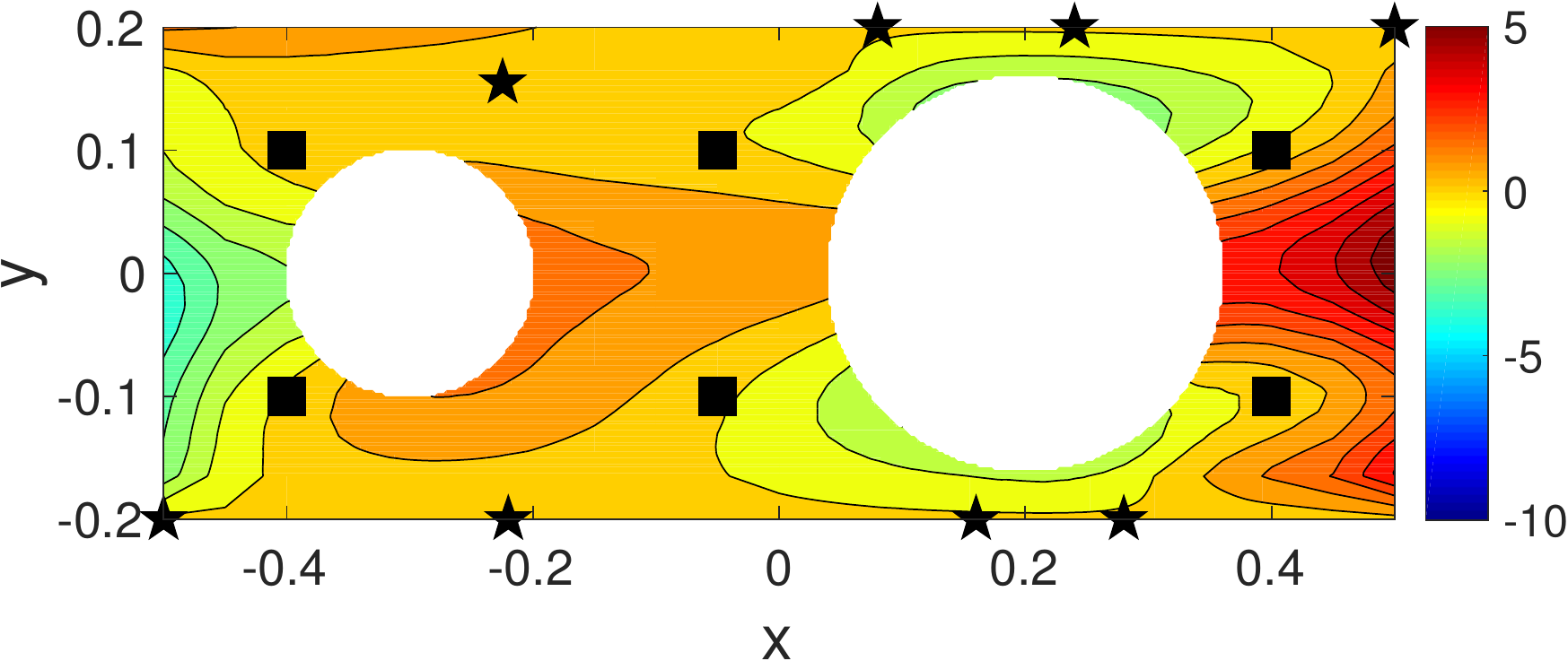}}
\caption{Reconstruction of the steady state solution of heat transfer problem 
  by CoPhIK and CoBiPhIK with eight original observations (squares) and eight additional
  observations (stars).}
\label{fig:heat_bphicok_act}
\end{figure}

Fig.~\ref{fig:heat_rel_err} presents the quantitative comparison of the
relative error by different methods. In this case, the PhIK and BiPhIK results 
are almost the same, and the CoPhIK and CoBiPhIK results are also very similar.
Kriging performs poorly when the number of observations is smaller than $22$,
however it outperforms PhIK (and BiPhIK) when $22$ observations are available.
CoPhIK (and CoBiPhIK) is always better than Kriging and PhIK (and BiPhIK) as
shown in~\cite{YangBTT18}. Again, in this case, $u_B(\Gamma)$ approximates
$u_H(\Gamma)$ very well ($\delta_1=0.0039$ and $\delta_2=0.0015$), which yields
a much smaller difference between PhIK and BiPhIK, and between CoPhIK and CoBiPhIK.
\begin{figure}[!h]
\centering
\includegraphics[width=0.40\textwidth]{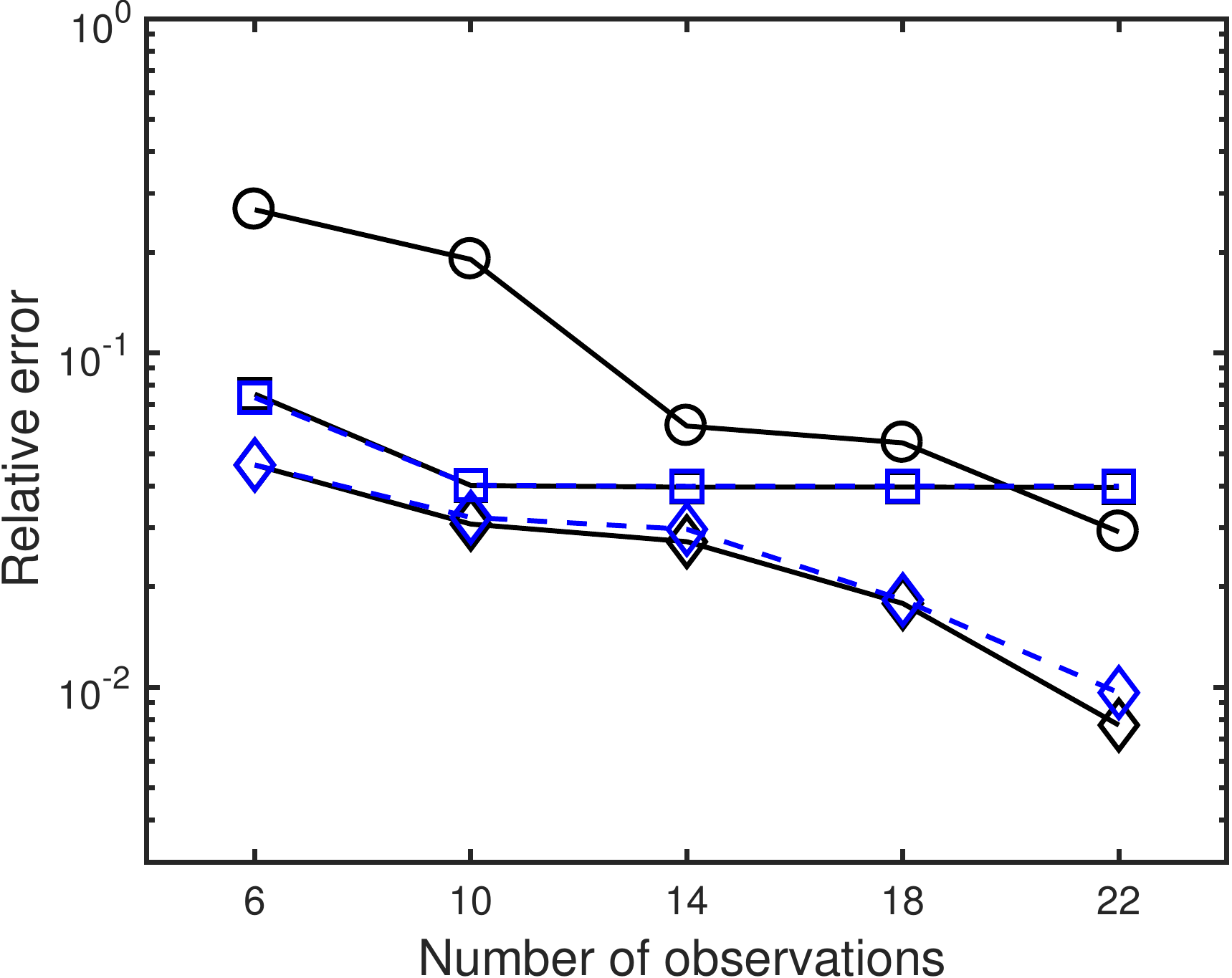}
\caption{Relative error of reconstructed steady state solution of heat transfer
  problem $\Vert\bm F_r-\bm F\Vert_F/\Vert\bm F\Vert_F$ using Kriging (``$\,\circ$"), 
  PhIK (blue ``$\,\square$"), BiPhIK (black ``$\,\square$"), CoPhIK 
  (blue ``$\,\diamond$") and CoBiPhIK (black ``$\,\diamond$") with different numbers
  of total observations via active learning.}
\label{fig:heat_rel_err}
\end{figure}


\subsection{Kuramoto-Sivashinsky equation}
\label{subsec:KS}

As a final example, we consider the one-dimensional Kuramoto-Sivashinsy (KS) 
equation~\cite{kuramoto1978diffusion,sivashinsky1980flame}:
\begin{equation}
  \begin{split}
 & u_t+4u_{xxxx}+\alpha\left[u_{xx}+\dfrac{1}{2}(u_x)^2\right]=0, \quad 0\leq x\leq 2\pi, \\
 &   u(x+2\pi,t) = u(x, t),  \\
 & u(x, 0) = u_0(x),
  \end{split}
\end{equation}
where
\begin{equation}
  u_0(x) = 2.9420\cos(2x) + 0.4642\cos(4x) + 0.0410\cos(6x) + 0.0034\cos(8x).
\end{equation}
It is well known that  this equation can be used to depict a chaotic system,
and it is very sensitive to the parameter $\alpha$ when it is large. More
importantly, in numerical simulation, high precision is necessary because of the
extreme sensitivity of the simulations with respect to numerical
accuracy~\cite{hyman1986kuramoto, hyman1986order}. We use the spectral method 
for spacial derivatives as in~\cite{li2016unified}. Specifically, we use Fourier
expansion with $256$ terms to obtain the reference solution and $u_H$, and use 
expansion with $128$ terms to compute $u_L$. For the time integration, we use a
fourth-order Runge-Kutta method~\cite{gottlieb2001strong} with time step 
$10^{-3}$. We investigate the 
solution of a KS equation at $T=5$, and the ``exact" $\alpha=37.545$. Accurate 
observations are available at 
\[x=\dfrac{12\pi}{256}+\dfrac{56\pi}{256}\cdot j, \quad j=0,1,\dotsc,8.\]
We assume that we do not know the exact $\alpha$, and use the
biased ``domain knowledge" to set $\alpha$ as a uniform random variable
$\mathcal{U}[30,36]$. Apparently, this range is below the exact $\alpha$. We
generate $400$ samples of $\alpha$, and compute corresponding $u_H^m$ and 
$u_L^m$ to compare the performance of different methods. Specifically, the
we set $M_H=17$ to construct $u_B(\Gamma)$.

Fig.~\ref{fig:ks_truth} illustrates the reference solution and the locations of
accurate observations. The mean of the $\{u_H^m\}_{m=1}^{400}$ is illustrated as 
the dashed line, which deviates from the exact solution significantly with the
relative $L_2$ error more than $140\%$. We also compute the standard deviation
of $u_H$ at each $\bm x$. Since we will use statistics of $u_H$ to
construct a GP in PhIK, we present the ``$95\%$ confidence interval", i.e.,
mean plus minus two standard deviations in Fig.~\ref{fig:ks_truth}. We note
that this is not the exact confidence interval of the ensemble 
$\{u_H^m\}_{m=1}^{400}$ itself. Fig.~\ref{fig:ks_truth} shows clearly that the
exact solution is not bounded by the confidence interval. This is because in the
stochastic model, the $\alpha$ is below its exact value, and the KS equation is
very sensitive to $\alpha$.
\begin{figure}[!h]
  \centering
  \includegraphics[width=0.40\textwidth]{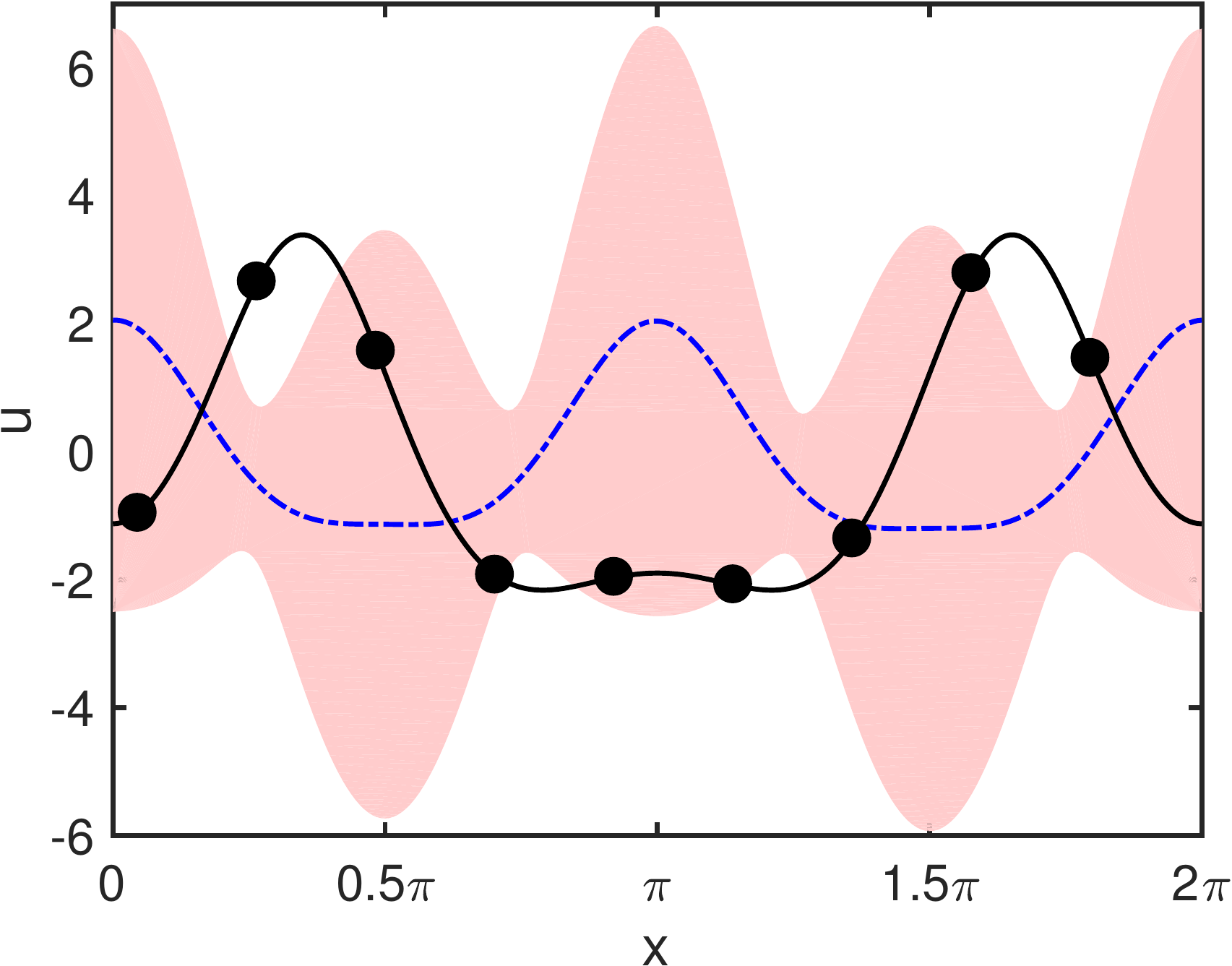}
  \caption{Exact solution of the KS equation (solid line),
  observations (circles), mean of the realizations (dashed line) and the
  ``$\,95\%$ confidence interval" (shaded area).}
  \label{fig:ks_truth}
\end{figure}

Fig.~\ref{fig:ks_kriging} illustrates the Kriging results by using the nine
accurate observations. The reconstruction is less accurate near the right
end than the other regions, and the uncertainty is larger (i.e., the confidence 
interval is wider). This is because there is no observation near the right end
compared with other regions. Also, the periodic boundary condition is not
preserved. Fig.~\ref{fig:ks_phik} shows the results by PhIK (using 
$\{u_H^m\}_{m=1}^{400}$) and BiPhIK (using $\{u_B^m\}_{m=1}^{400}$). The PhIK
performs better than BiPhIK, and, more importantly, the periodic boundary 
condition is preserved by PhIK, and slightly violated by BiPhIK. We note that
in this figure, the confidence intervals in both methods are very narrow
($\sim\mathcal{O}(10^{-2})$). Similar to the examples in~\cite{YangTT18,
YangBTT18} and the other two examples in this session, PhIK usually yields a
less uncertain result, but this result may not be very accurate because the
uncertainty estimate of PhIK relies on its prior covariance, which is totally
dependent on the stochasticity of the physical model. Fig.~\ref{fig:ks_phicok}
demonstrates the results by CoPhIK and CoBiPhIK. The CoPhIK is the most accurate
of all the methods (smallest discrepancy between posterior mean and the exact
accuracy) with confidence intervals that cover the reference solution, and the
periodic boundary condition is slightly violated. The CoBiPhIK is the least 
accurate among all the methods. 
\begin{figure}[!h]
  \centering
  \includegraphics[width=0.40\textwidth]{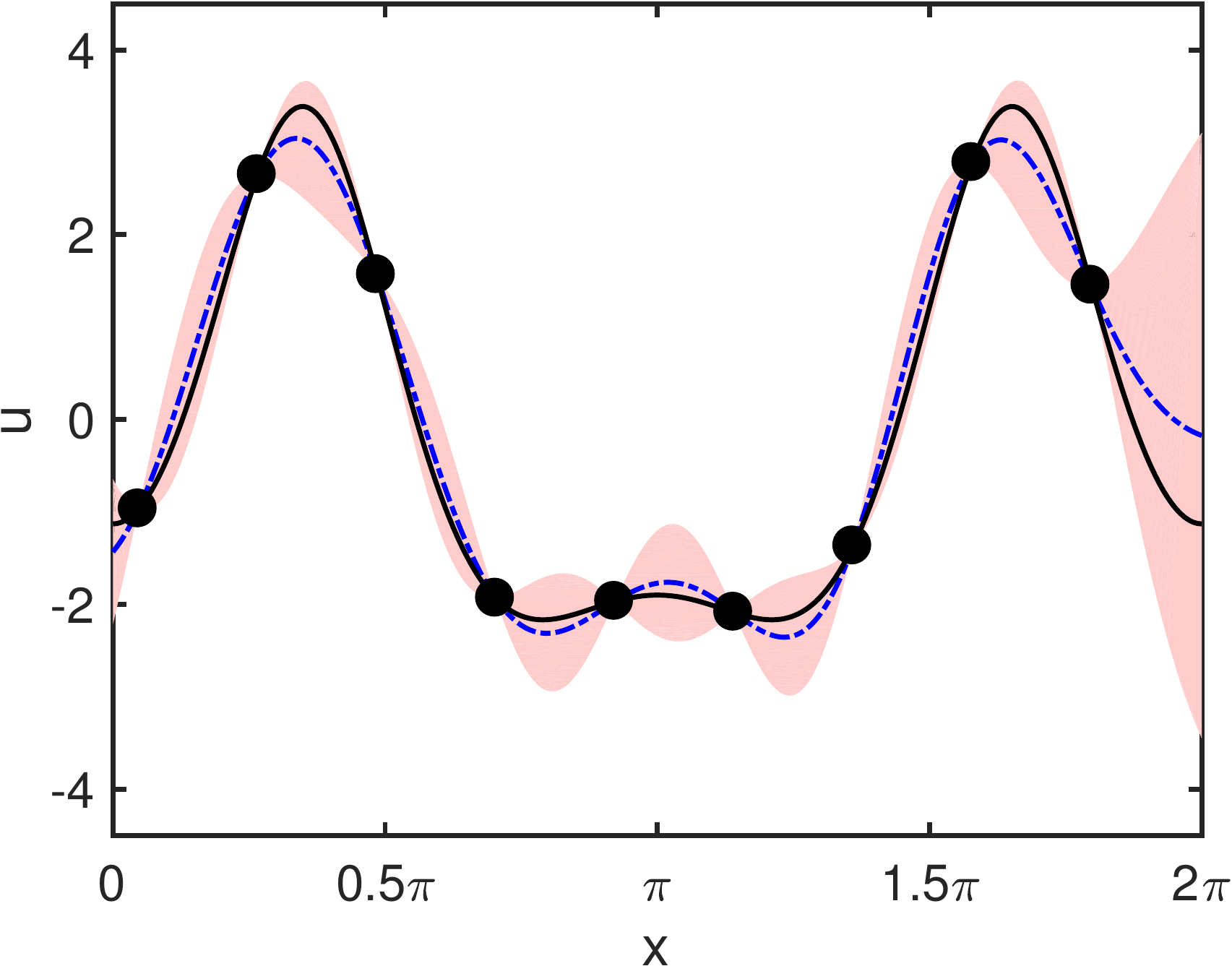}
  \caption{Kriging reconstruction of the KS equation: posterior mean (dashed
    line), exact solution (solid line), observations (circles),  and
    $95\%$ confidence interval (shaded area).}
  \label{fig:ks_kriging}
\end{figure}

\begin{figure}[!h]
  \centering
  \subfigure[PhIK]{
  \includegraphics[width=0.40\textwidth]{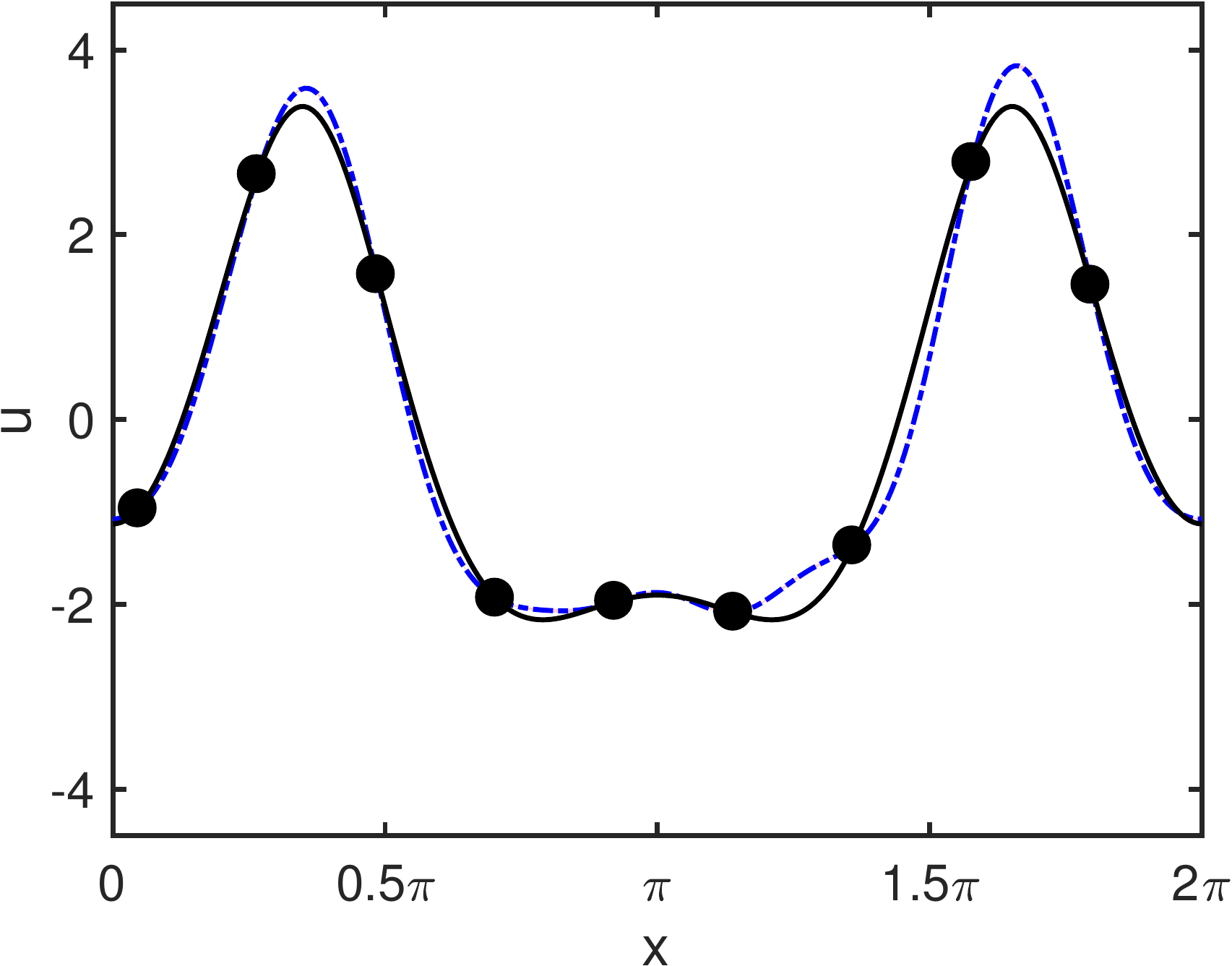}}\qquad
  \subfigure[BiPhIK]{
  \includegraphics[width=0.40\textwidth]{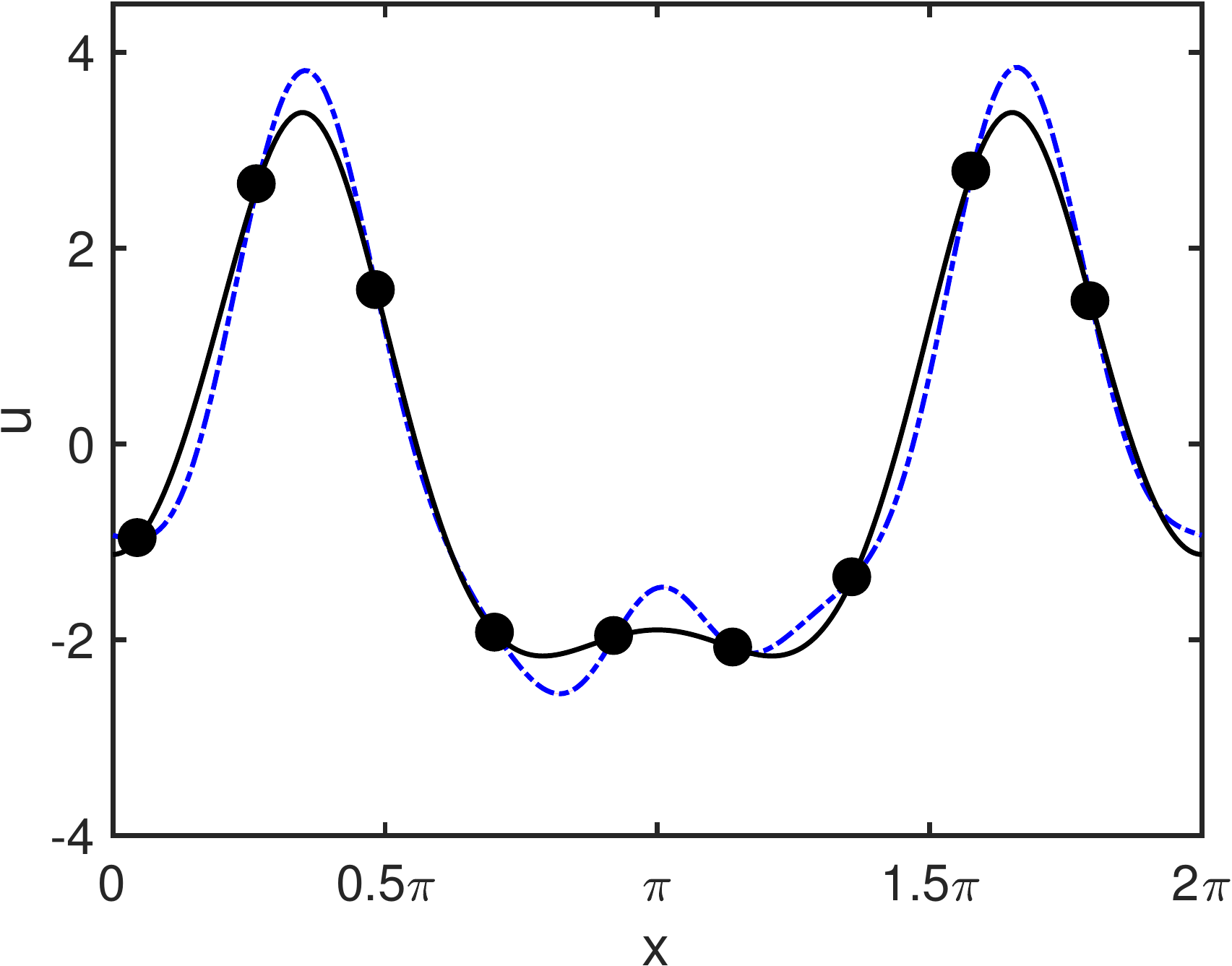}}
  \caption{PhIK (left) and BiPhIK (right) reconstruction of the KS equation:
    posterior mean (dashed line), exact solution (solid line), observations 
    (circles), and $95\%$ confidence interval (very narrow in this case).}
  \label{fig:ks_phik}
\end{figure}

\begin{figure}[!h]
  \centering
  \subfigure[CoPhIK]{
  \includegraphics[width=0.40\textwidth]{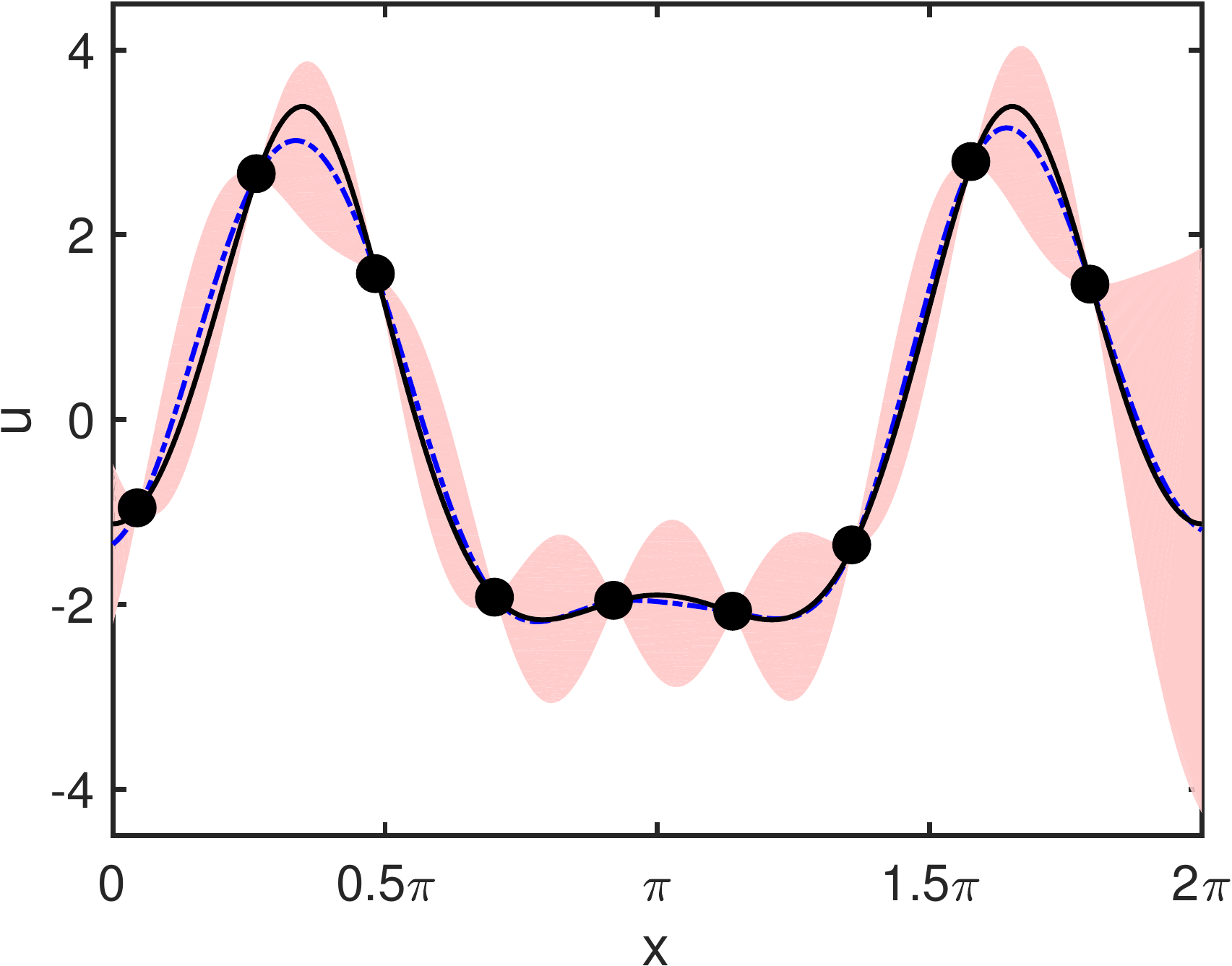}}\qquad
  \subfigure[CoBiPhIK]{
  \includegraphics[width=0.40\textwidth]{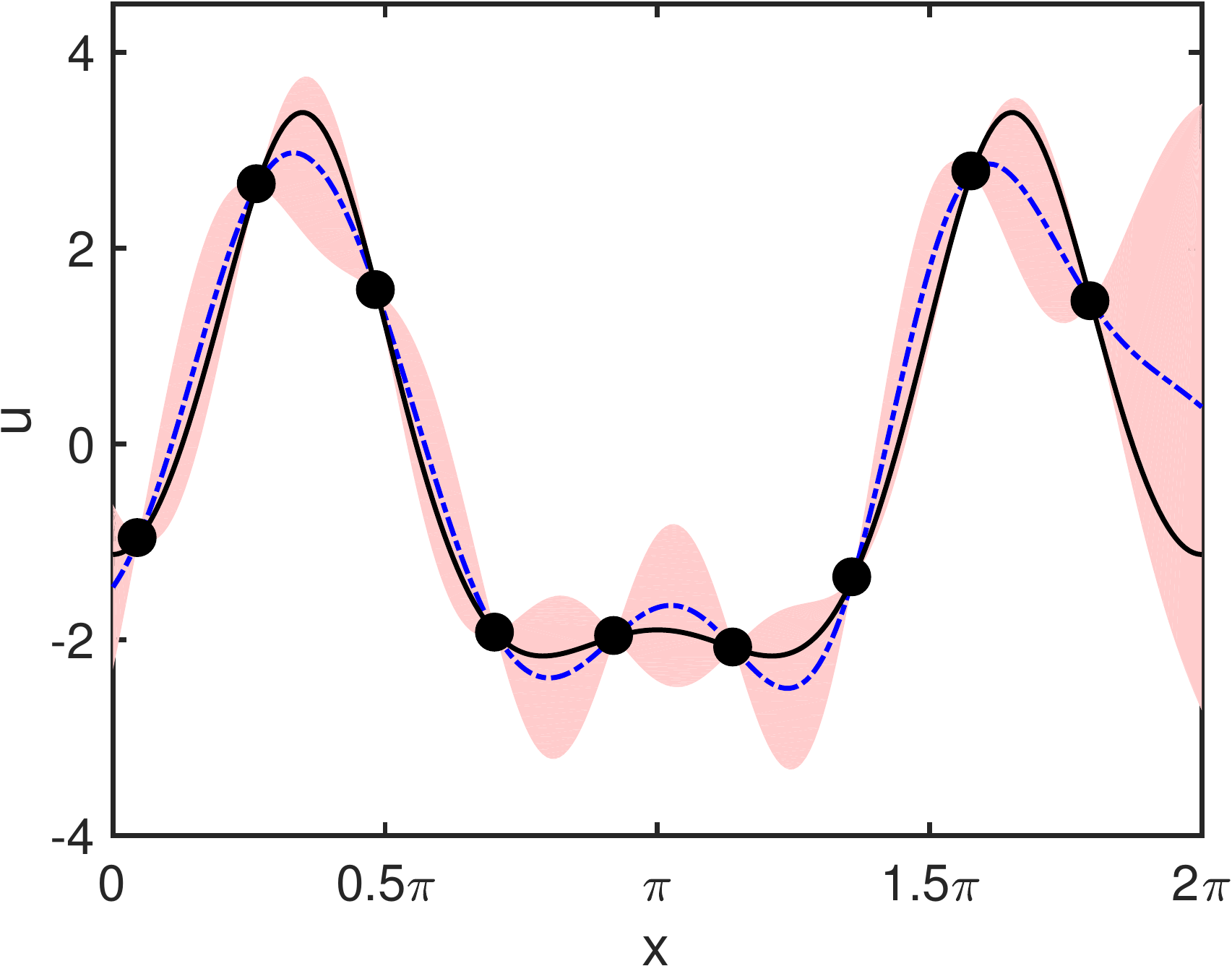}}
  \caption{CoPhIK (left) and BiCoPhIK (right) reconstruction of the KS equation:
    posterior mean (dashed line), exact solution (solid line), observations 
    (circles), and $95\%$ confidence interval (shaded area).}
  \label{fig:ks_phicok}
\end{figure}

Table~\ref{tab:ks} presents the relative $L_2$
errors of all methods. Different from the previous two examples, in this 
example, the structures of space $U_L(\Gamma)$ and $U_H(\Gamma)$ are different
because of the sensitivity of the system to the numerical solution, so
$u_B(\Gamma)$ can not approximate $u_H(\Gamma)$ well. Specifically, in this case
$\delta_1=89.8$ and $\delta_2=7.7$. Consequently, the error of BiPhIK is larger 
than PhIK compared with the other two examples, and the error of CoBiPhIK is 
much larger than CoPhIK.
\begin{table}[!h]
  \centering
  \caption{Relative $L_2$ error (posterior mean vs. reference solution) of different
  methods for reconstructing KS equation solution.}
  \begin{tabular}{*{5}{C{6em}}}
    \hline \hline
    Kriging & PhIK & CoPhIK & BiPhIK & CoBiPhIK \\
    \hline
    0.1581  &  0.1145 & 0.0822 & 0.1439 & 0.2477 \\
    \hline \hline
  \end{tabular}
  \label{tab:ks}
\end{table}

\section{Conclusion}
\label{sec:concl}

In this work, we extend the PhIK/CoPhIK approach by combining two types of 
multifidelity methods in the BiPhIK/CoBiPhIK framework to reduce the 
computational cost of physical models simulations. Specifically, the 
approximation-theory-based bifidelity method is used to generate approximated
high-fidelity realizations of the physical model, which are used to construct GP
in PhIK and CoPhIK. The CoKriging approach utilizes an auxiliary GP to describe
the discrepancy between the model outputs and the sparse accurate observation 
data. We present the error estimate of the difference between the posterior mean
and variance in the resulting GPs by using BiPhIK/CoBiPhIK and PhIK/CoPhIK.
We also analyze the accuracy of preserving linear physical constraints in the
posterior mean of BiPhIK/CoBiPhIK.

The presented methods are nonintrusive, and can utilize existing domain codes 
to compute the necessary realizations. Therefore, these methods are suitable 
for large-scale complex applications for which physical models and codes are 
available. When the parametric dependence of the low-fidelity model
can well inform the structure of the high-fidelity model, the computational cost
can be reduced dramatically.

Our future work would include two directions. One is to use advanced
sampling strategies. For example, instead of MC, the probabilistic collocation 
method is used for the bifidelity method 
in~\cite{narayan2014stochastic, zhu2014computational}, which can further reduce 
the computational cost. The other direction is to directly approximate
high-fidelity mean and covariance without generating approximated high-fidelity
realizations as in~\cite{zhu2017multi}. This approach will save the cost of the
lifting procedure because it only requires solving a much simpler linear system.

\appendix

\section{Adaptive sampling}
\label{sec:append_act}
In this work, 
we use a greedy algorithm to add additional observations, i.e., to add new 
observations at the maxima of $s(\bm x)$, 
e.g.,~\cite{forrester2008engineering, raissi2017machine}. Then, we can make a 
new prediction $\hat y(\bm x)$ for $\bm x\in D$ and compute a new 
$\hat s^2(\bm x)$ to select the next location for additional observation
(see Algorithm~\ref{algo:act}). This selection criterion is based on the 
statistical interpretation of the
interpolation. More sophisticated sensor placement algorithms can be found in
literature, e.g.,~\cite{jones1998efficient, krause2008near, garnett2010bayesian}, 
and PhIK/BiPhIK or CoPhIK/CoBiPhIK are complementary to these methods.
\begin{algorithm}[!h]
  \caption{Active learning based on GPR}
  \label{algo:act}
  \begin{algorithmic}[1]
    \State Specify the locations $\bm X$, corresponding observations $\bm y$, and 
    the maximum number of observations $N_{\max}$ affordable. The number of 
    available observations is denoted as $N$.
    \While {$N_{\max}>N$}
    \State Compute the MSE $\hat s^2(\bm x)$ of prediction $\hat y(\bm x)$ for 
    $\bm x\in D$.
    \State Locate the location $\bm x_m$ for the maximum of $\hat s^2(\bm x)$ for
    $\bm x\in D$. 
    \State Obtain observation $y_m$ at $\bm x_m$, and set 
    $\bm X = \{\bm X, \bm x_m\}, \bm y = (\bm y^\trans, y_m)^\trans, N=N+1$.
    \EndWhile
    \State Construct the prediction of $\hat y(\bm x)$ on $D$ using $\bm X$ 
    and $\bm y$.
  \end{algorithmic}
\end{algorithm}

\section{Proofs of Theorems~\ref{thm:diff_mean} and~\ref{thm:diff_var}}
\label{sec:append_proof}

We present the following three lemmas.
\begin{lem}
  \label{lem:upp1}
For $1\leq n\leq N$,
\begin{equation}
  \begin{aligned}
    \Vert k_H(\bm x, \bm x^{(n)})\Vert 
    \leq \sigma_H(\Gamma)\sigma_H(\bm x^{(n)}), \\
    \Vert k_B(\bm x, \bm x^{(n)})\Vert 
      \leq \sigma_B(\Gamma)\sigma_B(\bm x^{(n)}).
  \end{aligned}
\end{equation}
\end{lem}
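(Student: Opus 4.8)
The plan is to treat $k_H(\bm x, \bm x^{(n)})$, for a fixed observation location $\bm x^{(n)}$, as a function of $\bm x$ living in $\mathbb{V}_H$, and then bound its Hilbert-space norm by a single discrete Cauchy--Schwarz argument over the realization index $m$. First I would introduce the centered realizations $v_m(\bm x)=u_H^m(\bm x)-\mu_H(\bm x)$, so that from the definition of $k_H$ in Eq.~\eqref{eq:phik} we may write
\[
k_H(\bm x, \bm x^{(n)}) = \frac{1}{M-1}\sum_{m=1}^M v_m(\bm x^{(n)})\,v_m(\bm x),
\]
which is a finite linear combination of the elements $v_m(\cdot)\in\mathbb{V}_H$ whose scalar coefficients are the pointwise evaluations $v_m(\bm x^{(n)})$. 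The key observation is that fixing the second slot at $\bm x^{(n)}$ converts the bivariate kernel into exactly such a linear combination, with the two natural measures of variance waiting to be extracted as the two Cauchy--Schwarz factors.

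Next I would apply the triangle inequality together with homogeneity of the norm $\Vert\cdot\Vert$ to pull the norm inside the sum,
\[
\Vert k_H(\bm x, \bm x^{(n)})\Vert \leq \frac{1}{M-1}\sum_{m=1}^M |v_m(\bm x^{(n)})|\,\Vert v_m\Vert,
\]
and then invoke the discrete Cauchy--Schwarz inequality in the index $m$, which separates the bound into a factor depending only on the pointwise spread at $\bm x^{(n)}$ and a factor depending only on the aggregate Hilbert-space spread,
\[
\frac{1}{M-1}\left(\sum_{m=1}^M |v_m(\bm x^{(n)})|^2\right)^{\half}\left(\sum_{m=1}^M \Vert v_m\Vert^2\right)^{\half}.
\]
Reading off the definitions of $\sigma_H(\bm x)$ and $\sigma_H(\Gamma)$, the two sums equal $(M-1)\sigma_H^2(\bm x^{(n)})$ and $(M-1)\sigma_H^2(\Gamma)$ respectively, so the prefactor $1/(M-1)$ cancels the two resulting copies of $\sqrt{M-1}$ and leaves precisely $\sigma_H(\Gamma)\sigma_H(\bm x^{(n)})$. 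The estimate for $k_B$ follows verbatim after replacing the subscript $H$ by $B$ throughout, using the definitions of $\sigma_B(\bm x)$ and $\sigma_B(\Gamma)$.

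There is no genuine obstacle in this lemma; the only care required is bookkeeping. I must keep the scalar pointwise standard deviation $\sigma_H(\bm x^{(n)})$ (built from the absolute values $|\cdot|$ of scalar evaluations) cleanly distinct from the Hilbert-space aggregate $\sigma_H(\Gamma)$ (built from the norms $\Vert\cdot\Vert$ of the full fields), since it is exactly this distinction that lets the two Cauchy--Schwarz factors match the two objects on the right-hand side. I must also track the $1/(M-1)$ normalization so that it cancels correctly against the $\sqrt{M-1}$ factors that Cauchy--Schwarz produces, which is what makes the clean product bound appear with no residual constants.
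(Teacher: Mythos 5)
Your proof is correct and follows essentially the same route as the paper's: expand $k_H(\bm x,\bm x^{(n)})$ from Eq.~\eqref{eq:phik}, apply the triangle inequality to pull the $\mathbb{V}_H$-norm inside the sum over $m$, then use the discrete Cauchy--Schwarz inequality to split the bound into the pointwise factor $\sigma_H(\bm x^{(n)})$ and the aggregate factor $\sigma_H(\Gamma)$, with the $1/(M-1)$ absorbed into the two square roots. The only cosmetic difference is your introduction of the centered realizations $v_m$, which the paper writes out explicitly.
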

\begin{proof}
 According to Eq.~\eqref{eq:phik}, we have
 \[\begin{aligned} 
     \Vert k_H(\bm x, \bm x^{(n)})\Vert  
 = & \left\Vert\dfrac{1}{M-1} \sum_{m=1}^M \left(u^m_H(\bm x)-\mu_H(\bm x)\right) 
     \left(u^m_H(\bm x^{(n)})-\mu_H(\bm x^{(n)})\right) \right\Vert \\
 \leq & \dfrac{1}{M-1} \sum_{m=1}^M \left\Vert u^m_H(\bm x)-\mu_H(\bm x)\right\Vert
     \left\vert u^m_H(\bm x^{(n)})-\mu_H(\bm x^{(n)})\right\vert \\
 \leq & \dfrac{1}{M-1} \left(\sum_{m=1}^M 
        \left\Vert u^m_H(\bm x)-\mu_H(\bm x)\right\Vert^2\right)^{\half}
        \left(\sum_{m=1}^M\left\vert u^m_H(\bm x^{(n)})-\mu_H(\bm x^{(n)})\right\vert^2\right)^{\half} \\
 = &  \left(\dfrac{1}{M-1}\sum_{m=1}^M \left\Vert u^m_H(\bm x)-\mu_H(\bm x)\right\Vert^2\right)^{\half}
      \left(\dfrac{1}{M-1}\sum_{m=1}^M\left\vert u^m_H(\bm x^{(n)})-\mu_H(\bm x^{(n)})\right\vert^2\right)^{\half} \\ 
 = &  \sigma_H(\Gamma)\sigma_H(\bm x^{(n)}).
  \end{aligned}
  \]
Similarly, 
$\Vert k_B(\bm x, \bm x^{(n)})\Vert \leq \sigma_B(\Gamma)\sigma_B(\bm x^{(n)})$.
\end{proof}
\begin{lem}
  \label{lem:upp2}
For $1\leq n\leq N$,
\begin{equation}
  \left\Vert k_H(\bm x, \bm x^{(n)}) - k_B(\bm x, \bm x^{(n)}) \right\Vert \leq 
      \dfrac{2}{M-1} \sum_{m=1}^M
    \left(\delta_1 \left\vert u^m_B(\bm x^{(n)})-\mu_B^m(\bm x^{(n)})\right\vert 
        + \delta_2\left\Vert u_H^m(\bm x)-\mu_H(\bm x) \right\Vert \right).
\end{equation}
\end{lem}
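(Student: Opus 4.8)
The plan is to exploit the bilinear structure of the empirical covariance and reduce everything to two elementary difference estimates. Viewing each kernel as a function of its first argument $\bm x$, I introduce the centered functions $v_H^m(\bm x)=u_H^m(\bm x)-\mu_H(\bm x)$ and $v_B^m(\bm x)=u_B^m(\bm x)-\mu_B(\bm x)$ in $\mathbb{V}_H$, together with the scalar evaluations $a_H^m=u_H^m(\bm x^{(n)})-\mu_H(\bm x^{(n)})$ and $a_B^m=u_B^m(\bm x^{(n)})-\mu_B(\bm x^{(n)})$ (I read the $\mu_B^m$ in the statement as $\mu_B$). With this notation the definition in Eq.~\eqref{eq:phik} gives
\[
k_H(\bm x,\bm x^{(n)})-k_B(\bm x,\bm x^{(n)})=\frac{1}{M-1}\sum_{m=1}^M\left(a_H^m v_H^m(\bm x)-a_B^m v_B^m(\bm x)\right),
\]
so the task reduces to bounding each summand in the $\mathbb{V}_H$ norm.

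First I would split each summand by adding and subtracting the cross term $a_B^m v_H^m$, which is the decomposition matching the right-hand side:
\[
a_H^m v_H^m - a_B^m v_B^m = (a_H^m-a_B^m)\,v_H^m + a_B^m\,(v_H^m-v_B^m).
\]
Applying the triangle inequality and homogeneity of the norm yields
\[
\left\|a_H^m v_H^m - a_B^m v_B^m\right\|\leq |a_H^m-a_B^m|\,\|v_H^m\| + |a_B^m|\,\|v_H^m-v_B^m\|,
\]
which is exactly the pairing that produces the two terms $\delta_2\|v_H^m\|$ and $\delta_1|a_B^m|$, provided $|a_H^m-a_B^m|$ is controlled by $\delta_2$ and $\|v_H^m-v_B^m\|$ by $\delta_1$.

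The remaining step is those two auxiliary estimates, and the only real subtlety is the factor of $2$, which comes from the mean discrepancy. For the function term I write $v_H^m-v_B^m=(u_H^m-u_B^m)-(\mu_H-\mu_B)$; the first difference is bounded by $\delta_1$ by definition, and since $\mu_H-\mu_B=\frac{1}{M}\sum_{j}(u_H^j-u_B^j)$, the triangle inequality gives $\|\mu_H-\mu_B\|\leq\delta_1$ as well, so $\|v_H^m-v_B^m\|\leq 2\delta_1$. The scalar estimate is identical but in the pointwise sense underlying $\delta_2$: $|u_H^m(\bm x^{(n)})-u_B^m(\bm x^{(n)})|\leq\delta_2$ and likewise $|\mu_H(\bm x^{(n)})-\mu_B(\bm x^{(n)})|\leq\delta_2$, whence $|a_H^m-a_B^m|\leq 2\delta_2$. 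Substituting both bounds, summing over $m$, and reinstating the $\frac{1}{M-1}$ prefactor gives the claim. I do not expect a genuine obstacle; the one point to watch is keeping the two norms distinct — the Hilbert-space norm $\|\cdot\|$ acting in the variable $\bm x$ versus the scalar absolute value at the fixed location $\bm x^{(n)}$ — so that $\delta_1$ and $\delta_2$ end up paired with the correct factors.
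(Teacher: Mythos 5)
Your proof is correct and follows essentially the same route as the paper's: the identical decomposition obtained by adding and subtracting the cross term $a_B^m v_H^m$, followed by the same two mean-discrepancy estimates $\Vert \mu_H - \mu_B \Vert \leq \delta_1$ and $\vert \mu_H(\bm x^{(n)}) - \mu_B(\bm x^{(n)}) \vert \leq \delta_2$, which produce the factors of $2$. You also correctly identified that $\mu_B^m$ in the statement is a typo for $\mu_B$.
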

\begin{proof} For any $x\in D$,
\begin{equation}
  \label{eq:diff_mean}
  \left\Vert \mu_H(\bm x)-\mu_B(\bm x)\right\Vert
  =  \left\Vert \dfrac{1}{M}\sum_{m=1}^{M} u_H^m(\bm x)  - 
   \dfrac{1}{M}\sum_{m=1}^{M} u_B^m(\bm x) \right\Vert 
   \leq \dfrac{1}{M}\sum_{m=1}^{M}\left\Vert u_H^m(\bm x) - u_B^m(\bm x)
 \right\Vert\leq \delta_1,
 \end{equation}
 and
\begin{equation}
    \left\vert \mu_H(\bm x)-\mu_B(\bm x)\right\vert
     =  \left\vert \dfrac{1}{M}\sum_{m=1}^{M} u_H^m(\bm x)  - 
    \dfrac{1}{M}\sum_{m=1}^{M} u_B^m(\bm x) \right\vert 
     \leq \dfrac{1}{M}\sum_{m=1}^{M}\left\vert u_H^m(\bm x) - u_B^m(\bm x)
 \right\vert\leq \delta_2.
 \end{equation}
Therefore,
  \begin{equation*}
  \begin{aligned}
    & \left\Vert k_H(\bm x, \bm x^{(n)}) - k_B(\bm x, \bm x^{(n)}) \right\Vert \\
 = & \dfrac{1}{M-1}
\bigg\Vert \sum_{m=1}^M \left(u^m_H(\bm x)-\mu_H(\bm x)\right) 
     \left(u^m_H(\bm x^{(n)})-\mu_H(\bm x^{(n)})\right)  
      - \sum_{m=1}^M \left(u^m_B(\bm x)-\mu_B(\bm x)\right) 
     \left(u^m_B(\bm x^{(n)})-\mu_B(\bm x^{(n)})\right) \bigg\Vert \\
     \leq & \dfrac{1}{M-1} \sum_{m=1}^M\bigg\{
     \bigg\Vert \left(u^m_H(\bm x)-\mu_H(\bm x)\right) 
     \left(u^m_H(\bm x^{(n)})-\mu_H(\bm x^{(n)})\right)   
      - \left(u^m_H(\bm x)-\mu_H(\bm x)\right)
   \left(u^m_B(\bm x^{(n)})-\mu_B(\bm x^{(n)})\right) \bigg\Vert\\
    & \qquad\qquad~~ + \bigg\Vert \left(u^m_H(\bm x)-\mu_H(\bm x)\right) 
     \left(u^m_B(\bm x^{(n)})-\mu_B(\bm x^{(n)})\right)  
      - \left(u^m_B(\bm x)-\mu_B(\bm x)\right)
   \left(u^m_B(\bm x^{(n)})-\mu_B(\bm x^{(n)})\right) \bigg\Vert\bigg\} \\
     \leq & \dfrac{1}{M-1} \sum_{m=1}^M\bigg\{ 
     \left\Vert u^m_H(\bm x)-\mu_H(\bm x)\right\Vert
     \left\vert u^m_H(\bm x^{(n)})-\mu_H(\bm x^{(n)}) -
     u^m_B(\bm x^{(n)})+\mu_B(\bm x^{(n)}) \right\vert \\
  &  \qquad\qquad\quad + \left\vert u^m_B(\bm x^{(n)})-\mu_B(\bm x^{(n)})\right\vert 
      \left\Vert u^m_H(\bm x)-\mu_H(\bm x)-u^m_B(\bm x)+\mu_B(\bm x)\right\Vert \bigg\} \\
     \leq & \dfrac{1}{M-1} \sum_{m=1}^M\bigg\{ 
     2\delta_2\left\Vert u^m_H(\bm x)-\mu_H(\bm x)\right\Vert
   + 2\delta_1\left\vert u^m_B(\bm x^{(n)})-\mu_B(\bm x^{(n)})\right\vert \bigg\}.
  \end{aligned}
  \end{equation*}
\end{proof}
\begin{lem}
  \label{lem:upp3}
  For $1\leq i,j\leq N$,
\begin{equation}
  \left\vert k_H(\bm x^{(i)}, \bm x^{(j)}) - k_B(\bm x^{(i)}, \bm x^{(j)})\right\vert \leq 
  2\delta_2\sqrt{\dfrac{M}{M-1}}\left[\sigma_H(\bm x^{(i)})+\sigma_B(\bm x^{(j)})\right].
\end{equation}
\end{lem}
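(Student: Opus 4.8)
The plan is to mirror the structure of the proof of Lemma~\ref{lem:upp2}, now specialized to the case where both spatial arguments are observation locations, so that every quantity is a scalar in $\mathbb{R}$ and the Hilbert-space norm $\Vert\cdot\Vert$ is replaced throughout by the absolute value $\vert\cdot\vert$. Writing $a_m = u_H^m(\bm x^{(i)})-\mu_H(\bm x^{(i)})$, $b_m = u_H^m(\bm x^{(j)})-\mu_H(\bm x^{(j)})$ and the corresponding $a_m', b_m'$ for the bi-fidelity ensemble, the covariance entries are $k_H(\bm x^{(i)},\bm x^{(j)}) = \frac{1}{M-1}\sum_m a_m b_m$ and $k_B(\bm x^{(i)},\bm x^{(j)})=\frac{1}{M-1}\sum_m a_m' b_m'$. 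First I would insert the cross term $a_m b_m'$ via the bilinear identity $a_m b_m - a_m' b_m' = a_m(b_m - b_m') + (a_m - a_m')b_m'$, and apply the triangle inequality to split the estimate into two sums, one isolating the discrepancy in the $\bm x^{(j)}$-factor and one isolating the discrepancy in the $\bm x^{(i)}$-factor.

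Next I would control these factor differences. Since $\delta_2 = \sup_z\Vert u_H - u_B\Vert_\infty$ dominates the pointwise discrepancy at any observation node, $\vert u_H^m(\bm x^{(n)}) - u_B^m(\bm x^{(n)})\vert \leq \delta_2$; combining this with the mean estimate $\vert\mu_H(\bm x^{(n)}) - \mu_B(\bm x^{(n)})\vert \leq \delta_2$ already established within the proof of Lemma~\ref{lem:upp2}, the triangle inequality yields $\vert b_m - b_m'\vert \leq 2\delta_2$ and $\vert a_m - a_m'\vert \leq 2\delta_2$. This reduces the bound to $\frac{2\delta_2}{M-1}\big(\sum_m \vert a_m\vert + \sum_m \vert b_m'\vert\big)$.

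Finally I would apply the Cauchy--Schwarz inequality against the all-ones vector, $\sum_m \vert a_m\vert \leq \sqrt{M}\big(\sum_m \vert a_m\vert^2\big)^{\half} = \sqrt{M}\sqrt{M-1}\,\sigma_H(\bm x^{(i)})$, and likewise $\sum_m \vert b_m'\vert \leq \sqrt{M}\sqrt{M-1}\,\sigma_B(\bm x^{(j)})$, using the definitions of $\sigma_H$ and $\sigma_B$. Dividing by $M-1$ collapses the prefactor $\sqrt{M}\sqrt{M-1}/(M-1)$ into $\sqrt{M/(M-1)}$, producing exactly the claimed bound $2\delta_2\sqrt{M/(M-1)}\big[\sigma_H(\bm x^{(i)}) + \sigma_B(\bm x^{(j)})\big]$.

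There is no substantive difficulty here, since the argument is a scalar specialization of Lemma~\ref{lem:upp2}; the only points requiring care are bookkeeping ones. I must correctly accumulate the factor of two that arises from bounding both the sample discrepancy and the mean discrepancy by $\delta_2$, and I must match the $\sqrt{M}$ produced by Cauchy--Schwarz against the $\sqrt{M-1}$ normalization hidden in the definitions of $\sigma_H$ and $\sigma_B$, so as to recover the sharp constant $\sqrt{M/(M-1)}$ rather than a looser one. This lemma is presumably the scalar counterpart needed to bound the entrywise difference of the covariance matrices $\tensor C_H - \tensor C_B$ that enters the subsequent proofs of Theorems~\ref{thm:diff_mean} and~\ref{thm:diff_var}.
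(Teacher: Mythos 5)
Your proposal is correct and follows essentially the same route as the paper's proof: the same insertion of the cross term $a_m b_m'$ (the paper's mixed product of the high-fidelity factor at $\bm x^{(i)}$ with the bi-fidelity factor at $\bm x^{(j)}$), the same $2\delta_2$ bound on each factor difference via the pointwise and mean discrepancies, and the same Cauchy--Schwarz step recovering the constant $\sqrt{M/(M-1)}$. The bookkeeping you flag is handled identically in the paper, so there is nothing to add.
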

\begin{proof}
\begin{align*}
    & \left\vert k_H(\bm x^{(i)}, \bm x^{(j)}) - k_B(\bm x^{(i)}, \bm x^{(j)}) \right\vert \\
 = & \dfrac{1}{M-1}
    \bigg\vert \sum_{m=1}^M \left(u^m_H(\bm x^{(i)})-\mu_H(\bm x^{(i)})\right) 
     \left(u^m_H(\bm x^{(j)})-\mu_H(\bm x^{(j)})\right)  \\
     & \qquad\quad - \sum_{m=1}^M \left(u^m_B(\bm x^{(i)})-\mu_B(\bm x^{(i)})\right) 
     \left(u^m_B(\bm x^{(j)})-\mu_B(\bm x^{(j)})\right) \bigg\vert \\
     \leq & \dfrac{1}{M-1} \sum_{m=1}^M\bigg\{
     \bigg\vert \left(u^m_H(\bm x^{(i)})-\mu_H(\bm x^{(i)})\right) 
     \left(u^m_H(\bm x^{(j)})-\mu_H(\bm x^{(j)})\right)  \\ 
     & \qquad\qquad\qquad - \left(u^m_H(\bm x^{(i)})-\mu_H(\bm x^{(i)})\right)
   \left(u^m_B(\bm x^{(j)})-\mu_B(\bm x^{(j)})\right) \bigg\vert\\
   & \qquad\qquad~~ + \bigg\vert \left(u^m_H(\bm x^{(i)})-\mu_H(\bm x^{(i)})\right) 
     \left(u^m_B(\bm x^{(j)})-\mu_B(\bm x^{(j)})\right)  \\ 
     & \qquad\qquad\qquad - \left(u^m_B(\bm x^{(i)})-\mu_B(\bm x^{(i)})\right)
   \left(u^m_B(\bm x^{(j)})-\mu_B(\bm x^{(j)})\right) \bigg\vert\bigg\} \\
     \leq & \dfrac{1}{M-1} \sum_{m=1}^M\bigg\{ 
   \left\vert u^m_H(\bm x^{(i)})-\mu_H(\bm x^{(i)})\right\vert
     \left\vert u^m_H(\bm x^{(j)})-\mu_H(\bm x^{(j)}) -
     u^m_B(\bm x^{(j)})+\mu_B(\bm x^{(j)}) \right\vert \\
  &  \qquad\qquad~~ + \left\vert u^m_B(\bm x^{(j)})-\mu_B(\bm x^{(j)})\right\vert 
   \left\vert u^m_H(\bm x^{(i)})-\mu_H(\bm x^{(i)})-u^m_B(\bm x^{(i)})+\mu_B(\bm x^{(i)})\right\vert \bigg\} \\
     \leq & \dfrac{2\delta_2}{M-1} \sum_{m=1}^M\bigg\{ 
   \left\vert u^m_H(\bm x^{(i)})-\mu_H(\bm x^{(i)})\right\vert
   + \left\vert u^m_B(\bm x^{(j)})-\mu_B(\bm x^{(j)})\right\vert \bigg\} \\
   \leq &\dfrac{2\delta_2\sqrt{M}}{M-1} \left\{ \left(\sum_{m=1}^M\left\vert
 u^m_H(\bm x^{(i)})-\mu_H(\bm x^{(i)})\right\vert^2\right)^{\half}
 + \left(\sum_{m=1}^M\left\vert u^m_B(\bm x^{(j)})-\mu_B(\bm
 x^{(j)})\right\vert^2 \right)^{\half}\right\} \\
 = & 2\delta_2\sqrt{\dfrac{M}{M-1}}\left[\sigma_H(\bm x^{(i)})+\sigma_B(\bm x^{(j)})\right].
\end{align*}
\end{proof}
Now we prove Theorem~\ref{thm:diff_mean} as follows.
\begin{proof}
  Rewriting Eq.~\eqref{eq:krig_pred2} in the functional form, we have
\begin{equation}
  \label{eq:func_phikh}
\hat y_H(\bm x) = \mu_H(\bm x) + \sum_{n=1}^N a_n^H k_H(\bm x, \bm
x^{(n)}),
\end{equation}
  where $a_n^H$ is the $n$-th entry of the vector 
  $\tensor C_H^{-1}(\bm y-\bm\mu_H)$, 
  $(\tensor C_H)_{ij}=k_H(\bm x^{(i)}, \bm x^{(j)})$ and 
  $(\bm\mu_H)_i=\mu_H(\bm x^{(i)})$.  Similarly, we have
\begin{equation}
  \label{eq:func_phikb}
  \hat y_B(\bm x) = \mu_B(\bm x) + \sum_{n=1}^N a_n^B k_B(\bm x, \bm x^{(n)}),
\end{equation}
  where $a_n^B$ is the $n$-th entry of the vector 
  $\tensor C_B^{-1}(\bm y-\bm\mu_B)$, 
  $(\tensor C_B)_{ij}=k_B(\bm x^{(i)}, \bm x^{(j)})$ and 
  $(\bm\mu_B)_i=\mu_B(\bm x^{(i)})$. In Eq.~\eqref{eq:diff_mean}, we show that
  \[\left\Vert \mu_H(\bm x)-\mu_B(\bm x)\right\Vert \leq \delta_1.\]
  Next,
 \begin{equation*}
   \begin{aligned}
     &  \left\Vert a^H_n k_H(\bm x, \bm x^{(n)})
            - a^B_n k_B(\bm x, \bm x^{(n)})\right\Vert \\ 
   \leq &
   \left\Vert a^H_n k_H(\bm x, \bm x^{(n)})
            - a^B_n k_H(\bm x, \bm x^{(n)})\right\Vert + 
   \left\Vert a^B_n k_H(\bm x, \bm x^{(n)})
            - a^B_n k_B(\bm x, \bm x^{(n)})\right\Vert  \\
   \leq & \left\vert a^H_n-a^B_n \right\vert \left\Vert k_H(\bm x, \bm x^{(n)}) \right\Vert 
        + \left\vert a^B_n\right\vert 
        \left\Vert k_H(\bm x, \bm x^{(n)})-k_B(\bm x, \bm x^{(n)})\right\Vert \\
   \leq & \left\vert a^H_n-a^B_n \right\vert \sigma_H(\Gamma)\sigma_H(\bm x^{(n)}) 
         + \dfrac{2\left\vert a^B_n\right\vert}{M-1} \sum_{m=1}^M
    \left(\delta_1 \left\vert u^m_B(\bm x^{(n)})-\mu_B^m(\bm x^{(n)})\right\vert 
        + \delta_2\left\Vert u_H^m(\bm x)-\mu_H(\bm x) \right\Vert \right),
          \end{aligned}
 \end{equation*}
where the last inequality utilizes Lemmas~\ref{lem:upp1} and \ref{lem:upp2}.
Therefore,
\begin{equation*}
\begin{aligned}
  &  \left\Vert \sum_{n=1}^N a^H_n k_H(\bm x, \bm x^{(n)})
            - \sum_{n=1}^Na^B_n k_B(\bm x, \bm x^{(n)})\right\Vert \\ 
 \leq &\underbrace{\sum_{n=1}^N \left\vert a^H_n-a^B_n \right\vert 
                   \sigma_H(\Gamma) \sigma_H(\bm x^{(n)})}_{J_1} 
     + \underbrace{\dfrac{2\delta_1}{M-1}\sum_{n=1}^N \left\vert a_n^B\right\vert
       \sum_{m=1}^M \left\vert u^m_B(\bm x^{(n)})-\mu_B^m(\bm x^{(n)})\right\vert}_{J_2}\\
 & + \underbrace{\dfrac{2\delta_2}{M-1}\sum_{n=1}^N\left\vert a_n^B\right\vert
     \sum_{m=1}^M \left\Vert u^m_H(\bm x)-\mu_H^m(\bm x)\right\Vert}_{J_3}.
\end{aligned}
\end{equation*}
Here,
\[\begin{aligned}
  J_1 & =\sigma_H(\Gamma)\sum_{n=1}^N \left\vert a^H_n-a^B_n \right\vert \sigma_H(\bm x^{(n)})\\
      & \leq \sigma_H(\Gamma)\left(\sum_{n=1}^N\sigma^2_H(\bm x^{(n)})) \right)^{\half}
    \left(\sum_{n=1}^N \left\vert a^H_n-a^B_n \right\vert^2\right)^{\half} \\ 
   & = S_H\sigma_H(\Gamma)
\left\Vert\tensor C_H^{-1}(\bm y-\bm\mu_H) - \tensor C_B^{-1}(\bm y-\bm\mu_B) \right\Vert_2 \\
   & = S_H\sigma_H(\Gamma)
\left\Vert\tensor C_H^{-1}(\bm y-\bm\mu_H) - \tensor C_H^{-1}(\bm y-\bm\mu_B) + \tensor C_H^{-1}(\bm y-\bm\mu_B) - \tensor C_B^{-1}(\bm y-\bm\mu_B) \right\Vert_2 \\
      & \leq S_H\sigma_H(\Gamma)
\left( 
  \left\Vert\tensor C_H^{-1}\right\Vert_2\left\Vert \bm\mu_H-\bm\mu_B \right\Vert_2
+\left\Vert \tensor C_H^{-1} - \tensor C_B^{-1} \right\Vert_2\left\Vert \bm y-\bm\mu_B \right\Vert_2 \right) \\
      & \leq S_H\sigma_H(\Gamma)
  \left\{\left\Vert\tensor C_H^{-1}\right\Vert_2\sqrt{N}\delta_2 + \left\Vert\tensor C_H^{-1}\right\Vert_2^2\left\Vert\tensor C_H-\tensor C_B\right\Vert_2 
\left\Vert \bm y-\bm\mu_B \right\Vert_2 \right\} \\
      & \leq S_H\sigma_H(\Gamma)
  \left\{\left\Vert\tensor C_H^{-1}\right\Vert_2\sqrt{N}\delta_2 + \left\Vert\tensor C_H^{-1}\right\Vert_2^2\left\Vert\tensor C_H-\tensor C_B\right\Vert_F 
\left\Vert \bm y-\bm\mu_B \right\Vert_2 \right\}, 
\end{aligned} \]
where we uses the well-known matrix perturbation conclusion (e.g.,~\cite{demmel1992componentwise}):
\begin{equation}
  \label{eq:mat_perturb}
  \left\Vert (\tensor A+\Delta\tensor A)^{-1} - \tensor A^{-1} \right\Vert\leq
\left\Vert \tensor A^{-1}\right\Vert^2 \Vert \Delta\tensor A\Vert
\end{equation}
for invertible matrices $\tensor A$ and $\tensor A+\Delta\tensor A$, and a
well-defined matrix norm $\Vert\cdot\Vert$. Further, using Lemma~\ref{lem:upp3},
we have
\begin{equation}
  \label{eq:mat_perturb2}
  \begin{aligned}
    \left\Vert\tensor C_H-\tensor C_B \right\Vert^2_F = &\sum_{i=1}^N\sum_{j=1}^N 
  \left\vert k_H(\bm x^{(i)}, \bm x^{(j)})-k_B(\bm x^{(i)}, \bm x^{(j)})\right\vert^2 \\
  \leq &\sum_{i=1}^N\sum_{j=1}^N 
  \dfrac{4\delta_2^2 M}{M-1}\left[\sigma(u_H^m(\bm x^{(i)}))+\sigma(u_B^m(\bm x^{(j)}))\right]^2 \\
  \leq &\dfrac{8\delta_2^2 M}{M-1}\sum_{i=1}^N\sum_{j=1}^N 
  \left[\sigma^2(u_H^m(\bm x^{(i)}))+\sigma^2(u_B^m(\bm x^{(j)}))\right] \\
  = & \dfrac{8\delta_2^2 MN}{M-1}\sum_{n=1}^N 
  \left[\sigma^2(u_H^m(\bm x^{(n)}))+\sigma^2(u_B^m(\bm x^{(n)}))\right] \\
  = & \dfrac{8\delta_2^2 MN}{M-1}(S^2_H+S^2_B)
\end{aligned} 
\end{equation}
which yields
\begin{equation*}
J_1 \leq \sqrt{N}S_H\delta_2\sigma_H(\Gamma)\left\Vert\tensor C_H^{-1}\right\Vert_2
    \left\{ 2\sqrt{\dfrac{2 M}{M-1}}\left(S^2_H+S^2_B \right)^{\half} 
    \left\Vert\tensor C_H^{-1}\right\Vert_2\Vert\bm y-\bm\mu_B\Vert_2 +1\right\}.
\end{equation*}
Also,
\begin{equation*}
\begin{aligned}
  J_2\leq & 2\delta_1\sqrt{\dfrac{M}{M-1}}\sum_{n=1}^N\left\vert a_n^B\right\vert
  \sigma(u_B^m(\bm x^{(n)})) \\
  \leq & 2\delta_1\sqrt{\dfrac{MN}{M-1}}\left\Vert\tensor C_B^{-1}(\bm y-\bm\mu_B) \right\Vert_2
  \left(\sum_{n=1}^N \sigma^2(u_B^m(\bm x^{(n)})) \right)^{\half} \\
  \leq & 2\delta_1\sqrt{\dfrac{MN}{M-1}}\left\Vert\tensor C_B^{-1}\right\Vert_2
  \left\Vert \bm y-\bm\mu_B \right\Vert_2 S_B
\end{aligned}
\end{equation*}
Similarly,
\begin{equation*}
\begin{aligned}
  J_3\leq & 2\delta_2\sqrt{\dfrac{M}{M-1}}\sum_{n=1}^N\left\vert a_n^B\right\vert \sigma_H(\Gamma) \\
  \leq & 2\delta_2\sqrt{\dfrac{MN}{M-1}}\sigma_H(\Gamma)
  \left\Vert\tensor C_B^{-1}(\bm y-\bm\mu_B) \right\Vert_2 \\
\leq & 2\delta_2\sqrt{\dfrac{MN}{M-1}}\sigma_H(\Gamma)\left\Vert\tensor C_B^{-1}\right\Vert_2
  \left\Vert \bm y-\bm\mu_B \right\Vert_2.
\end{aligned}
\end{equation*}
Therefore, the conclusion holds.
\end{proof}
Next, we present the proof of Theorem~\ref{thm:diff_var}.
\begin{proof}
For any $\bm x^*\in D$, we use the following concise notations
\begin{equation}
\begin{aligned}
\bm c_H & =
\left(k_H(\bm x^{(1)},\bm x^*), \dotsc,k_H(\bm x^{(N)},\bm x^*)\right)^\trans, \\
\bm c_B & =
\left(k_B(\bm x^{(1)},\bm x^*), \dotsc,k_B(\bm x^{(N)},\bm x^*)\right)^\trans.
\end{aligned}
\end{equation}
Following the same procedure in the proof of Lemma~\ref{lem:upp3}, we have
\begin{equation*}
\left\vert k_H(\bm x^*, \bm x^*) - k_B(\bm x^*, \bm x^*) \right\vert 
  \leq 2\delta_2\sqrt{\dfrac{M}{M-1}} 
    \left[\sigma(u_H^m(\bm x^*))+\sigma(u_B^m(\bm x^*))\right] 
  \leq 2\delta_2\sqrt{\dfrac{2M}{M-1}} 
    (\Delta_H^2+\Delta_B^2)^{\half}.
\end{equation*}
According to Lemma~\ref{lem:upp3}, 
\begin{equation*}
  \begin{aligned}
  \Vert\bm c_H-\bm c_B \Vert_2 = & \left(\sum_{n=1}^N 
   \left\vert k_H(\bm x^{(n)}, \bm x^*) - k_B(\bm x^{(n)}, \bm
 x^*)\right\vert^2\right)^{\half} \\
 \leq & 2\delta_2\sqrt{\dfrac{M}{M-1}}\left(\sum_{n=1}^N 
\left[\sigma(u_H^m(\bm x^{(n)}))+\sigma(u_B^m(\bm x^*))\right]^2\right)^{\half}\\
\leq & 2\delta_2\sqrt{\dfrac{2M}{M-1}}\left(\sum_{n=1}^N 
\left[\sigma^2(u_H^m(\bm x^{(n)}))+\sigma^2(u_B^m(\bm x^*))\right]\right)^{\half} \\
= &2\delta_2\sqrt{\dfrac{2M}{M-1}} \left(S_H^2+S_B^2\right)^{\half}.
  \end{aligned}
\end{equation*}
Also,
\begin{equation}
  \begin{aligned}
    \Vert\bm c_H \Vert_2 = & \left\{\sum_{n=1}^N 
  \left\vert k_H(\bm x^{(n)}, \bm x^*)\right\vert^2\right\}^{\half} \\
  \leq & \Bigg\{\sum_{n=1}^N \left\vert\dfrac{1}{M-1}\sum_{m=1}^M
  \left(u^m_H(\bm x^{(n)})-\mu_H(\bm x^{(n)})\right) 
\left(u^m_H(\bm x^*)-\mu_H(\bm x^*)\right)\right\vert^2 \Bigg\}^{\half}\\
\leq & \Bigg\{\sum_{n=1}^N \left(\dfrac{1}{M-1}
   \sum_{m=1}^M \left(u^m_H(\bm x^{(n)})-\mu_H(\bm x^{(n)})\right)^2\right)\cdot
    \left(\dfrac{1}{M-1} \sum_{m=1}^M \left(u^m_H(\bm x^*)-\mu_H(\bm
 x^*)\right)^2\right)\Bigg\}^{\half} \\
 = &\sigma_H(\bm x^*) S_H.
  \end{aligned}
\end{equation}
Similarly,
\begin{equation}
  \Vert\bm c_B\Vert_2 \leq \sigma_B(\bm x^*) S_B.
\end{equation}
Thus,
\begin{equation}
\begin{aligned}
  & \left\vert \bm c_H^\trans\tensor C_H^{-1}\bm c_H 
- \bm c_B^\trans\tensor C_B^{-1}\bm c_B\right\vert \\
\leq & \left\vert \bm c_H^\trans\tensor C_H^{-1}\bm c_H 
- \bm c_H^\trans\tensor C_H^{-1}\bm c_B \right\vert+
\left\vert \bm c_H^\trans\tensor C_H^{-1}\bm c_B 
- \bm c_H^\trans\tensor C_B^{-1}\bm c_B \right\vert+
\left\vert \bm c_H^\trans\tensor C_B^{-1}\bm c_B 
- \bm c_B^\trans\tensor C_B^{-1}\bm c_B \right\vert \\
\leq & \underbrace{\Vert \bm c_H \Vert_2\Vert\tensor C_H^{-1}\Vert_2
                   \Vert\bm c_H-\bm c_B\Vert_2}_{J_1}
+ \underbrace{\Vert \bm c_H \Vert_2\Vert\tensor C_H^{-1}-\tensor C_B^{-1}\Vert_F
              \Vert\bm c_B\Vert_2}_{J_2}  
 + \underbrace{\Vert \bm c_B \Vert_2\Vert\tensor C_B^{-1}\Vert_2
                \Vert\bm c_H-\bm c_B\Vert_2}_{J_3},
  \end{aligned}
\end{equation}
and
\[\begin{aligned}
    J_1 & \leq 2\delta_2\sqrt{\dfrac{2M}{M-1}}\sigma_H(\bm x^*)S_H(S_B^2+S_H^2)^{\half}
  \Vert\tensor C_H^{-1}\Vert_2, \\
  J_2 & \leq 2\delta_2 \sqrt{\dfrac{2MN}{M-1}}\sigma_H(\bm x^*)\sigma_B(\bm x^*)
  S_HS_B(S_H^2+S_B^2)^{\half}\Vert\tensor C_H^{-1}\Vert^2_2, \\
  J_3 &\leq 2\delta_2\sqrt{\dfrac{2M}{M-1}}
\sigma_B(\bm x^*) S_B(S_B^2+S_H^2)^{\half}\Vert\tensor C_B^{-1}\Vert_2,
\end{aligned}\]
where in the inequality of $J_2$ we use Eqs.~\eqref{eq:mat_perturb} and
\eqref{eq:mat_perturb2}. Finally, using the fact $\sigma_H(\bm x^*)\leq\Delta_H$
and $\sigma_B(\bm x^*)\leq\Delta_B$, we have 
\begin{equation}
  S_H \leq \sqrt{N}\Delta_H,\quad S_B\leq \sqrt{N}\Delta_B,
\end{equation}
which indicates
\[\begin{aligned}
 J_1 & \leq 2N\delta_2\sqrt{\dfrac{2M}{M-1}}\Delta^2_H(\Delta_B^2+\Delta_H^2)^{\half}
  \Vert\tensor C_H^{-1}\Vert_2, \\
J_2 & \leq 2N\delta_2 \sqrt{\dfrac{2MN}{M-1}}\Delta^2_H\Delta^2_B
  (\Delta_H^2+\Delta_B^2)^{\half}\Vert\tensor C_H^{-1}\Vert^2_2, \\
J_3 &\leq 2N\delta_2\sqrt{\dfrac{2M}{M-1}}
\Delta^2_B (\Delta_B^2+\Delta_H^2)^{\half}\Vert\tensor C_B^{-1}\Vert_2.
\end{aligned}\]
Therefore, the conclusion holds.
\end{proof}

\section*{Acknowledgments}
Xueyu Zhu's work was supported by Simons Foundation.
Xiu Yang and Jing Li's work was supported by the U.S. Department of Energy 
(DOE), Office of Science, Office of Advanced Scientific Computing Research 
(ASCR) as part of Uncertainty Quantification in Advection-Diffusion-Reaction 
Systems. Pacific Northwest National Laboratory is operated by Battelle for the
DOE under Contract DE-AC05-76RL01830.

\bibliographystyle{plain}
\bibliography{ref}
\end{document}